  \newcommand{\sigmacor}{\sigma_{\mathrm{cor}}}
  \newcommand{\sigmacdp}{\sigma_{\mathrm{ind}}}
\newcommand{\algoname}{\textsc{CafCor}}
\providecommand{\lin}[1]{\ensuremath{\left\langle #1 \right\rangle}}
  \providecommand{\1}{\mathbf{1}}
\newcommand{\sigmadp}{\sigma_{\mathrm{DP}}}
\newcommand{\sigmadpbar}{\overline{\sigma}_{\mathrm{DP}}}
\newcommand{\sigmabar}{\overline{\sigma}}
\newcommand{\gcov}{G_{\mathrm{cov}}}
\def\D{\mathcal{D}}
\def\N{\mathcal{N}}
\def\R{\mathbb{R}}
\def\E{\mathbb{E}}
\def\M{\mathcal{M}}
\def\U{\mathcal{U}}
\def\P{\mathcal{P}}
\def\H{\mathcal{H}}
\def\X{\mathcal{X}}
\providecommand{\cA}{\mathcal{A}}
\providecommand{\cD}{\mathcal{D}}
\providecommand{\cH}{\mathcal{H}}
\providecommand{\cI}{\mathcal{I}}
\providecommand{\cN}{\mathcal{N}}
\providecommand{\cO}{\mathcal{O}}
\providecommand{\cS}{\mathcal{S}}
\providecommand{\cX}{\mathcal{X}}
\providecommand{\cY}{\mathcal{Y}}
\declaretheorem[name=Theorem,numberwithin=section]{theorem}
\declaretheorem[name=Lemma,numberwithin=section]{lemma}
\declaretheorem[name=Definition,numberwithin=section]{definition}
\newcommand{\expect}[1]{\mathop{{}\mathbb{E}}\left[{#1}\right]}
\newcommand{\condexpect}[2]{\mathbb{E}_{#1}\left[{#2}\right]}
\newcommand{\suchthat}{\ensuremath{~\middle|~}}
\newcommand{\knowing}{\suchthat{}}
\newcommand{\card}[1]{\left\lvert{#1}\right\rvert}
\newcommand{\absv}[1]{\card{#1}}
\newcommand{\norm}[1]{\left\lVert{#1}\right\rVert}
\newcommand{\indexvar}[3]{\ensuremath{{{#3}^{\ifthenelse{\equal{#1}{}}{}{\left({#1}\right)}}_{#2}}}}
\newcommand{\indexvarNoPar}[3]{\ensuremath{{{#3}^{\ifthenelse{\equal{#1}{}}{}{\left{#1}\right}}_{#2}}}}
\newcommand{\params}[2]{\indexvarNoPar{#1}{#2}{\theta}}
\providecommand{\iprod}[2]{\ensuremath{\left\langle #1,\,#2  \right\rangle}}
\providecommand{\mnorm}[1]{\ensuremath{\left\lvert#1\right\rvert}}
\providecommand{\norm}[1]{\ensuremath{\left\lVert#1\right\rVert }}
\newcommand{\loss}{\mathcal{L}}
\newcommand{\weight}[1]{\params{}{#1}}
\newcommand{\proba}[2]{\ensuremath{\text{P}\!\left({#1}\ifthenelse{\equal{#2}{}}{}{\knowing{}{#2}}\right)}}
\DeclareMathOperator*{\argmax}{argmax}
\renewcommand{\paragraph}[1]{\textbf{#1}~}
\newcommand{\drift}[1]{\epsilon_{#1}}
\newcommand{\dev}[1]{\delta_{#1}}
\newcommand{\mmt}[2]{m^{(#1)}_{#2}}
\newcommand{\AvgMmt}[1]{\overline{m}_{#1}}
\theoremstyle{plain}
\newtheorem{assumption}[theorem]{Assumption}
\theoremstyle{remark}
\icmltitlerunning{Towards Trustworthy Federated Learning with Untrusted Participants}
\begin{document}

\twocolumn[
\icmltitle{Towards Trustworthy Federated Learning with Untrusted Participants}

\begin{icmlauthorlist}
\icmlauthor{Youssef Allouah}{yyy}
\icmlauthor{Rachid Guerraoui}{yyy}
\icmlauthor{John Stephan}{yyy}
\end{icmlauthorlist}

\icmlaffiliation{yyy}{EPFL, Switzerland. Alphabetical order}
\icmlcorrespondingauthor{Youssef Allouah}{youssef.allouah@epfl.ch}
\icmlcorrespondingauthor{John Stephan}{john.stephan@epfl.ch}

\icmlkeywords{Differential Privacy, Robustness, Distributed Optimization, Federated Learning}

\vskip 0.3in
]

\printAffiliationsAndNotice{}  %

\begin{abstract}

Resilience against malicious participants and data privacy are essential for trustworthy federated learning, yet achieving both with good utility typically requires the strong assumption of a trusted central server. This paper shows that a significantly weaker assumption suffices: each pair of participants shares a randomness seed unknown to others.  
In a setting where malicious participants may collude with an \textit{untrusted} server, we propose \textsc{CafCor}, an algorithm that integrates robust gradient aggregation with correlated noise injection, using shared randomness between participants.
We prove that \textsc{CafCor} achieves strong privacy-utility trade-offs, significantly outperforming local differential privacy (DP) methods, which do not make any trust assumption, while approaching central DP utility, where the server is fully trusted.  
Empirical results on standard benchmarks validate \textsc{CafCor}'s practicality, showing that privacy and robustness can coexist in distributed systems without sacrificing utility or trusting the server.

\end{abstract}

\section{Introduction}
\label{sec:intro}
The increasing complexity of machine learning (ML) models and the vast amounts of data required for training have driven the widespread adoption of distributed ML~\citep{DistributedNetworks2012,kairouz2021advances}. By distributing the computational workload across multiple machines, this approach enables scalable and efficient model training. In the standard \emph{server-based} architecture, multiple workers collaboratively optimize a shared model under the coordination of a central server. This process is typically implemented  using distributed stochastic gradient descent (DSGD)~\cite{bertsekas2015parallel}, where workers compute gradients on local mini-batches and transmit them to the server, which then aggregates the updates to refine the global model. DSGD is especially valuable when privacy constraints prohibit data sharing, such as in healthcare applications where institutions hold sensitive patient records~\cite{kaissis2020secure}.

While DSGD inherently reduces data exposure by keeping local datasets private, significant privacy risks remain. 
If the trained model is publicly accessible, it becomes vulnerable to membership inference~\cite{shokri} and model inversion~\cite{Fredrikson2015ModelInversion,Hitaj2017PrivacyLeakageFromCollLearning,MelisSCS19GradientLeakage} attacks. Moreover, a \emph{curious} server can analyze gradients and intermediate model states during training to extract sensitive information about individual data points~\citep{PhongPPDeepLearning2017,WangUserLevelPrivacyLeakage2019,DLG}.
Besides, in practical distributed environments, it is common for some workers to behave unpredictably due to system failures, corrupted data, or adversarial interference.
These workers, also known as \textit{Byzantine}~\cite{lamport82}, can arbitrarily manipulate their updates, compromising the integrity of the learning process~\cite{baruch2019alittle, xie2020fall}. In DSGD, even a few malicious workers can significantly degrade model performance by poisoning gradients or models~\cite{feng2015distributed}. This vulnerability underscores the need for robust aggregation techniques to mitigate the influence of malicious workers.

\paragraph{Threat models.}
Differential privacy (DP)~\cite{choudhury2019differential,hu2020personalized,noble2022differentially} and robustness~\cite{yin2018byzantine,Karimireddy2021,allouah2023robust} have been extensively studied in isolation in distributed machine learning.
However, their intersection, which is crucial for building trustworthy learning systems, remains underexplored.  
Recent work shows that combining local DP (LDP), which assumes no trust, with robustness incurs a fundamental utility cost~\cite{allouah2023privacy}. This reinforces earlier findings that ensuring privacy under LDP, even without malicious workers, is  prohibitive in terms of utility~\cite{duchi2013local}.  
In contrast, the central DP (CDP) model, where the server is fully trusted, enables significantly better utility than LDP, particularly as the number of workers increases~\cite{liu2021robust,hopkins2022efficient}. These findings highlight the trade-offs between trust, privacy, and robustness in distributed learning, underscoring the need for alternative approaches that mitigate these limitations.

\subsection{Contributions}
\label{sec:contribution}
In this work, we analyze an intermediate threat model that extends the privacy guarantees of LDP while achieving utility comparable to CDP, which assumes a trusted server. This model, referred to as \emph{secret-based local differential privacy} (SecLDP), was only studied in non-adversarial settings  focused on privacy~\cite{allouah2024privacy} or without utility guarantees in the presence of malicious workers~\cite{sabater2022accurate}.
We present the first analysis of SecLDP in adversarial distributed learning, considering an untrusted server and malicious workers, who aim to disrupt the learning as well as to compromise the privacy of honest workers by colluding with the server.
Our results establish SecLDP as a viable privacy model for trustworthy distributed learning, bridging the gap between the stringent trust assumptions of LDP and the stronger utility guarantees of CDP.

\textbf{Algorithm.}
We introduce \algoname{}, a privacy-preserving and robust distributed learning algorithm under the SecLDP model.
\algoname{} employs a correlated noise injection mechanism inspired by secret sharing~\cite{shamir1979share}, leveraging workers' access to shared randomness.
This is efficiently achieved through a one-time pairwise agreement on secret randomness seeds using standard public-key infrastructure, as commonly done in secure distributed systems~\cite{bonawitz2017practical}.
However, the added noise can obscure honest contributions, reducing the effectiveness of existing aggregations in filtering out malicious workers.
To address this, we design a novel robust aggregation method that efficiently mitigates the impact of dimensionality on the mean estimation error, relying only on an upper bound on the number of corrupt workers.

\textbf{Bridging the gap.}
We provide theoretical guarantees on the privacy-utility trade-off of \algoname{} under SecLDP, assuming an untrusted server and colluding malicious workers. Our analysis shows that \algoname{} achieves near-CDP performance under limited  corruption, significantly outperforming LDP-based methods.  
When shared randomness is unavailable, \algoname{} seamlessly reverts to the standard LDP model while maintaining state-of-the-art performance.
Extensive experiments on benchmark datasets (MNIST and Fashion-MNIST) validate our findings, confirming \algoname{}'s superior performance in adversarial settings.

\subsection{Related Work}
Most works combining DP and robustness focus on local privacy models, often only achieving per-step privacy guarantees and suboptimal learning performance~\citep{podc, zhu22bridging, xiang2022beta, ma2022differentially}. Others reveal a three-way trade-off between privacy, robustness, and utility for discrete distribution estimation in non-interactive settings~\citep{cheu2021manipulation, acharya2021robust} and more general interactive learning tasks~\citep{allouah2023privacy}. In the CDP model where the server is fully trusted, the privacy-utility trade-off is far superior to the LDP setting~\citep{duchi2013local, duchi2018minimax}. Prior work on centralized robust mean estimation with DP~\citep{liu2021robust, hopkins2022efficient, alabi2023privately} achieves near-optimal error bounds, disentangling the three-way trade-off by matching the optimal sample complexity in both private and robust settings. Empirical studies confirm this advantage in standard federated learning tasks~\citep{xie2023unraveling}.

Intermediate threat models between local and central DP, such as those leveraging cryptographic primitives or anonymization via shuffling~\citep{jayaraman2018distributed, erlingsson2019amplification, cheu2019distributed}, can approach the privacy-utility trade-off of central DP but often incur high computational costs, rely on trusted entities, or lack robustness.
Correlated noise schemes have also been explored for privacy without a trusted server; early methods like secret sharing~\citep{shamir1979share} ensure input security but do not privatize the output, leaving it vulnerable to inference attacks~\citep{MelisSCS19GradientLeakage}.
More refined approaches~\citep{imtiaz2019correlated} combine correlated and uncorrelated Gaussian noise to protect averages, while decentralized extensions remain limited to averaging tasks~\citep{sabater2022accurate} and lack robustness~\cite{allouah2024privacy}. We also note that \cite{sabater2022accurate} consider malicious workers but do not provide utility guarantees in their presence, and only use cryptographic checks, e.g., to verify boundedness of all messages.
On the other hand, they consider sparse topologies which induces lower communication costs, unlike our quadratic communication complexity for seed exchange, which we recall is negligible since it happens once before training and only includes integers, compared to the large-scale models exchanged iteratively during training.
Recently, \citet{gu2023dp} considered malicious workers and an untrusted server but relied on computationally intensive tools like zero-knowledge proofs and secure aggregation, whereas our method avoids these complexities. %
Moreover, they do not quantify the utility cost of their solution.

Cryptographic primitives such as homomorphic encryption and secure multi-party computation have been widely used to enhance privacy in distributed learning~\cite{bonawitz2017practical, batchcrypt}, offering efficient implementations.
Some works have explored integrating these techniques with robustness~\cite{gibbs2017prio, choffrut2024towards}, but they face limitations.
While cryptographic schemes efficiently support basic operations like averaging, they often incur significant computations when coupled with robust aggregation due to the non-linearity of key operations such as median and nearest neighbors.
This limits their practicality in robust distributed learning.

\section{Problem Statement}
\label{sec:problem}
We consider the classical server-based architecture comprising $n$ workers and a central server.
The workers hold local datasets $\D_1, \dots,\D_n$, each composed of $m$ data points from an input space $\X$, i.e., $ \D_i \in \X^m, i \in [n]$.
For a given parameter vector $\theta \in \R^d$, a data point $x \in \X$ has a real-valued loss function $\ell(\theta; x)$. The empirical loss function for each $i \in [n]$ is defined for all $\theta \in \R^d$ as 
$\loss_i{(\theta)} \coloneqq \frac{1}{m}\sum_{x \in \D_i} \ell{(\theta;x)}$.
The goal of the server is to minimize the global empirical loss function defined for all $\theta \in \R^d$ as 
    $\loss{(\theta)} \coloneqq \frac{1}{n} \sum_{i = 1}^n \loss_i{(\theta)}$.
We assume that each loss $\loss_i$ is differentiable, and that $\loss$ is lower bounded.

\subsection{Robustness}
We consider a setting where at most $f$ out of $n$ workers may be malicious. Such workers may send arbitrary messages to the server, and need not follow the prescribed protocol. The identity of malicious workers is a priori unknown to the server. Let $\H \subseteq \{1, \ldots, \, n\}$, with $\card{\H} = n-f$. We define for all $\theta \in \R^d$, 
$\loss_{\H}{(\theta{})} \coloneqq \frac{1}{\absv{\H}} \sum_{i \in \H} \loss_i(\theta{})$.
If $\H$ represents the indices of honest workers, $\loss_{\H}$ is referred to as the global {\em honest loss}. An algorithm is robust if it enables the server to approximately minimize the global honest loss, as follows.

\begin{definition}[$(f, \varrho)$-Robustness]
\label{def:resilience}
A \emph{distributed} algorithm $\cA$ is {\em $(f, \varrho)$-robust} 
if
it outputs $\hat{\theta}$ 
such that
    $\expect{\loss_{\H}{(\hat{\theta})}-\loss_\star} \leq \varrho,$
where $\loss_\star \coloneqq \inf_{\theta \in \R^d} \loss_\H(\theta)$, and the expectation is over the randomness of the algorithm. 
\end{definition}
In other words, Algorithm $\cA$ is $(f, \varrho)$-robust if it outputs in every execution a {\em $\varrho$-approximate} minimizer of the global honest loss function $\loss_\H$, despite $f$ malicious workers.

\subsection{Differential Privacy}
Each honest worker $i \in \cH$ aims to protect the privacy of their dataset $\D_i$ against the \emph{honest-but-curious} (or untrusted) server.
That is, the server follows the prescribed protocol, but may attempt to violate the workers' data privacy. %
Also, we assume that every pair of workers $i, j \in [n]$ shares a sequence of \emph{secrets} $S_{ij}$,
which represent observations of random variables only known to the workers $i,j$.
In practice, these are generated locally via shared randomness seeds exchanged after one round of encrypted communications~\cite{bonawitz2017practical}, and conceptually one can consider the secrets to be the shared randomness seeds only.
We denote by $\cS \coloneqq \left\{S_{ij} \colon i,j \in [n]\right\}$ the set of all secrets. 

We recall that local DP (LDP)~\cite{kasiviswanathan2011can} protects against an untrusted server without assuming the existence of secrets, at the price of a poor privacy-utility trade-off~\cite{duchi2013local}.
Instead, to achieve a better trade-off, we consider the following relaxation of LDP into \emph{secret-based local differential privacy} (SecLDP), adapting the formalism of~\citet{allouah2024privacy}, where two datasets are adjacent if they differ by one worker's local dataset.

\begin{definition}[SecLDP]
    \label{def:ldp}
    Let $\varepsilon \geq 0$, $\delta \in [0, 1]$. Consider a randomized distributed algorithm $\cA_{\cS} : \cX^{m \times n} \to \cY$, which outputs the transcript of all communications, given the full set of secrets $\cS$. Algorithm $\cA_{\cS}$ satisfies $(\varepsilon,\delta, \mathcal{S}_{\text{known}})$-SecLDP if it satisfies $(\varepsilon, \delta)$-DP given that the subset of secrets $\mathcal{S}_{\text{known}} \subseteq \cS$ is revealed to the curious server. That is, for all adjacent datasets $\cD, \cD' \in \cX^{m \times n}$,
    \begin{equation*}
        \mathbb{P} \left[\cA_{\cS}{(\cD)} ~\middle|~ \cS_{\text{known}}\right] \leq e^{\varepsilon} \cdot \mathbb{P} \left[ \cA_{\cS}{(\cD')} ~\middle|~ \cS_{\text{known}} \right] + \delta.
    \end{equation*}
    We simply say that $\cA_\cS$ satisfies $(\varepsilon,\delta)$-SecLDP when $\mathcal{S}_{\text{known}}$ is clear from the context.
\end{definition}

The above definition captures multiple levels of trust, always considering the server to be honest-but-curious. 
First, the least stringent setting is when no worker colludes  with the server, meaning that no secrets are disclosed to the server $(\cS_{\text{known}} = \varnothing)$. 
Second, the most adversarial scenario arises when all workers collude with the server $(\cS_{\text{known}} = \cS)$, which corresponds to the standard LDP threat model.
Finally, we consider an intermediate scenario where only malicious workers may collude with the server to compromise the privacy of honest workers by disclosing the secrets they share with them $(\cS_{\text{known}} = \{ S_{ij} : i \in [n] \setminus \cH, j \in [n] \})$.
However, even under this model, the server remains unaware of the malicious identities.
A malicious worker may collude by revealing its secrets but has no incentive to disclose its adversarial nature, as it also aims to disrupt the learning.

\subsection{Assumptions}
Our results are derived under standard assumptions.  
Data heterogeneity is modeled via Assumption~\ref{asp:hetero}~\cite{allouah2023privacy}, which subsumes standard assumptions in federated learning~\cite{karimireddy2020scaffold} by bounding gradient variance in all directions.  
To establish the convergence guarantees of \algoname{}, we adopt the standard Assumption~\ref{asp:bnd_var} on the variance of stochastic gradients.  
Assumption~\ref{asp:bnd_norm} ensures bounded gradients at any data point, a common requirement in differentially private optimization to avoid complications due to clipping~\cite{agarwal2018cpsgd}.

\begin{assumption}[Bounded heterogeneity]
\label{asp:hetero}
There exists $\gcov>0$ such that $\forall \theta{} \in \mathbb{R}^d$,
\vspace{-2mm}
\begin{align*}
    \frac{1}{\card{\H}} \sum_{i \in \H}\lin{v, \nabla \loss_i{\left(\theta{}\right)} - \nabla \loss_{\H}(\theta{})}^2 \leq \gcov^2.
\end{align*}
\end{assumption}

\begin{assumption}[Bounded variance]
\label{asp:bnd_var}
There exists $\sigma>0$ such that for each honest worker $w_i, i \in \H$, and all $\theta{} \in \R^d$,
\vspace{-3mm}
\begin{equation*}
    \frac{1}{m}\sum_{x \in \D_i}
    \norm{ \nabla{\ell{(\theta;x)}} - \nabla \loss_i{\left(\theta{}\right)}}^2
    \leq \sigma^2.
\end{equation*}
\end{assumption}

\begin{assumption}[Bounded gradients]
\label{asp:bnd_norm}
There exists $C>0$ such that $\forall \theta \in \mathbb{R}^d$, $i \in \H$, and $x \in \D_i$, $\norm{\nabla \ell(\theta{}; x)} \leq C$.
\end{assumption}
\section{Algorithm: \algoname{}}
\label{sec:algo_describe}
We now introduce \algoname{}, summarized in Algorithm~\ref{algo:robust-dpsgd}. \algoname{} extends distributed stochastic gradient descent (DSGD) by incorporating a correlated noise scheme and momentum-based updates at the worker level, and a novel robust aggregation scheme, called CAF, at the server level.

The server sets an initial model $\theta_0$ and broadcasts $\theta_{t}$ to workers at each iteration $t \geq 0$.
Each honest worker $i \in \H$ then randomly samples a mini-batch $S_t^{(i)}$ of size $b \leq m$ from their local dataset $\D_i$ without replacement.
Worker $i \in \cH$ then computes per-example gradients $\nabla \ell{\left( \theta_{t}; x \right)}$ for $x \in S_t^{(i)}$, averages them, and then applies gradient clipping to bound the  sensitivity, ensuring the gradient norm does not exceed $C$:
$g_t^{(i)} =  \mathrm{Clip}\big( \frac{1}{b} \sum_{x \in S_t^{(i)}} \nabla \ell{\left( \theta_{t}; x \right)} ; C \big),$
where
\vspace{-2mm}
\begin{align}
    \mathrm{Clip}(g; C) \coloneqq g \cdot \min \left\{ 1, C/\norm{g} \right\}.
    \label{eq_clip}
\end{align}

\begin{algorithm}[t!]
\caption{\algoname{}}
\label{algo:robust-dpsgd}
\textbf{Input:} Initial model $\theta_0$; DP noise levels $\sigmacdp, \sigmacor$; batch size $b$; clipping threshold $C$; learning rates $\{\gamma_t\}$; momentum coefficients $\{\beta_t\}$; number of iterations $T$.

\begin{algorithmic}[1]
\FOR{$t=0 \dots T-1$}
\STATE \underline{Server} broadcasts $\theta_{t}$ to all workers.
\FOR{\textbf{\textup{every}} \underline{honest worker} $i \in \H$ \textbf{\textup{in parallel}}}
\STATE Sample a mini-batch $S_t^{(i)}$ of size $b$ at random from $\D_i$ without replacement.
\STATE Average the mini-batch gradients and clip:
  $g_t^{(i)} =  \mathrm{Clip}\big( \frac{1}{b} \sum_{x \in S_t^{(i)}} \nabla \ell{\left( \theta_{t}; x \right)} ; C \big)$~(see~\eqref{eq_clip}). %
\STATE Inject DP noises to the gradient:
$\tilde{g}_{t}^{(i)} = g_t^{(i)} + \overline{v}^{(i)}_t + \sum_{\substack{j \in [n] \setminus {i}}} v^{(ij)}_t,$
where $\overline{v}^{(i)}_t \sim \cN(0, \sigmacdp^2 I_d)$ and $v^{(ij)}_t = -v^{(ji)}_t \sim \cN(0, \sigmacor^2 I_d)$.
\STATE Send
 $m_{t}^{(i)}= \beta_{t-1} m_{t-1}^{(i)} + (1-\beta_{t-1}) \tilde{g}_t^{(i)}$ if $t \geq 1$, else $m_0^{(i)} = 0$.
\ENDFOR
\STATE \underline{Server} aggregates: $R_t =$ CAF${(m_{t}^{(1)}, \dots,m_{t}^{(n)})}$, where the CAF aggregation is given by Algorithm~\ref{algo:filter_adapt}.
\STATE \underline{Server} updates the model:
$\theta_{t+1} = \theta_{t} - \gamma_t R_{t}$.
\ENDFOR
\STATE \textbf{return} $\hat{\theta}$ uniformly sampled from $\{\theta_0,\dots,\theta_{T-1}\}$.
\end{algorithmic}
\end{algorithm}
\vspace{-5mm}
Each honest worker $i \in \cH$ perturbs $g_t^{(i)}$ with noise to obtain
$\tilde{g}_{t}^{(i)} = g_t^{(i)} + \sum_{\substack{j \in [n] \setminus {i}}} v^{(ij)}_t + \overline{v}^{(i)}_t$,
where $\overline{v}^{(i)}_{t} \sim \cN(0, \sigmacdp^2 I_d)$ is an independent Gaussian noise term and $\{v^{(ij)}_{t}\}_{j \in [n] \setminus {i}}$ are pairwise-canceling correlated Gaussian noise terms, i.e.,  $v^{(ij)}_{t} = -v^{(ji)}_{t} \sim \cN(0, \sigmacor^2 I_d)$.
These noise terms are generated through a one-time pairwise agreement protocol executed before \algoname{}. This is efficiently achieved using a public-key infrastructure, where workers establish shared randomness seeds in a single encrypted communication round~\cite{bonawitz2017practical}.
To ensure the correlated noise terms are pairwise-canceling, workers with smaller indices add the generated noise, while their paired counterparts subtract it.

Next, each honest worker $i \in \cH$ updates their local momentum for $t \geq 1$: $m_t^{(i)} = \beta_{t-1} m_{t-1}^{(i)} + (1-\beta_{t-1}) \tilde{g}_t^{(i)},$
with $m_0^{(i)} = 0$ and $\beta_{t-1} \in [0, 1]$ is the momentum coefficient, and then sends it to the server.
If worker $i$ is malicious, i.e., not honest, then they may send an arbitrary value for their momentum $m_t^{(i)}$.
The server aggregates the momentums using the CAF aggregation, detailed in Algorithm~\ref{algo:filter_adapt}, to obtain $R_t = \text{CAF}(m_t^{(1)}, \ldots, m_t^{(n)})$. It then updates the model:
$\theta_{t+1} = \theta_{t} - \gamma_t R_t,$
where $\gamma_t \geq 0$ is the learning rate. After $T$ iterations, the server outputs $\hat{\theta}$ sampled uniformly from $\{\theta_0,\ldots, \theta_{T-1} \}$.

\begin{algorithm}[t!]
\caption{CAF: Covariance bound-Agnostic Filter}
\textbf{Input:} vectors $x_1, \ldots, x_n \in \R^d$;
bound on number of corrupt inputs $0 \leq f < \tfrac{n}{2}$.
\begin{algorithmic}[1]
\label{algo:filter_adapt}
\STATE Initialize $c_1, \ldots, c_n = 1, \mu_\star = \tfrac{1}{n} \sum_{i=1}^n x_i,  \sigma_\star = \infty$.
\WHILE{$\sum_{i=1}^n c_i \geq n-2f$}
\STATE Compute $\hat{\mu}_c = \sum_{i=1}^n c_i x_i/\sum_{i=1}^n c_i$.
\STATE Compute empirical covariance 
$\hat{\Sigma}_c = \sum_{i=1}^n c_i (x_i-\hat{\mu}_c)(x_i-\hat{\mu}_c)^\top/\sum_{i=1}^n c_i$.
\STATE Compute maximum eigenvalue $\hat{\sigma}_c^2$ of $\hat{\Sigma}_c$ and an associated eigenvector $\hat{v}_c$.
\IF{$\hat{\sigma}_c \leq \sigma_\star$}
\STATE Update best guess: $\mu_\star \gets \hat{\mu}_c, \sigma_\star \gets \hat{\sigma}_c.$
\ENDIF
\STATE Compute $\tau_i = \iprod{\hat{v}_c}{x_i-\hat{\mu}_c}^2$, $\forall i \in [n]$.
\STATE Update weight $c_i \gets c_i (1- \frac{\tau_i}{\max_{j \in [n]} \tau_j})$, $\forall i \in [n]$.
\ENDWHILE
\STATE \textbf{return} $\mu_\star$
\end{algorithmic}
\end{algorithm}

\textbf{Covariance-bound Agnostic Filter (CAF).}\footnote{Named for being agnostic to any bound on the honest input covariance, as discussed in Section~\ref{sec:conv_algo}.}
The CAF aggregation, outlined in Algorithm~\ref{algo:filter_adapt}, aggregates $d$-dimensional vectors $x_1, \ldots, x_n \in \R^d$ given only an upper bound $0 \leq f < \tfrac{n}{2}$ on the number of corrupt inputs.
CAF operates without requiring any additional assumptions on the distribution of honest inputs.
The algorithm iteratively decreases the weights $c_i \in [0, 1]$ of input vectors, each initially assigned a unit weight, until the total sum of weights falls below $n - 2f$. Weight reductions are determined based on each vector’s contribution to variance along the worst-case direction. Specifically, let $\hat{\mu}_c = \sum_{i=1}^n c_i x_i / \sum_{i=1}^n c_i$ denote the current weighted mean and let $\hat{\Sigma}_c = \sum_{i=1}^n c_i (x_i - \hat{\mu}_c)(x_i - \hat{\mu}_c)^\top / \sum_{i=1}^n c_i$ denote the corresponding empirical covariance matrix. The largest eigenvalue of $\hat{\Sigma}_c$ is denoted $\hat{\sigma}_c^2$, with associated eigenvector $\hat{v}_c \in \R^d$. 
Each vector’s weight is reduced in proportion to its influence on variance along the worst direction, quantified by $\tau_i = \iprod{\hat{v}_c}{x_i - \hat{\mu}_c}^2$.
CAF tracks the weighted average corresponding to the covariance matrix with the smallest top eigenvalue encountered across all iterations.
Once the sum of weights falls below $n - 2f$, CAF outputs this tracked weighted average as its final result.

\section{Theoretical Analysis}
We now present the theoretical guarantees of \algoname{} in terms of privacy and utility, as well as their trade-off.
\vspace{-2mm}

\subsection{Privacy Analysis}
\label{sec:privacy_algo}

Theorem~\ref{th:privacy-general} establishes the privacy guarantee of \algoname{} after $T$ iterations, with the proof deferred to Appendix~\ref{app:privacy}. As discussed in Section~\ref{sec:conv_algo}, the noise magnitudes $\sigmacdp^2$ and $\sigmacor^2$ are key factors in the privacy-utility trade-off. Our objective is to minimize these terms while ensuring privacy.

\begin{restatable}{theorem}{privacyfull}
\label{th:privacy-general}
Let $\delta \in (0,1), \varepsilon \in (0, \log(1/\delta)), q\leq f<\tfrac{n}{2}$.
Algorithm~\ref{algo:robust-dpsgd} satisfies $(\varepsilon, \delta)$-SecLDP against an honest-but-curious server colluding with $q$ malicious workers, if
\begin{align*}
  \frac{(n-q) \sigmacor^2 + \sigmacdp^2}{1 + \frac{\sigmacor^2}{(f-q)\sigmacor^2 + \sigmacdp^2}} \geq \frac{16 C^2 T \log(1/\delta)}{\varepsilon^2}. 
\end{align*}
\end{restatable}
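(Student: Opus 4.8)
The plan is to show that, from the joint viewpoint of the honest-but-curious server and the $q$ colluding workers, the released messages amount to an adaptively composed Gaussian mechanism, and then to identify precisely the \emph{effective} noise variance that survives in the direction along which a neighboring change can move the data-dependent signal. Write $\cH$ for the $h:=n-f$ honest workers and $\mathcal C$ for the set of $q$ colluders whose secrets are revealed. First, since each momentum $m_t^{(i)}$ is a fixed per-worker linear function of $\tilde g_1^{(i)},\dots,\tilde g_t^{(i)}$, and since the broadcast iterates $\theta_t$ and the (arbitrary) messages of malicious workers are, at every round, post-processings of the transcript produced so far, post-processing closure and adaptive composition let me reduce the claim to bounding the privacy of $\{\tilde g_t^{(i)}\}_{i\in\cH,\,0\le t<T}$ conditioned on $\mathcal S_{\mathrm{known}}$.

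Second, I would compute the conditional law of one round's honest noisy gradients given the transcript so far — which fixes $\theta_t$, hence the clipped gradients $g_t^{(i)}$ — and given $\mathcal S_{\mathrm{known}}$. After the server subtracts the known terms $\sum_{j\in\mathcal C}v_t^{(ij)}$, worker $i$'s contribution is $g_t^{(i)}+\bar v_t^{(i)}+\sum_{j\in([n]\setminus\cH)\setminus\mathcal C}v_t^{(ij)}+\sum_{j\in\cH\setminus\{i\}}v_t^{(ij)}$. Here the independent Gaussian $\bar v_t^{(i)}$ together with the noise $\sum_{j\in([n]\setminus\cH)\setminus\mathcal C}v_t^{(ij)}$ shared with non-colluding malicious workers are mutually independent across honest workers and isotropic with per-coordinate variance $a:=\sigmacdp^2+(f-q)\sigmacor^2$, whereas the honest--honest correlated terms $w^{(i)}:=\sum_{j\in\cH\setminus\{i\}}v_t^{(ij)}$ are zero-sum ($\sum_{i\in\cH}w^{(i)}=0$) and jointly Gaussian with per-coordinate covariance $\sigmacor^2\,(h I_h-\mathbf 1\mathbf 1^\top)$. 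Hence the noise covariance over the honest workers is $\Sigma\otimes I_d$ with $\Sigma=\lambda I_h-\sigmacor^2\mathbf 1\mathbf 1^\top$ and $\lambda:=a+h\sigmacor^2=(n-q)\sigmacor^2+\sigmacdp^2$; note $\lambda-h\sigmacor^2=a>0$.

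Third, for two datasets adjacent in worker $i_0$'s data, conditioning on the transcript so far fixes $\theta_t$, so only $g_t^{(i_0)}$ changes, by at most $2C$ in $\ell_2$ after clipping; the signal shift thus lies in the $i_0$-block and has squared Mahalanobis norm at most $(2C)^2\,e_{i_0}^\top\Sigma^{-1}e_{i_0}$. Sherman--Morrison gives $\Sigma^{-1}=\lambda^{-1}I_h+\tfrac{\sigmacor^2}{\lambda a}\mathbf 1\mathbf 1^\top$, so $e_{i_0}^\top\Sigma^{-1}e_{i_0}=\lambda^{-1}\bigl(1+\sigmacor^2/a\bigr)$, whose reciprocal $v_{\mathrm{eff}}:=\lambda/(1+\sigmacor^2/a)$ is exactly the left-hand side of the theorem. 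Therefore each round is $\rho_0$-zCDP with $\rho_0=2C^2/v_{\mathrm{eff}}$, and adaptive composition over $T$ rounds yields $\rho$-zCDP with $\rho=2C^2T/v_{\mathrm{eff}}$.

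Finally, I would convert to $(\varepsilon,\delta)$-DP: the hypothesis $v_{\mathrm{eff}}\ge 16C^2T\log(1/\delta)/\varepsilon^2$ reads $\rho\le \varepsilon^2/(8\log(1/\delta))$, and since $\varepsilon<\log(1/\delta)$ the standard bound $\rho\text{-zCDP}\Rightarrow(\rho+2\sqrt{\rho\log(1/\delta)},\delta)\text{-DP}$ gives $\rho+2\sqrt{\rho\log(1/\delta)}\le(\tfrac18+\tfrac1{\sqrt2})\varepsilon<\varepsilon$, hence $(\varepsilon,\delta)$-SecLDP. (Equivalently one can invoke the exact Gaussian privacy profile $\delta(\varepsilon)=\Phi(\tfrac s2-\tfrac\varepsilon s)-e^\varepsilon\Phi(-\tfrac s2-\tfrac\varepsilon s)$ with $s=\sqrt{4C^2T/v_{\mathrm{eff}}}\le\varepsilon/(2\sqrt{\log(1/\delta)})$ and a Gaussian tail bound.) I expect the crux to be the combination of steps two and three: tracking exactly which correlated-noise terms the colluding server can cancel, noticing that the honest--honest correlated noise is rank-deficient so it leaves the ``common mode'' unprotected yet still protects each individual worker, and — most importantly — recognizing that the right measure of residual noise is the effective variance $1/(e_{i_0}^\top\Sigma^{-1}e_{i_0})$ in the one-dimensional direction of the sensitivity, rather than the smallest eigenvalue $a$ of $\Sigma$; this is what produces the precise ratio on the left-hand side. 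A secondary point requiring care is justifying adaptive composition even though a neighboring change propagates through the broadcast $\theta_t$ and through the adversarial messages of the non-colluding malicious workers.
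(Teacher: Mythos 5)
Your proposal is correct and follows essentially the same route as the paper: after discounting the noise terms that the $q$ colluders can cancel, you identify the honest-block noise covariance $\Sigma=\lambda I_{n-f}-\sigmacor^2\mathbf 1\mathbf 1^\top$ with $\lambda=(n-q)\sigmacor^2+\sigmacdp^2$, invert it by Sherman--Morrison, read off the per-worker effective variance $v_{\mathrm{eff}}=1/(e_{i_0}^\top\Sigma^{-1}e_{i_0})$ as the left-hand side of the stated condition, compose over $T$ rounds, and convert to $(\varepsilon,\delta)$-DP. The paper carries out exactly this computation in Lemma~\ref{lem:rdp} and then composes/converts via Rényi DP (optimizing over $\alpha$), whereas you phrase the per-step bound as $\rho$-zCDP and apply the standard zCDP$\to$DP conversion; these are algebraically the same step and give the same final bound $\tfrac{\varepsilon^2}{8\log(1/\delta)}+\tfrac{\varepsilon}{\sqrt2}\le\varepsilon$, so this is a cosmetic rather than substantive difference.
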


Theorem~\ref{th:privacy-general} guarantees SecLDP for any number of colluding malicious workers $q \leq f < \tfrac{n}{2}$, if the noise scales as $\sigmacdp^2 = \sigmacor^2 = \Theta(\tfrac{1}{n\varepsilon^2})$, ignoring dependencies on other parameters.  
Notably, if at least one malicious worker $j$ does not collude with the server (i.e., $j$ retains its shared secrets and $q < f$), the independent noise term can be entirely removed ($\sigmacdp = 0$), while maintaining $\sigmacor^2 = \Theta(\tfrac{1}{n\varepsilon^2})$ to satisfy SecLDP. The key insight is that each honest worker shares correlated noise with all other workers, including non-colluding malicious worker $j$. As a result, the server cannot cancel the correlated noise associated with worker $j$'s hidden secrets. This effectively serves as an independent noise source for honest workers, eliminating the need for an explicit $\sigmacdp$ in the correlated noise scheme.  
Finally, if the shared randomness assumption among workers is unavailable, \algoname{} naturally falls back to the LDP setting by setting $\sigmacor = 0$ in Algorithm~\ref{algo:robust-dpsgd}. Conceptually, this corresponds to a scenario where all secrets are revealed, allowing the server to eliminate correlated noise and leaving only independent noise for privacy. Consequently, $\sigmacdp^2$ must scale as $\Theta(\tfrac{1}{\varepsilon^2})$, which is $n$ times larger than under SecLDP.

The proof of Theorem~\ref{th:privacy-general}, deferred to Appendix~\ref{app:privacy}, relies upon computing the Rényi divergence induced by the correlated noise scheme, before converting back to DP following standard conversion results~\cite{mironov2017renyi}.
Indeed, we derive the exact expression of the aforementioned Rényi divergence in Lemma~\ref{lem:rdp}, which we use directly in our Rényi-based privacy accountant for SecLDP in practice.
This exact computation was not feasible in previous related works, as they either considered different communication topologies~\cite{sabater2022accurate,allouah2024privacy}.
The major difference with the aforementioned analyses is granularity: we consider $0{\leq}q{\leq}f$ workers who may poison training but are not curious, while prior works considered one extreme or the other, i.e., $q \in \{0,f\}$.
This granularity induces improved empirical performance (Figure~\ref{fig_1}).

\subsection{Utility Analysis}
\label{sec:conv_algo}
We first study the robustness of the CAF aggregation (\cref{algo:filter_adapt}) in Proposition~\ref{prop:filter_adapt}, 
and then analyze the convergence of~\algoname{} (Algorithm~\ref{algo:robust-dpsgd}) in Theorem~\ref{thm:convergence}. 
The proof is deferred to Appendix~\ref{app:robustness}.

\begin{restatable}{proposition}{filter}
\label{prop:filter_adapt}
Let $0 \leq f < \tfrac{n}{2}$ and $\kappa \coloneqq \tfrac{6f}{n-f}\left(1+\tfrac{f}{n-2f}\right)^2$ \label{eq:kappa}.
Let $x_1, \ldots, \, x_n \in \R^d$, and $S \subseteq [n]$ be any subset of size $n-f$.
Let $\overline{x}_S \coloneqq \frac{1}{\card{S}} \sum_{i \in S} x_i$, and $\lambda_{\max}$ denote the maximum eigenvalue operator.
The output $\hat{x} = \mathrm{CAF}(x_1, \ldots, x_n)$ satisfies:
\vspace{-3mm}
\begin{align*}
    \norm{\hat{x} - \overline{x}_S}^2 \leq \kappa \cdot \lambda_{\max}{\left(\frac{1}{\card{S}} \sum_{i \in S}(x_i - \overline{x}_S)(x_i - \overline{x}_S)^\top\right)}.
\end{align*}
\end{restatable}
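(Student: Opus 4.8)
The plan is to follow the standard two-part template for filter-based robust mean estimation, adapted to the covariance-agnostic (minimum-eigenvalue-tracking) variant. Fix the subset $S$ with $\card{S} = n-f$, write $\Sigma_S \coloneqq \frac{1}{\card{S}}\sum_{i\in S}(x_i-\overline{x}_S)(x_i-\overline{x}_S)^\top$ and $\sigma_S^2 \coloneqq \lambda_{\max}(\Sigma_S)$; the goal is $\norm{\mu_\star - \overline{x}_S}^2 \le \kappa\sigma_S^2$. For a weight vector $c\in[0,1]^n$ arising during the loop I use $W \coloneqq \sum_i c_i$, $s \coloneqq \sum_{i\in S}c_i$, $\beta_S \coloneqq \card{S}-s$ (the weight \emph{removed} from $S$ so far), $\beta_T' \coloneqq \sum_{i\notin S}c_i \le f$, and $b \coloneqq \hat\mu_c - \overline{x}_S$. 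The proof rests on (A) a deterministic bias bound valid at every iteration, and (B) the existence of an iteration at which the top covariance eigenvalue is $O(\sigma_S^2)$ while few of the $S$-weights have been removed.

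For (A), I would split $b = \frac{1}{W}\sum_{i\in S}c_i(x_i-\overline{x}_S) + \frac{1}{W}\sum_{i\notin S}c_i(x_i-\overline{x}_S)$. Using $\sum_{i\in S}(x_i-\overline{x}_S)=0$, the first term equals $-\frac{1}{W}\sum_{i\in S}(1-c_i)(x_i-\overline{x}_S)$, of norm at most $\frac{1}{W}\sqrt{\beta_S\card{S}}\,\sigma_S$ by Cauchy--Schwarz with $\sum_{i\in S}(1-c_i)\iprod{u}{x_i-\overline{x}_S}^2 \le \card{S}\sigma_S^2$. For the second term, writing $x_i-\overline{x}_S=(x_i-\hat\mu_c)+b$, the $(x_i-\hat\mu_c)$ part is controlled via Cauchy--Schwarz and the identity $\sum_i c_i\iprod{u}{x_i-\hat\mu_c}^2 = W\,u^\top\hat\Sigma_c u \le W\hat\sigma_c^2$, giving $\frac{1}{W}\sqrt{\beta_T'W}\,\hat\sigma_c \le \frac{1}{W}\sqrt{fW}\,\hat\sigma_c$, while the $b$ part contributes $\frac{\beta_T'}{W}\norm{b}$. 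Collecting terms and using $W-\beta_T'=s$ yields $\norm{b} \le \frac{1}{s}\bigl(\sqrt{\beta_S\card{S}}\,\sigma_S + \sqrt{fW}\,\hat\sigma_c\bigr)$.

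For (B), I would argue by weight accounting: at most $f$ units of weight ever sit on $[n]\setminus S$, so the cumulative weight removed from $[n]\setminus S$ over the whole run is at most $f$; since the loop halts with $W<n-2f$, the cumulative weight removed from $S$ exceeds $f$. Let $k_0$ be the first iteration after which the cumulative weight removed from $S$ exceeds $f$; at the start of every iteration $k\le k_0$ one still has $\beta_S\le f$, hence $s\ge n-2f$. Comparing cumulative removals over iterations $1,\dots,k_0$, some iteration $k_1\le k_0$ satisfies $\sum_{i\in S}c_i\tau_i > \sum_{i\notin S}c_i\tau_i$ with $s^{(k_1)}\ge n-2f$ and $\beta_S^{(k_1)}\le f$. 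There $W\hat\sigma_{k_1}^2 = \sum_{i\in S}c_i\tau_i + \sum_{i\notin S}c_i\tau_i < 2\sum_{i\in S}c_i\tau_i$, and expanding $\tau_i = (\iprod{\hat v_{k_1}}{x_i-\overline{x}_S}-\iprod{\hat v_{k_1}}{b})^2$ together with $\sum_{i\in S}\iprod{\hat v_{k_1}}{x_i-\overline{x}_S}=0$ gives $\sum_{i\in S}c_i\tau_i \le \card{S}(\sigma_S^2+\norm{b^{(k_1)}}^2)$. Feeding in the bound from (A) at iteration $k_1$ (with $\beta_S^{(k_1)}\le f$, $s^{(k_1)}\ge n-2f$, $W^{(k_1)}\le s^{(k_1)}+f$) leaves a self-consistent inequality in $\hat\sigma_{k_1}^2$ alone; solving it gives $\hat\sigma_{k_1}^2 \le c_0\sigma_S^2$ for an explicit $c_0=c_0(f,n)$, and resubstituting bounds $\norm{b^{(k_1)}}^2$.

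To transfer the conclusion to the \emph{returned} $\mu_\star$ rather than $\hat\mu_{k_1}$: the algorithm keeps the iterate with smallest $\hat\sigma_c$, so $\sigma_\star \le \hat\sigma_{k_1}$; the delicate point is to ensure the retained iterate is produced at an iteration still carrying $\ge n-2f$ weight on $S$, so that (A) applies to it with the coefficients above. I would obtain this by fixing the "good" threshold to $c_0\sigma_S^2$ and showing inductively that, as long as $\hat\sigma_c^2$ exceeds it, step (A) forces the filter to remove at least as much weight from $[n]\setminus S$ as from $S$, keeping $s\ge n-2f$; hence the first iteration whose top eigenvalue drops to $c_0\sigma_S^2$ — which exists by the accounting above — still has $s\ge n-2f$, and its weighted mean is the one carried forward. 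The main obstacle I anticipate is then purely quantitative: matching the stated $\kappa = \tfrac{6f}{n-f}\bigl(1+\tfrac{f}{n-2f}\bigr)^2$ requires solving the self-consistent inequality of (B) with an optimized Young split rather than $(a+b)^2\le 2a^2+2b^2$, and likely bounding the adversarial term $\sum_{i\notin S}c_i\iprod{u}{x_i-\hat\mu_c}^2$ at iteration $k_1$ more sharply than by $W\hat\sigma_c^2$ — this is what makes the argument go through uniformly as $f$ approaches $n/2$.
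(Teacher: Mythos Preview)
Your overall plan is sound and closely mirrors the paper's proof: both combine (i) a deterministic bias bound of the form $\norm{\hat\mu_c-\overline{x}_S}\le g(\hat\sigma_c,\sigma_S)$ and (ii) an argument that some iteration produces a small top eigenvalue. Your (A) is essentially the bias bound the paper imports as \cite[Lemma~2.2]{zhu2022robust}, and your (B) is a pigeonhole version of the Filter invariant the paper imports from \cite[Appendix~E.3]{zhu2022robust}. The paper's proof is therefore much shorter: it shows termination, argues by contradiction (via the invariant) that some iteration has $\hat\sigma_c^2\le \tfrac{2n(n-f)}{(n-2f)^2}\sigma_S^2$, deduces $\sigma_\star^2\le \tfrac{2n(n-f)}{(n-2f)^2}\sigma_S^2$ since $\sigma_\star$ tracks the minimum, and then applies the cited bias lemma \emph{directly at the iterate $\mu_\star$} to obtain $\norm{\mu_\star-\overline{x}_S}\le\sqrt{\tfrac{f}{n-f}}(\sigma_\star+\sigma_S)$; squaring gives exactly $\kappa$.

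The one structural overcomplication in your plan is the detour through a specific iteration $k_1$ followed by a ``transfer to $\mu_\star$''. Your proposed fix for that transfer contains a genuine gap: the claim that the first iteration whose eigenvalue drops below the threshold is ``the one carried forward'' is false in general, since $\mu_\star$ is the iterate with the \emph{globally} smallest $\hat\sigma_c$, which can occur strictly later than the first sub-threshold iteration (and after the invariant may already have broken). The paper sidesteps this entirely by never fixing a $k_1$: once you know $\sigma_\star$ is small, you apply the bias bound at the $\mu_\star$ iterate with $\hat\sigma_c=\sigma_\star$ and are done. Your instinct that this step is delicate is correct --- it hinges on the exact hypotheses of the cited lemma --- but the paper delegates that verification to \cite{zhu2022robust} rather than redoing it from scratch. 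If you want a self-contained argument, the cleanest route is to use the sharper resilience form $\norm{b}\le\sqrt{\beta_S/s}\,\sigma_S+\sqrt{\beta_T'/s}\,\hat\sigma_c$ (rather than your Cauchy--Schwarz version, which loses a factor $\sqrt{|S|/s}$ and $\sqrt{W/s}$), and to observe that the invariant is preserved up to and including the iteration at which $\sigma_\star$ is achieved; that is what makes the constant come out to $\kappa=\tfrac{6f}{n-f}(1+\tfrac{f}{n-2f})^2$ without any optimized Young split.
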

\vspace{-4mm}
Proposition~\ref{prop:filter_adapt} establishes an important property satisfied by CAF. 
The above corresponds to the high-dimensional robustness criterion studied by \citet{allouah2023privacy}, shown to be stronger than existing ones in the distributed learning literature, e.g., that of~\citet{karimireddy2022byzantinerobust}.
Furthermore, we recall from~\citet{allouah2023privacy} that standard robust aggregations, such as the trimmed mean and median~\cite{yin2018byzantine}, can only achieve a weaker variant of Proposition~\ref{prop:filter_adapt}, where the right-hand side of the inequality involves the trace instead of the maximum eigenvalue operator.
Since the trace of a $d \times d$ matrix can be up to $d$ times its largest eigenvalue, standard robust aggregations only satisfy Proposition~\ref{prop:filter_adapt} with an additional multiplicative factor of $d$ in $\kappa$~\eqref{eq:kappa}, leading to a looser robustness bound.

Moreover, the so-called SMEA aggregation of~\citet{allouah2023privacy} also satisfies Proposition~\ref{prop:filter_adapt} up to constants.
However, SMEA is computationally impractical as its time complexity grows exponentially with $f$.
In contrast, CAF is efficient, running in polynomial time $\cO(f(nd^2 + d^3))$.
Indeed, at each iteration, at least one input ($i \in \argmax_{j \in [n]} \tau_j$) is assigned a weight of zero.
Thus, CAF terminates within at most $2f$ iterations, each requiring the computation of the top eigenvector and the formation of the empirical covariance matrix.%
To further improve efficiency, we implement a CAF variant with reduced complexity $\cO(fnd\log{d})$ by approximating the top eigenvector using the power method.
This approach avoids explicit covariance matrix formation and instead relies on matrix-vector products. We adopt this more efficient variant in our experiments in Section~\ref{sec:exp}, as it can be easily shown to satisfy Proposition~\ref{prop:filter_adapt} up to constants.

CAF builds on recent algorithmic advances in robust statistics~\cite{diakonikolas2017being}. 
A major difference is that CAF does not require a bound on the covariance of honest inputs. In contrast, prior robust mean estimators have relied on such bounds as a crucial design requirement~\cite{diakonikolas2017being,steinhardt2018resilience}.
CAF operates instead under a more practical assumption, requiring only a bound on the number of corrupt inputs $f$, a standard assumption in robust ML~\cite{yin2018byzantine,Karimireddy2021,farhadkhani2022byzantine}.
By eliminating covariance constraints, CAF is more suited to our setting, where information about honest inputs such as gradients or momentums may not be available or could compromise privacy.

Next, Theorem~\ref{thm:convergence} establishes the convergence rate of \algoname{}, with the proof deferred to Appendix~\ref{app:ub}.

\begin{restatable}{theorem}{convergence}
\label{thm:convergence}
Suppose that Assumptions~\ref{asp:hetero}, \ref{asp:bnd_var}, and \ref{asp:bnd_norm} hold, and that $\loss_\H$ is $L$-smooth. Recall $\kappa$ from~\eqref{eq:kappa}. 
Let
\vspace{-2mm}
\begin{align*}
    \sigmabar^2 &\coloneqq \frac{\sigma_b^2 + d(f\sigmacor^2 + \sigmacdp^2)}{n-f}\\
    &\quad+4 \kappa\left(\sigma_b^2+108\left(n\sigmacor^2 + \sigmacdp^2 \right)(1+\frac{d}{n-f})\right),
\end{align*}
where $\sigma_b^2 \coloneqq 2(1-\frac{b}{m})\frac{\sigma^2}{b}$.
We also denote $\loss_\star \coloneqq \inf_{\theta \in \R^d} \loss_{\H}(\theta), 
\loss_0 \coloneqq \loss_{\H}{(\theta_0)} - \loss_\star$.
Consider \cref{algo:robust-dpsgd} with $T \geq 1$, the learning rates $\gamma_t$ specified below, and momentum coefficients $\beta_t = 1 - 24L \gamma_t$. The following holds:
\vspace{-5mm}
\begin{enumerate}[leftmargin=*]
\setlength{\itemsep}{0.2em}
    \item \textit{$\loss_\H$ is $\mu$-strongly convex:}
    If $\gamma_t= \frac{10}{\mu(t+240 \frac{L}{\mu})}$, then
    \vspace{-3mm}
    \begin{align*}
        \expect{\loss_{\H}{(\theta_{T})}-\loss_*}
    &\lesssim \frac{\kappa \gcov^2}{\mu}  +
    \frac{L \sigmabar^2}{\mu^2 T} +  \frac{L^2 \loss_0}{\mu^2 T^2},
    \end{align*}
    \vspace{-5mm}
    \item \textit{$\loss_\H$ is non-convex:}
    If $\gamma_t = \min{\left\{\frac{1}{24L}, ~ \frac{\sqrt{3 \loss_0}}{16\sigmabar \sqrt{L T}}\right\}}$, then%
    \vspace{-3mm}
\begin{align*}
   \E{\|\nabla \loss_{\H} {(\hat{\weight{}})}\|^2} \hspace{-2pt}
    \lesssim  \kappa \gcov^2 +  \frac{\sigmabar \sqrt{L \loss_0}}{\sqrt{T}} + \frac{L \loss_0}{T},
\end{align*}
\end{enumerate}
\vspace{-4mm}
where $\lesssim$ denotes inequality up to absolute constants and the expectation is over the randomness of the algorithm.
\end{restatable}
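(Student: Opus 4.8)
The plan is to cast \algoname{} as a momentum-SGD scheme on the honest loss $\loss_\H$ with a biased, noisy gradient estimator, then control the bias via the robustness of CAF (Proposition~\ref{prop:filter_adapt}) and the variance via the noise structure, and finally plug into a standard momentum-SGD descent recursion. Concretely, let $\bar m_t \coloneqq \frac{1}{\card\H}\sum_{i\in\H} m_t^{(i)}$ be the average honest momentum, and decompose the aggregation error at step $t$ as $R_t - \nabla\loss_\H(\theta_t) = (R_t - \bar m_t) + (\bar m_t - \nabla\loss_\H(\theta_t))$. The second term is the usual momentum-drift term handled by the $\beta_t = 1-24L\gamma_t$ bookkeeping; the first term is the Byzantine/robustness error.

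First I would bound $\E\|R_t - \bar m_t\|^2$. Apply Proposition~\ref{prop:filter_adapt} with $S = \H$, $x_i = m_t^{(i)}$: this gives $\|R_t - \bar m_t\|^2 \le \kappa\,\lambda_{\max}\big(\frac1{\card\H}\sum_{i\in\H}(m_t^{(i)}-\bar m_t)(m_t^{(i)}-\bar m_t)^\top\big)$. Since $\lambda_{\max}$ of a sum is at most the sum of $\lambda_{\max}$'s, and in expectation $\lambda_{\max}$ of the centered covariance of the honest momentums splits into (a) a heterogeneity piece, bounded using Assumption~\ref{asp:hetero} by $\gcov^2$ (here the all-directions form of the assumption is exactly what lets us pass to $\lambda_{\max}$ rather than trace), (b) a stochastic-sampling piece controlled by $\sigma_b^2 = 2(1-\frac bm)\frac{\sigma^2}{b}$ via Assumption~\ref{asp:bnd_var} and sampling-without-replacement, and (c) a DP-noise piece. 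For (c), the key point is that the correlated terms $\sum_{j\ne i} v_t^{(ij)}$ are \emph{not} independent across honest $i$, but their contribution to the honest \emph{covariance} along any fixed direction is still $O(n\sigmacor^2 + \sigmacdp^2)$ — this is where the $108(n\sigmacor^2+\sigmacdp^2)$ term originates, and the factor $(1+\frac{d}{n-f})$ comes from also bounding the \emph{norm} of $\bar m_t - \nabla\loss_\H$ which carries the averaged independent-plus-residual-correlated noise of effective per-coordinate variance $\frac{f\sigmacor^2+\sigmacdp^2}{n-f}$ over $d$ coordinates. Combining, $\E\|R_t - \nabla\loss_\H(\theta_t)\|^2 \lesssim \kappa\gcov^2 + \sigmabar^2$ up to the momentum recursion, which exactly matches the definition of $\sigmabar^2$ in the statement.

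Next I would run the standard momentum-SGD analysis. Define the Lyapunov/potential function combining $\loss_\H(\theta_t)-\loss_\star$ (or $\|\nabla\loss_\H(\theta_t)\|^2$ in the non-convex case) with the momentum-deviation $\|\bar m_t - \nabla\loss_\H(\theta_t)\|^2$; using $L$-smoothness of $\loss_\H$ and the update $\theta_{t+1}=\theta_t-\gamma_t R_t$, expand $\loss_\H(\theta_{t+1})$, take expectations, and use the bound above together with the contraction of the momentum deviation (the $\beta_t=1-24L\gamma_t$ choice makes the momentum variance shrink at rate $\gamma_t$ while the drift stays $O(\gamma_t^2)$). In the $\mu$-strongly convex case, use $\|\nabla\loss_\H(\theta_t)\|^2 \ge 2\mu(\loss_\H(\theta_t)-\loss_\star)$ and unroll with the prescribed $\gamma_t = \frac{10}{\mu(t+240L/\mu)}$, yielding the $\frac{\kappa\gcov^2}{\mu} + \frac{L\sigmabar^2}{\mu^2 T} + \frac{L^2\loss_0}{\mu^2 T^2}$ rate after summing a weighted telescoping series. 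In the non-convex case, average the descent inequality over $t$, optimize over the constant stepsize, and use that $\hat\theta$ is sampled uniformly, giving $\kappa\gcov^2 + \frac{\sigmabar\sqrt{L\loss_0}}{\sqrt T} + \frac{L\loss_0}{T}$. The non-vanishing $\kappa\gcov^2$ term is the irreducible robustness floor and is consistent with lower bounds for Byzantine-robust heterogeneous optimization.

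The main obstacle I anticipate is step two, the variance bound on the honest covariance: unlike vanilla Byzantine-robust SGD where honest gradients are i.i.d.-ish, here the injected correlated noises $v_t^{(ij)}=-v_t^{(ji)}$ create cross-correlations among honest workers, so one cannot simply sum per-worker variances. The careful accounting — separating the honest average's noise (which is small, $\frac{f\sigmacor^2+\sigmacdp^2}{n-f}$ per coordinate, since only the $\le f$ malicious-paired correlated terms survive cancellation among the $n-f$ honest workers) from the honest \emph{spread} around that average (which sees the full $n\sigmacor^2$) — is the delicate part, and also explains why the two pieces of $\sigmabar^2$ have different dependence on $d$, $f$, and $\kappa$. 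The momentum machinery itself is by now standard (cf. \citet{farhadkhani2022byzantine, allouah2023robust}) and I would largely cite it.
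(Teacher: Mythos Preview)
Your high-level plan matches the paper's: decompose $R_t - \nabla\loss_\H(\theta_t)$ into aggregation error plus momentum deviation, bound the former via Proposition~\ref{prop:filter_adapt}, and run a Lyapunov argument with the $\beta_t = 1-24L\gamma_t$ bookkeeping. But two technical points are missing or misplaced, and they are precisely the parts that are not ``standard''.

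First, the Lyapunov function must also carry the honest-momentum spread $\Delta_t \coloneqq \lambda_{\max}\big(\tfrac1{\card\H}\sum_{i\in\H}(m_t^{(i)}-\bar m_t)(m_t^{(i)}-\bar m_t)^\top\big)$, not only the loss and the deviation $\|\bar m_t - \nabla\loss_\H(\theta_t)\|^2$. Because the $m_t^{(i)}$ are momentums accumulating the whole trajectory, $\Delta_t$ cannot be bounded step-by-step by a constant; it obeys its own recursion $\E[\Delta_{t+1}] \le \beta_t\,\E[\Delta_t] + (\text{noise}) + (1-\beta_t)\gcov^2$, and this recursion is coupled to the descent and deviation recursions through a potential of the form $\loss_\H(\theta_t)-\loss_\star + \tfrac{z_1}{L}\|\dev{t}\|^2 + \kappa\tfrac{z_2}{L}\Delta_t$. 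Your line ``$\lambda_{\max}$ of a sum is at most the sum of $\lambda_{\max}$'s'' does not give you this; you need to track $\Delta_t$ dynamically.

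Second, your attribution of the $(1+\tfrac{d}{n-f})$ factor is backward. It does \emph{not} come from the deviation $\|\bar m_t - \nabla\loss_\H\|$; that quantity produces the first piece of $\sigmabar^2$, namely $\tfrac{\sigma_b^2 + d(f\sigmacor^2+\sigmacdp^2)}{n-f}$, via exactly the cancellation you describe. The $(1+\tfrac{d}{n-f})$ sits inside the $\kappa$-piece and arises when bounding the noise contribution to $\E[\Delta_t]$, i.e., $\E\big[\sup_{\|v\|\le1}\tfrac1{\card\H}\sum_{i\in\H}\langle v,\xi_t^{(i)}\rangle^2\big]$ with $\xi_t^{(i)}$ the total injected noise at worker $i$. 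This is a supremum over directions of a random quadratic form and is handled by a concentration bound on the empirical covariance of Gaussians (for $N$ i.i.d.\ $\cN(0,\sigma^2 I_d)$ vectors, $\E[\lambda_{\max}(\text{empirical cov.})] \lesssim \sigma^2(1+d/N)$), proved via a covering-net argument on the sphere plus sub-Gaussian moment bounds. Without this step you would get a crude $d\cdot(n\sigmacor^2+\sigmacdp^2)$ instead of $(n\sigmacor^2+\sigmacdp^2)(1+\tfrac{d}{n-f})$, which is exactly the gap between CAF and trimmed-mean discussed after Corollary~\ref{cor:tradeoff}. The same covering trick is also what lets you swap $\sup$ and $\E$ to kill the cross terms in the $\Delta_t$ recursion; you identified the correlated-noise bookkeeping as the hard part, but the actual hard part is controlling the $\lambda_{\max}$ in expectation, not the per-direction variance.
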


Our result recovers the state-of-the-art convergence analysis in robust optimization in the local privacy model~\cite{allouah2023privacy} when setting $\sigmacor = 0$.
The key difference with the latter analysis is the role of the correlated noise magnitude $\sigmacor$, which appears in the dominating term inside $\sigmabar$, in both strongly convex and non-convex settings.
Moreover, our convergence rates exhibit the standard dependence on the number of iterations in smooth stochastic optimization: $\cO(\tfrac{1}{T})$ for the strongly convex case and $\cO(\tfrac{1}{\sqrt{T}})$ for the non-convex case.
The presence of a non-vanishing term in $T$ is fundamental in robust optimization in heterogeneous data settings~\cite{karimireddy2022byzantinerobust}.
Finally, our upper bound in the strongly convex case also applies to non-convex loss functions that satisfy the Polyak-Lojasiewicz inequality~\cite{karimi2016linear}.

\subsection{Privacy-Utility Trade-off}
\label{sec:tradeoff}
Corollary~\ref{cor:tradeoff} below, which quantifies the privacy-utility trade-off of \algoname{}, follows directly from combining the results of Theorems~\ref{th:privacy-general} and~\ref{thm:convergence}.

\begin{restatable}{corollary}{tradeoff}
\label{cor:tradeoff}
Let $\delta \in (0,1), \varepsilon \in (0, \log(1/\delta))$ and $n \geq (2+\eta)f$, for some absolute constant $\eta>0$.
Consider Algorithm~\ref{algo:robust-dpsgd} in the strongly convex setting of Theorem~\ref{thm:convergence}.
If $\sigmacor^2 = \sigmacdp^2 = \frac{32 C^2 T \log{(1/\delta)}}{\varepsilon^2(n-f)}$,
then Algorithm~\ref{algo:robust-dpsgd} is $(\varepsilon,\delta)$-SecLDP against an honest-but-curious server colluding with all malicious workers, and $(f, \varrho)$-robust 
where
    $\varrho = \cO\left(\frac{(f+1)C^2 d \log(1/\delta)}{n^2 \varepsilon^2}  +  \frac{f C^2 \log(1/\delta)}{n \varepsilon^2} + \frac{f}{n}\gcov^2 \right)$,
asymptotically in $T$ and ignoring absolute constants.
\end{restatable}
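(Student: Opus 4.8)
The plan is to derive Corollary~\ref{cor:tradeoff} by plugging the prescribed noise level $\sigmacor^2 = \sigmacdp^2 = \tfrac{32 C^2 T \log(1/\delta)}{\varepsilon^2(n-f)}$ into both Theorem~\ref{th:privacy-general} (for privacy) and Theorem~\ref{thm:convergence} (for utility), and then carefully tracking the asymptotic order of each resulting term.

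\textbf{Privacy side.} Here the relevant case of Theorem~\ref{th:privacy-general} is the standard LDP-like regime where the server colludes with \emph{all} malicious workers, i.e.\ $q = f$. Setting $q = f$ in the condition of Theorem~\ref{th:privacy-general}, the term $(f-q)\sigmacor^2 + \sigmacdp^2$ in the denominator becomes $\sigmacdp^2$, so the fraction on the left-hand side simplifies to $\bigl((n-f)\sigmacor^2 + \sigmacdp^2\bigr) / \bigl(1 + \tfrac{\sigmacor^2}{\sigmacdp^2}\bigr)$. With $\sigmacor^2 = \sigmacdp^2$ this is exactly $\tfrac{1}{2}\bigl((n-f)\sigmacor^2 + \sigmacdp^2\bigr) = \tfrac{1}{2}(n-f+1)\sigmacor^2$. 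Substituting $\sigmacor^2 = \tfrac{32 C^2 T \log(1/\delta)}{\varepsilon^2(n-f)}$ gives $\tfrac{16(n-f+1)C^2 T\log(1/\delta)}{\varepsilon^2(n-f)} \geq \tfrac{16 C^2 T \log(1/\delta)}{\varepsilon^2}$, which holds since $n-f+1 \geq n-f$. Hence the SecLDP guarantee is verified. (One also checks $q = f < n/2$, consistent with $n \geq (2+\eta)f$.)

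\textbf{Utility side.} I would substitute the chosen $\sigmacor^2, \sigmacdp^2$ into the definition of $\sigmabar^2$ from Theorem~\ref{thm:convergence}. The three contributions to $\sigmabar^2$ are: (i) $\tfrac{\sigma_b^2 + d(f\sigmacor^2 + \sigmacdp^2)}{n-f}$, which with the noise choice contributes a term of order $\tfrac{d(f+1)\sigmacor^2}{n-f} = \cO\bigl(\tfrac{(f+1) C^2 d T \log(1/\delta)}{(n-f)^2 \varepsilon^2}\bigr)$, plus the benign $\tfrac{\sigma_b^2}{n-f}$ piece; (ii) $4\kappa\sigma_b^2$, benign; and (iii) $4\kappa\cdot 108(n\sigmacor^2 + \sigmacdp^2)(1 + \tfrac{d}{n-f})$, where $n\sigmacor^2 + \sigmacdp^2 = (n+1)\sigmacor^2 = \cO\bigl(\tfrac{C^2 T\log(1/\delta)}{\varepsilon^2}\bigr)$ (using $n/(n-f) = \Theta(1)$ under $n\geq(2+\eta)f$), and the factor $(1+\tfrac{d}{n-f})$ splits this into an $\cO(\tfrac{C^2 T \log(1/\delta)}{\varepsilon^2})$ piece and an $\cO(\tfrac{C^2 d T\log(1/\delta)}{(n-f)\varepsilon^2})$ piece, each multiplied by $\kappa$. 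Since $n\geq(2+\eta)f$ forces $\kappa = \tfrac{6f}{n-f}(1+\tfrac{f}{n-2f})^2 = \Theta(\tfrac{f}{n})$, I would absorb $\kappa$ accordingly. Then I feed $\sigmabar^2$ into the strongly convex bound $\expect{\loss_\H(\theta_T) - \loss_*} \lesssim \tfrac{\kappa\gcov^2}{\mu} + \tfrac{L\sigmabar^2}{\mu^2 T} + \tfrac{L^2\loss_0}{\mu^2 T^2}$ and take $T\to\infty$, so that only $\tfrac{\kappa\gcov^2}{\mu}$ and $\tfrac{L\sigmabar^2}{\mu^2 T}$ survive. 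Crucially the $T$-dependence of $\sigmabar^2$ is linear, so $\tfrac{\sigmabar^2}{T}$ has a finite limit: the dominant surviving pieces are $\tfrac{(f+1)C^2 d\log(1/\delta)}{(n-f)^2\varepsilon^2}$, $\tfrac{f}{n}\cdot\tfrac{C^2 d\log(1/\delta)}{(n-f)\varepsilon^2}$, $\tfrac{f}{n}\cdot\tfrac{C^2\log(1/\delta)}{\varepsilon^2}$, and $\tfrac{f}{n}\gcov^2$. Using $n-f = \Theta(n)$, the first two collapse into $\cO\bigl(\tfrac{(f+1)C^2 d\log(1/\delta)}{n^2\varepsilon^2}\bigr)$ and the third into $\cO\bigl(\tfrac{fC^2\log(1/\delta)}{n\varepsilon^2}\bigr)$, matching the claimed $\varrho$. (The benign $\sigma_b^2$-driven terms vanish as $T\to\infty$ as well, since they too appear divided by $T$.)

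\textbf{Main obstacle.} The only genuinely delicate point is bookkeeping the interplay of $\kappa$, the factor $1 + \tfrac{d}{n-f}$, and the dimension-carrying term $\tfrac{d(f\sigmacor^2+\sigmacdp^2)}{n-f}$: one must verify that after multiplying by $\kappa = \Theta(f/n)$, the term $\kappa\cdot\tfrac{d}{n-f}\cdot(n+1)\sigmacor^2$ is of the same or smaller order than the non-$\kappa$ dimensional term $\tfrac{d(f+1)\sigmacor^2}{n-f}$ — indeed $\kappa\cdot\tfrac{d}{n-f}\cdot n = \Theta(\tfrac{fd}{n-f})$, comparable to $\tfrac{(f+1)d}{n-f}$, so both land in the $\tfrac{(f+1)C^2 d\log(1/\delta)}{n^2\varepsilon^2}$ bucket after dividing by $T$ and using $n-f=\Theta(n)$. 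Everything else is routine substitution and the standard $n\geq(2+\eta)f \Rightarrow n-f \geq \tfrac{1+\eta}{2+\eta}n = \Theta(n)$ and $n-2f \geq \tfrac{\eta}{2+\eta}n = \Theta(n)$ estimates used to control $\kappa$ and the ratios $n/(n-f)$.
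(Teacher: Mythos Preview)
Your proposal is correct and follows essentially the same approach as the paper: verify the privacy condition of Theorem~\ref{th:privacy-general} at $q=f$ with the chosen noise levels, then substitute into $\sigmabar^2$ and the strongly convex bound of Theorem~\ref{thm:convergence}, use $\kappa=\Theta(f/n)$ and $n-f=\Theta(n)$ under $n\geq(2+\eta)f$, and discard terms vanishing in $T$. Your bookkeeping of the three contributions to $\sigmabar^2$ and their collapse into the two dimensional buckets is exactly what the paper does (indeed with more explicit detail than the paper provides).
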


\vspace{-1mm}

To analyze the privacy-utility trade-off, we examine the first term of $\varrho$ from Corollary~\ref{cor:tradeoff}. This term dominates the second whenever $d \geq n$, a condition that is typically met in practice, given that modern deep learning models have dimensions in the order of millions~\cite{he2016deep}, while the number of clients in federated learning, particularly in cross-silo settings, is often much smaller~\cite{kairouz2021advances}.
The final term in the error stems solely from data heterogeneity and becomes negligible when data is not highly heterogeneous.
Overall, \algoname{} achieves a privacy-utility trade-off of $\widetilde{\cO}(\tfrac{(f+1) d}{n^2\varepsilon^2})$.
Notably, our analysis assumes the worst case where all malicious workers collude with the server to breach privacy ($q=f$ in Theorem~\ref{th:privacy-general}).
If they do not (i.e., $q=0$), the trade-off still holds, but constants improve with the choice of $\sigmacdp=0$, as discussed after Theorem~\ref{th:privacy-general} and empirically validated in Section~\ref{sec:exp}.

For comparison, recall that the minimax optimal privacy-utility trade-off in LDP is $\widetilde{\Theta}(\tfrac{d}{n \varepsilon^2})$~\cite{duchi2018minimax}, which is $n$ times larger than the corresponding rate in CDP given by $\widetilde{\Theta}(\tfrac{d}{n^2\varepsilon^2})$~\cite{bassily2014private}, both under user-level dataset adjacency.
Therefore, our rate of $\widetilde{\cO}(\tfrac{(f+1) d}{n^2\varepsilon^2})$ derived under SecLDP naturally bridges these two extremes.
Notably, as the number of malicious workers $f$ decreases, 
our rate converges to CDP.
Specifically, our rate matches CDP when $f= \widetilde{\cO}(1)$ and remains strictly better than LDP as long as $f$ is sublinear in $n$.
Furthermore, under the LDP privacy model ($\sigmacor = 0$), such as when shared randomness among workers is unavailable, \algoname{} recovers the state-of-the-art privacy-utility trade-off of $\widetilde{\Theta}(\tfrac{d}{n \varepsilon^2})$.

CAF plays a crucial role in ensuring that Algorithm~\ref{algo:robust-dpsgd} achieves the derived trade-off. In contrast, employing standard robust aggregations, such as the trimmed mean or median~\cite{yin2018byzantine}, significantly degrades theoretical performance. 
Specifically, after replacing CAF with the trimmed mean, %
the resulting convergence rate is $\widetilde{\cO}(\tfrac{fd}{n\varepsilon^2})$ for $f>0$ (see Appendix~\ref{app:cor-strongly-convex-standard-aggs}).
This rate is $n$ times worse than that achieved with CAF and only matches LDP when $f = \widetilde{\cO}(1)$. For larger $f$, performance further degrades, with a rate strictly worse than LDP.

\begin{figure*}[t!]
    \vspace{-2mm}
    \centering
    \begin{subfigure}[t]{0.5\textwidth}
        \centering
        \includegraphics[height=1.5in]{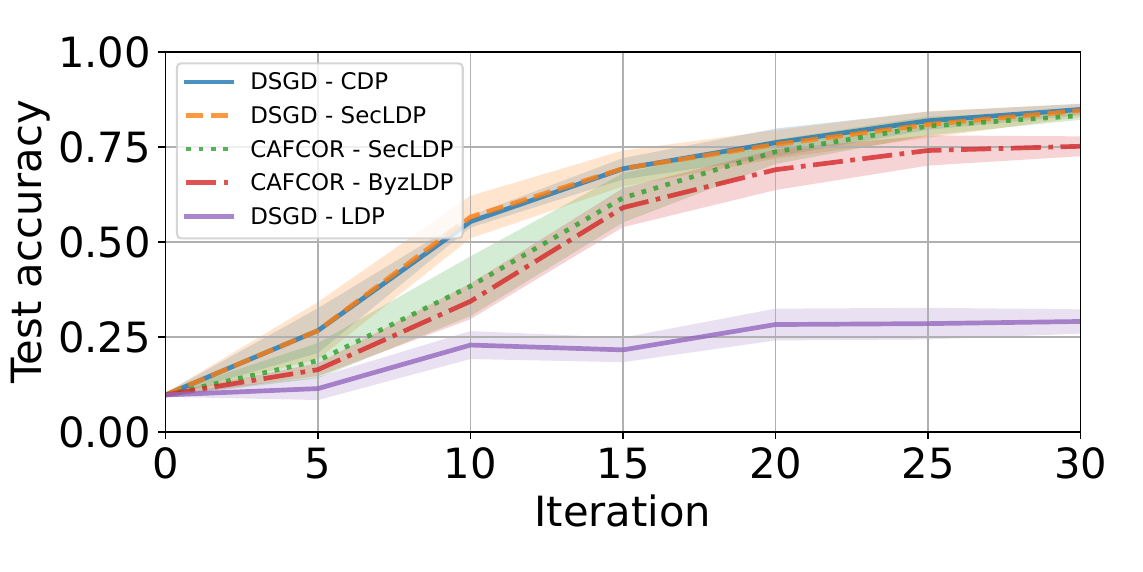}
        \vspace{-4mm}
        \caption{MNIST, $f = 5$}
    \end{subfigure}%
    ~
    \begin{subfigure}[t]{0.5\textwidth}
        \centering
        \includegraphics[height=1.5in]{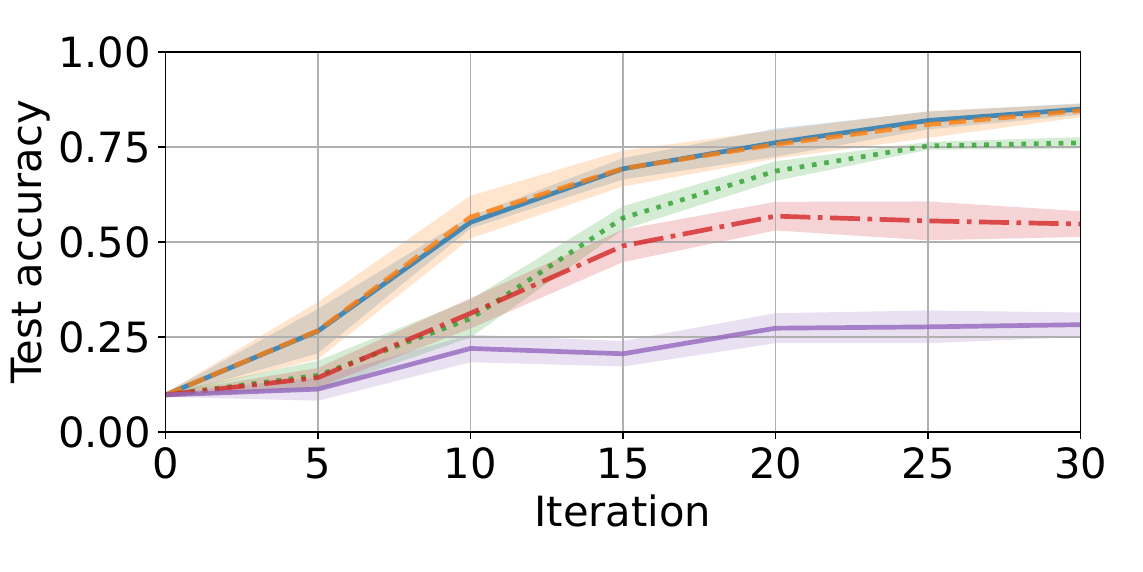}
        \vspace{-4mm}
        \caption{MNIST, $f = 10$}
        \label{fig_1_b}
    \end{subfigure}
    \\
    \begin{subfigure}[t]{0.5\textwidth}
        \centering
        \includegraphics[height=1.5in]{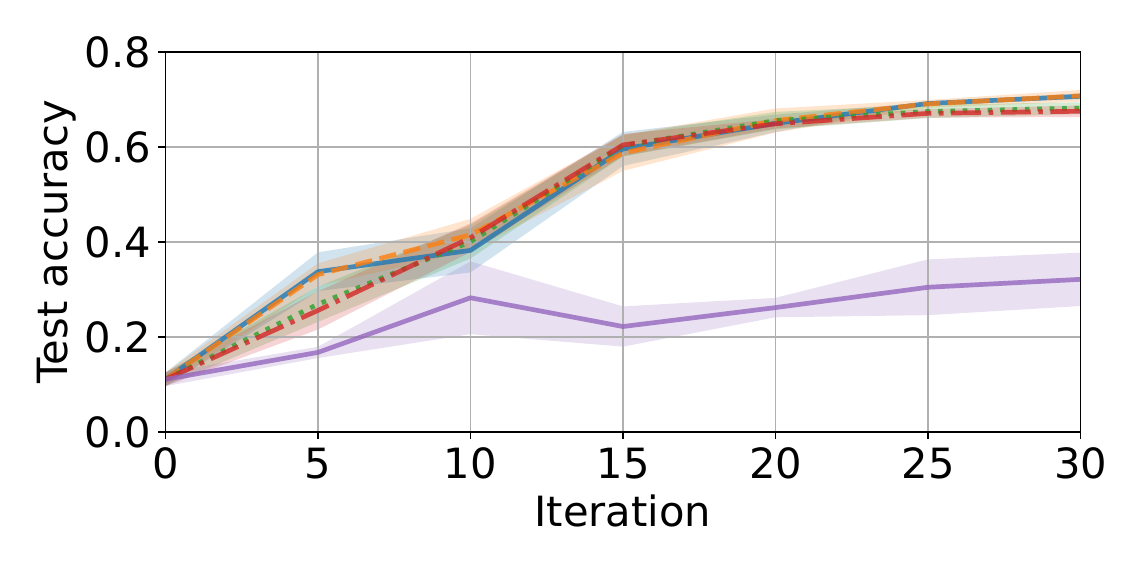}
        \vspace{-4mm}
        \caption{Fashion-MNIST, $f = 5$}
    \end{subfigure}%
    ~
    \begin{subfigure}[t]{0.5\textwidth}
        \centering
        \includegraphics[height=1.5in]{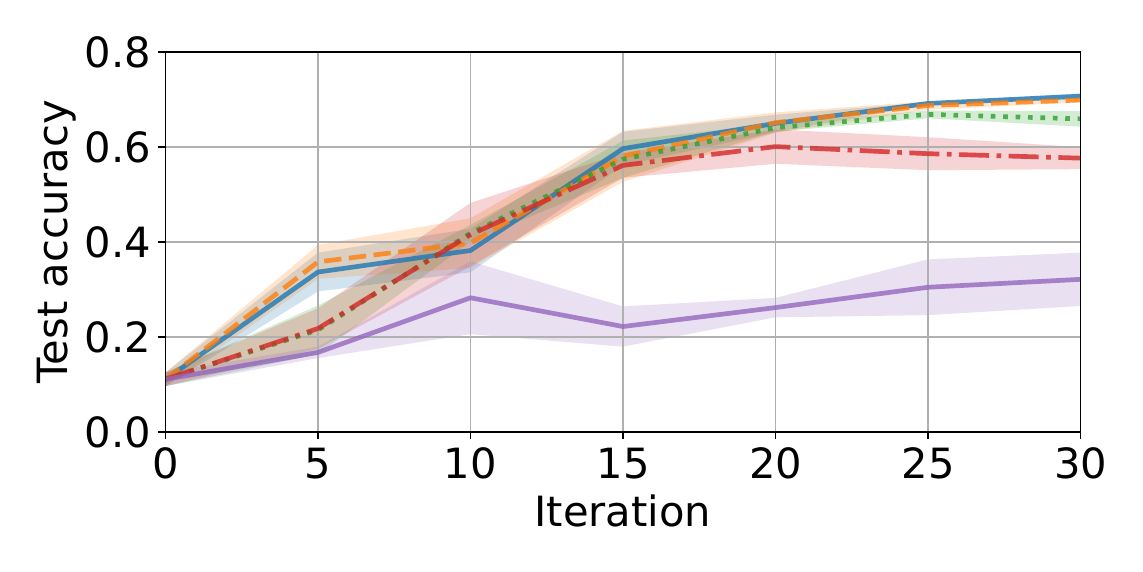}
        \vspace{-4mm}
        \caption{Fashion-MNIST, $f = 10$}
    \end{subfigure}
    \vspace{-4mm}
    \caption{Comparison of \algoname{} and DSGD under four privacy threat models on (Fashion-)MNIST. There are $n = 100$ workers, including $f = 5$ and $10$ malicious workers executing ALIE~\cite{baruch2019alittle}. 
    A homogeneous data distribution is considered among honest workers. 
    User-level DP is used and privacy budgets for MNIST are $\varepsilon = 26.4$ and $27.8$ for $f = 10$ and $f = 5$, respectively. For Fashion-MNIST, the privacy budget is $\varepsilon = 39.6$. Throughout, we set $\delta = 10^{-4}$.}
    \label{fig_1}
\end{figure*}

\section{Experimental Evaluation}
\label{sec:exp}

We empirically evaluate the performance of \algoname{} in a distributed environment under varying threat models.
We also compare \algoname{} to existing robust and private algorithms, demonstrating its empirical advantages.

\textbf{Datasets and models.}
We consider two widely used image classification datasets: MNIST~\cite{mnist} and Fashion-MNIST~\cite{fashion-mnist}. 
While these tasks are relatively simple in standard, non-adversarial settings, %
they become challenging with the introduction of malicious workers~\cite{Karimireddy2021, farhadkhani2022byzantine} and the differential privacy constraint~\cite{noble2022differentially}.
MNIST images are normalized using a mean of $0.1307$ and standard deviation of $0.3081$, while Fashion-MNIST images are horizontally flipped.
To simulate data heterogeneity, we sample data for honest workers using a Dirichlet distribution of parameter $\alpha$~\cite{dirichlet}.
We consider $\alpha \in \{0.1, 1\}$, a homogeneous distribution scenario, and an extreme heterogeneity scenario where data is sorted by label and evenly assigned to honest workers.
On both datasets, we train the same convolutional neural network with a model size of 431,080 parameters, using the negative log-likelihood loss.

\textbf{Attacks.}
We consider four state-of-the-art attacks: Fall of Empires (FOE)~\cite{xie2020fall}, A Little is Enough (ALIE)~\cite{baruch2019alittle}, Label Flipping (LF) and Sign Flipping (SF)~\cite{allen2020byzantine}.

\paragraph{Reproducibility.}
To estimate the privacy budgets achieved at the end of training, we use Opacus~\cite{opacus}.
Our experiments are conducted with five random seeds (1-5) to ensure reproducibility.
All reported results include standard deviation across these seeds.
To facilitate reproducibility, we intend to publicly release our code.

\subsection{Comparison of Threat Models}
In Figure~\ref{fig_1}, we compare \algoname{} to the DSGD baseline across four privacy regimes: LDP, CDP, SecLDP (assuming no collusion), and a novel threat model we call ByzLDP discussed in Section~\ref{sec:problem}, where all malicious workers collude with the server. %
We execute \algoname{} in a distributed system with $n = 100$ workers among which $f \in \{5, 10\}$ are malicious.
On MNIST, we use batch size $b = 50$, learning rate $\gamma = 0.075$, momentum parameter $\beta = 0.85$, and clipping parameter $C = 2.25$.
For Fashion-MNIST, we use $b = 100$, $\gamma = 0.3$, $\beta = 0.9$, and $C=1$.
For both datasets, we train for $T = 30$ iterations and apply $\ell_2$-regularization at $10^{-4}$.
We adopt user-level DP across all threat models.
On MNIST, the privacy budgets reach $\varepsilon = 26.4$ and $27.8$ for $f = 10$ and $f = 5$, respectively. On Fashion-MNIST, the privacy budget is $\varepsilon = 39.6$ for both values of $f$.
In Figure~\ref{fig_1}, we consider malicious workers executing the ALIE attack.
Results for the remaining attacks are deferred to Appendix~\ref{app_exp_results_corr}, where similar trends are observed.

\begin{figure*}[t!]
    \vspace{-2mm}
    \centering
    \begin{subfigure}[t]{0.5\textwidth}
        \centering
        \includegraphics[height=1.5in]{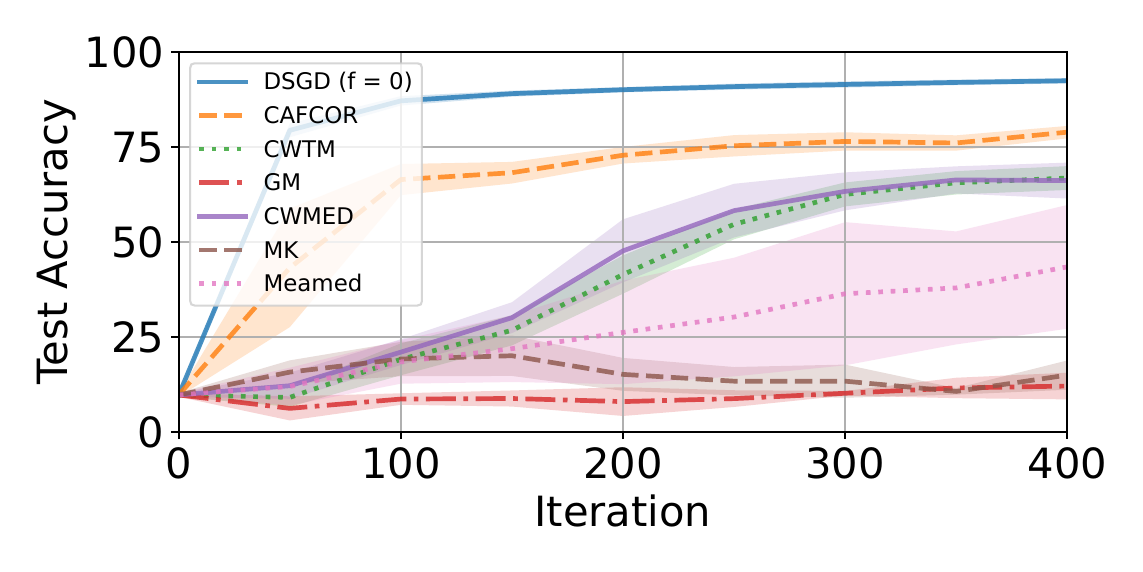}
        \vspace{-4mm}
        \caption{MNIST, Sign Flipping attack}
    \end{subfigure}%
    ~
    \begin{subfigure}[t]{0.5\textwidth}
        \centering
        \includegraphics[height=1.5in]{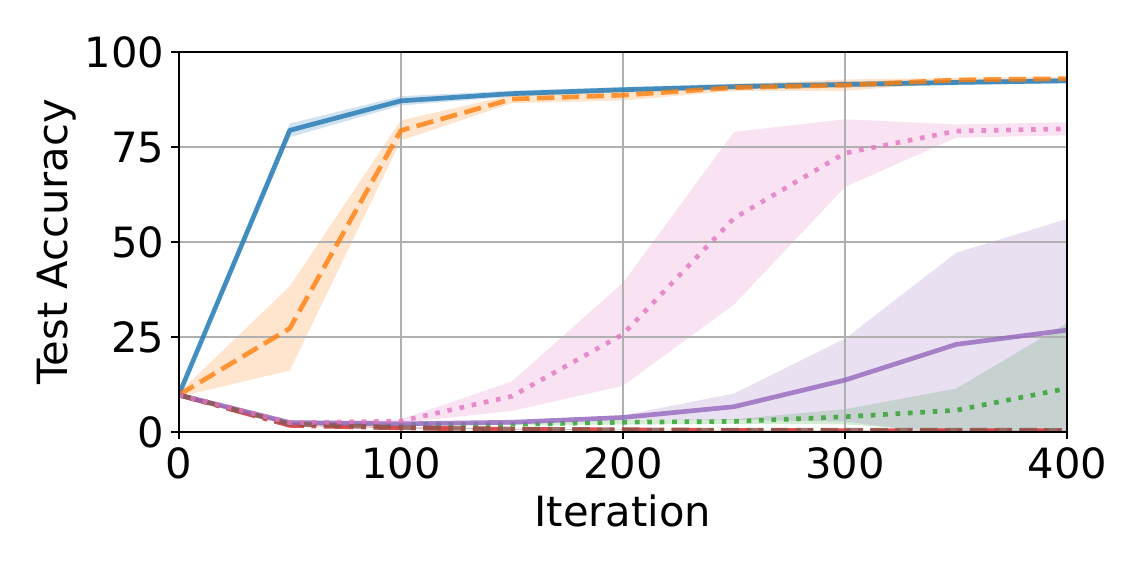}
        \vspace{-4mm}
        \caption{MNIST, Label Flipping attack}
    \end{subfigure}
    \\
    \begin{subfigure}[t]{0.5\textwidth}
        \centering
        \includegraphics[height=1.5in]{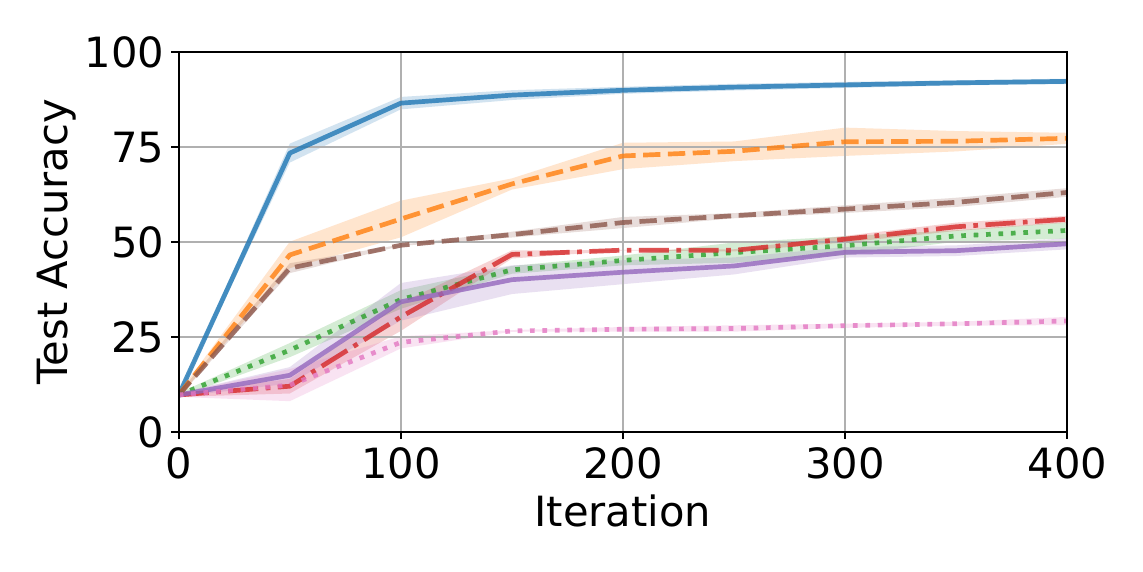}
        \vspace{-4mm}
        \caption{Fashion-MNIST, Sign Flipping attack}
    \end{subfigure}%
    ~
    \begin{subfigure}[t]{0.5\textwidth}
        \centering
        \includegraphics[height=1.5in]{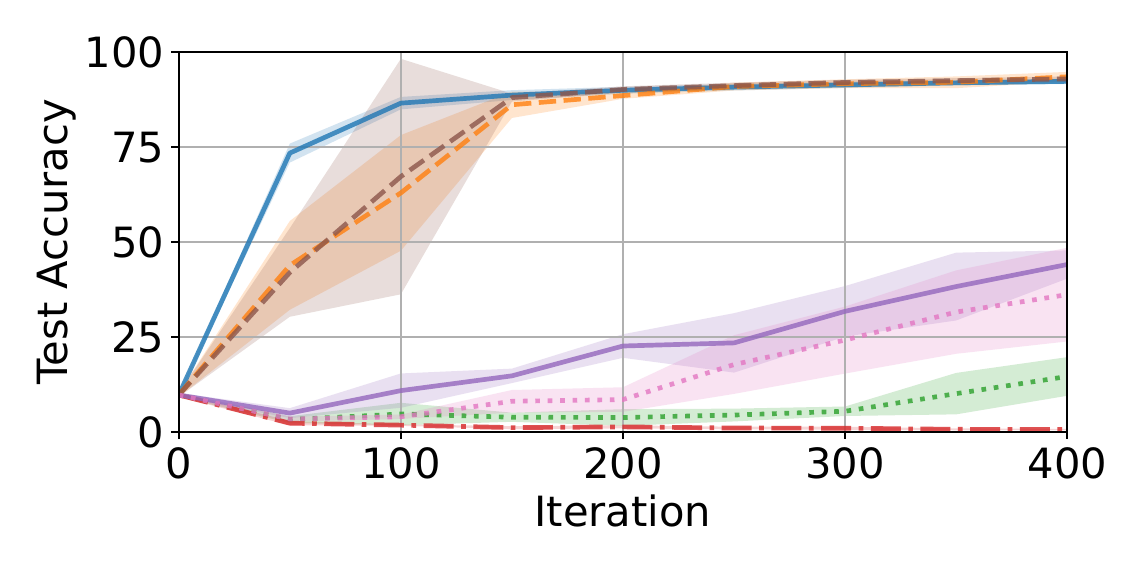}
        \vspace{-4mm}
        \caption{Fashion-MNIST, Label Flipping attack}
    \end{subfigure}
    \vspace{-4mm}
    \caption{Performance of \algoname{} versus standard robust algorithms.%
    There are $f = 5$ malicious workers among $n = 15$ on MNIST and $\alpha = 1$, and $f = 3$ malicious workers among $n = 13$ on Fashion-MNIST and $\alpha = 0.1$.
    The CDP privacy model is considered under example-level DP, and the privacy budget is $(\varepsilon, \delta) = (13.7, 10^{-4})$ throughout.}
    \label{fig_2}
    \vspace{-4mm}
\end{figure*}

As illustrated in Figure~\ref{fig_1}, the performance of DSGD under LDP is significantly lower than under CDP. %
Under SecLDP, \algoname{} achieves performance closely matching CDP and substantially outperforming LDP on both datasets.
For $f = 5$, \algoname{}-SecLDP exhibits identical final accuracies to the adversary-free DSGD baseline under CDP: 83\% on MNIST and 72\% on Fashion-MNIST.
When $f = 10$, \algoname{}-SecLDP has a slight decrease in accuracy compared to DSGD-CDP but remains significantly superior to DSGD-LDP.
Under the stronger ByzLDP threat model, \algoname{} maintains performance comparable to SecLDP for $f = 5$.
On Fashion-MNIST, the difference is negligible, demonstrating effective mitigation of malicious worker collusion.
On MNIST, however, accuracy decreases by 8\% due to the more stringent privacy constraints.
For $f = 10$, the gap between \algoname{}-SecLDP and \algoname{}-ByzLDP is more pronounced, particularly on MNIST (see Figure~\ref{fig_1_b}).
Yet, \algoname{}-ByzLDP still significantly outperforms DSGD-LDP, where training struggles to exceed 30\% accuracy.
These findings underscore the practical utility of SecLDP and ByzLDP, and the robustness of \algoname{} under stringent privacy constraints.
For completeness, we also evaluate \algoname{} under the CDP and LDP privacy models
in Appendix~\ref{app_exp_results_corr}, demonstrating that the performance of \algoname{} matches the DSGD baseline under both models.

\subsection{Comparison with Prior Robust Aggregations}
To highlight the performance of our aggregation, we replace CAF in Algorithm~\ref{algo:robust-dpsgd} with standard alternatives in %
robust distributed learning: coordinate-wise trimmed mean (CWTM) and median (CWMED)~\cite{yin2018byzantine}, geometric median (GM)~\cite{chen2017distributed}, Multi-Krum (MK)~\cite{krum}, and mean around median (Meamed)~\cite{meamed}.
We consider a system with $n - f = 10$ honest workers and $f \in \{1, 3, 5\}$ malicious workers, with heterogeneous distributions with $\alpha \in \{0.1, 1\}$ and an extreme distribution.
Training is conducted with $b = 100$, $C = 5$, and $T = 400$ iterations for both datasets. Figure~\ref{fig_2} presents results for LF and SF attacks with $f = 5$, $\alpha = 1$ on MNIST, and $f = 3$, $\alpha = 0.1$ on Fashion-MNIST. Example-level DP is applied under the CDP privacy model, yielding a privacy budget of $(\varepsilon, \delta) = (13.7, 10^{-4})$ across all configurations.

Figure~\ref{fig_2} demonstrates the clear superiority of \algoname{} (and CAF) over competing aggregations across all configurations, even under the least stringent privacy model (CDP). While MK matches \algoname{} on Fashion-MNIST under the LF attack, its performance significantly deteriorates under the SF attack and on MNIST.
Meamed performs relatively well under LF on MNIST but struggles under LF and SF attacks on Fashion-MNIST and under the SF attack on MNIST, with \algoname{} outperforming it consistently.
Other aggregations, such as CWTM, CWMED, and GM, exhibit poor performance across both datasets and attacks, particularly under LF, where their utility drops significantly and the best competing aggregation among them achieves an accuracy just above 50\%. In contrast, CAF achieves results nearly indistinguishable from adversary-free DSGD under the LF attack on both datasets (90\% in accuracy) and delivers the best performance under SF, reaching an accuracy close to 80\%. This pattern holds consistently across the remaining attacks, as detailed in Appendix~\ref{app_exp_results_caf}.

\section{Conclusion}
\label{sec:conclusion}
Achieving good utility while simultaneously ensuring both robustness against malicious parties and strong privacy in distributed learning typically requires trusting the central server, an often unrealistic assumption.
We address this challenge with \algoname{}, a novel algorithm leveraging shared randomness among workers to enable correlated noise injection prior to robust gradient aggregation.
Even under collusion between malicious workers and an untrusted server, \algoname{} achieves a strong privacy-utility trade-off, significantly outperforming local DP (where there are no trust assumptions) while approaching the performance of central DP (where the server is trusted).  
Our results show that privacy and robustness can coexist without sacrificing utility, yet open questions remain. Notably, what is the optimal utility under SecLDP, and does \algoname{} achieve it? %

\section*{Impact Statement}
This paper presents work whose goal is to advance the field of Machine Learning. There are many potential societal consequences of our work, none of which we feel must be specifically highlighted here.

\bibliography{references}
\bibliographystyle{icml2025}
\clearpage

\newpage
\onecolumn
\appendix

\section*{Organization of the Appendix}

Appendix~\ref{app:privacy} presents the privacy analysis of \algoname{}.
Appendix~\ref{app:robustness} contains the robustness analysis of Proposition~\ref{prop:filter_adapt}. Appendix~\ref{app:ub} provides the convergence analysis of \algoname{}.
Appendix~\ref{app_exp_results} contains additional experimental results.

\section{Privacy Analysis: Proof of Theorem~\ref{th:privacy-general}}
\label{app:privacy}

\begin{definition}
[$\alpha$-Rényi divergence]
Let $\alpha>0, \alpha \neq 1$. The $\alpha$-Rényi divergence between two probability distributions $P$ and $Q$ is defined as
\begin{align*}
    \mathrm{D}_{\alpha}{\left ( P ~\|~ Q \right )} \coloneqq \frac{1}{\alpha-1}\log{\mathbb{E}_{X \sim Q}{\left(\frac{P{(X)}}{Q{(X)}}\right)^\alpha}}.
\end{align*}
\end{definition}

\begin{lemma}[\protect{\cite{gil2013renyi}}]
\label{lem:renyigaussian}
Let $\alpha>0 ,\alpha \neq 1, \mu_1, \mu_2 \in \R^n$, and $\Sigma \in \R^{n \times n}$. Assume that $\Sigma$ is positive definite.
The $\alpha$-Rényi divergence between the multivariate Gaussian distributions $\mathcal{N}(\mu_1, \Sigma)$ and $\mathcal{N}(\mu_2, \Sigma)$ is
\begin{align*}
    \mathrm{D}_{\alpha}{\left(\mathcal{N}(\mu_1, \Sigma)~\|~\mathcal{N}(\mu_2, \Sigma)\right)} = \frac{\alpha}{2} (\mu_1 - \mu_2)^\top \Sigma^{-1} (\mu_1 - \mu_2).
\end{align*}
\end{lemma}

We now define \emph{secret-based local Rényi differential privacy} (SecRDP), a strong variant of SecLDP based on Rényi DP, paraphrasing~\citet{allouah2024privacy}.

\begin{definition}[SecRDP]
    \label{def:rdp}
    Let $\varepsilon \geq 0$, $\delta \in [0, 1]$, $\alpha>1$. Consider a randomized decentralized algorithm $\mathcal{A}_\cS : \mathcal{X}^{m \times n} \to \mathcal{Y}$, which outputs the transcript of all communications, given the full set of secrets $\cS$. Algorithm $\mathcal{A}_\cS$ is said to satisfy $(\alpha,\varepsilon, \mathcal{S}_{\text{known}})$-SecRDP if $\mathcal{A}_\cS$ satisfies $(\alpha, \varepsilon)$-RDP, given that the subset of secrets $\mathcal{S}_{\text{known}} \subseteq \cS$ is revealed to the adversary. That is, for every adjacent datasets $\mathcal{D}, \mathcal{D}' \in \mathcal{X}^{m \times n}$, we have
    \begin{equation*}
        \mathrm{D}_{\alpha}{\left(\cA_{\cS}{(\cD)} ~\middle|~ \cS_{\text{known}} ~\middle\|~\cA_{\cS}{(\cD')} ~\middle|~ \cS_{\text{known}}\right)} \leq \varepsilon,
    \end{equation*}
    where the left-hand side is the Rényi divergence (Definition~\ref{def:rdp}) between the probability distributions of $\mathcal{A}{(\mathcal{D})}$ and $\mathcal{A}{(\mathcal{D}')}$, conditional on the secrets $\mathcal{S}_{\text{all}}$ revealed to the adversary.
    We simply say that $\mathcal{A}_\cS$ satisfies $(\alpha,\varepsilon)$-SecRDP if it satisfies $(\alpha,\varepsilon, \mathcal{S}_{\text{known}})$-SecRDP and $\mathcal{S}_{\text{known}}$ is clear from the context.
\end{definition}
Both SecLDP and SecRDP preserve the properties of DP and RDP, respectively, since these relaxations only condition the probability space of the considered distributions.

\begin{lemma}[RDP Adpative Composition, \cite{mironov2017renyi}]
\label{lem:composition}
If $\M_1$ that takes the dataset as input is $(\alpha,\varepsilon_1)$-RDP, and $\M_2$ that takes the dataset and the output of $\M_1$ as input is $(\alpha,\varepsilon_2)$-RDP, then their composition is $(\alpha,\varepsilon_1+\varepsilon_2)$-RDP.
\end{lemma}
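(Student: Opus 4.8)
The plan is to prove the composition bound directly from the definition of R\'enyi divergence, using the chain rule for joint densities together with the fact that $\M_2$'s privacy guarantee holds \emph{uniformly} over its auxiliary input. Fix $\alpha>1$ (the regime used in the SecRDP definition; the case $\alpha\in(0,1)$ is analogous with the sign of $\alpha-1$ reversed) and a pair of adjacent datasets $\D,\D'$. Write the composed mechanism as $\M(\D)=\big(\M_1(\D),\,\M_2(\D,\M_1(\D))\big)$, taking values in $\Y_1\times\Y_2$, and let $P$ and $Q$ denote the laws of $\M(\D)$ and $\M(\D')$. First I would note that finiteness of $\varepsilon_1$ and $\varepsilon_2$ with $\alpha>1$ forces $\M_1(\D)\ll\M_1(\D')$ and $\M_2(\D,z)\ll\M_2(\D',z)$ for every auxiliary input $z$; this lets me fix a common $\sigma$-finite dominating measure and disintegrate, writing $P=p_1\cdot p_{2|1}$ and $Q=q_1\cdot q_{2|1}$, where $p_1,q_1$ are the densities of the first coordinate and $p_{2|1}(\,\cdot\mid y_1),q_{2|1}(\,\cdot\mid y_1)$ the conditional densities of the second coordinate given the first.

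Next I would expand $e^{(\alpha-1)\mathrm{D}_\alpha(P\|Q)}=\E_{(Y_1,Y_2)\sim Q}\big[(P(Y_1,Y_2)/Q(Y_1,Y_2))^\alpha\big]$, substitute the two factorizations so that the integrand splits as $(p_1/q_1)^\alpha\,(p_{2|1}/q_{2|1})^\alpha$, and use Tonelli (every term is nonnegative) to integrate over the second coordinate first with $y_1$ held fixed. The inner integral equals $e^{(\alpha-1)\mathrm{D}_\alpha(\M_2(\D,y_1)\,\|\,\M_2(\D',y_1))}$, which is at most $e^{(\alpha-1)\varepsilon_2}$ since $\M_2$ is $(\alpha,\varepsilon_2)$-RDP for the fixed auxiliary input $y_1$. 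Pulling this uniform constant out of the remaining integral leaves $e^{(\alpha-1)\varepsilon_2}\,\E_{Y_1\sim q_1}\big[(p_1/q_1)^\alpha\big]=e^{(\alpha-1)\varepsilon_2}\,e^{(\alpha-1)\mathrm{D}_\alpha(\M_1(\D)\|\M_1(\D'))}\le e^{(\alpha-1)(\varepsilon_1+\varepsilon_2)}$, using the $(\alpha,\varepsilon_1)$-RDP of $\M_1$. Taking logarithms and dividing by $\alpha-1>0$ gives $\mathrm{D}_\alpha(P\|Q)\le\varepsilon_1+\varepsilon_2$, and since $\D,\D'$ were arbitrary adjacent datasets, the composition is $(\alpha,\varepsilon_1+\varepsilon_2)$-RDP.

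The main obstacle is not the algebra, which is essentially a one-line Tonelli computation, but the measure-theoretic bookkeeping: justifying the product dominating measure, establishing the disintegration $Q=q_1\cdot q_{2|1}$ (a regular-conditional-probability argument), and dealing with null sets where a conditional density vanishes or absolute continuity fails, all of which are controlled by the finiteness of $\varepsilon_1$ and $\varepsilon_2$. For the purposes of this paper it suffices to invoke the standard statement under its usual regularity hypotheses and cite \cite{mironov2017renyi}; in every place where we apply the lemma, $\M_1$ and $\M_2$ produce vectors with explicit Gaussian densities, so the factorization is immediate and no delicate argument is required.
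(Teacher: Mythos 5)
The paper does not prove this lemma; it is stated as a known result and cited to \citet{mironov2017renyi}. Your proof is correct and is essentially the standard argument from that reference (factor the joint density, integrate out the second coordinate using the uniform-over-auxiliary-input RDP bound on $\M_2$, then apply the RDP bound on $\M_1$), including the right observation that conditioning on $Y_1=y_1$ aligns the auxiliary input on both sides so that $\M_2$'s pointwise guarantee applies.
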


\begin{lemma}[RDP to DP conversion, \cite{mironov2017renyi}]
\label{lem:rdp-dp}
If $\M$ is $(\alpha,\varepsilon)$-RDP, then $\M$ is $(\varepsilon+\frac{\log{(1/\delta)}}{\alpha-1},\delta)$-DP for all $\delta \in (0,1)$.
\end{lemma}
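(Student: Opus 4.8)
The plan is to prove the conversion by a thresholding argument on the privacy-loss ratio, following~\cite{mironov2017renyi}. Fix a pair of adjacent datasets $\D, \D'$ and an arbitrary measurable event $S$ in the output space, and let $P, Q$ denote the distributions of $\M(\D), \M(\D')$ respectively. Since $\M$ is $(\alpha,\varepsilon)$-RDP, $\mathrm{D}_{\alpha}(P \| Q) \leq \varepsilon$; for finite $\varepsilon$ this forces $P \ll Q$, so the density $r \coloneqq dP/dQ$ is well defined, and unwinding the definition of Rényi divergence the hypothesis reads $\E_{x\sim Q}[r(x)^{\alpha}] \leq e^{(\alpha-1)\varepsilon}$, equivalently (by a change of measure) $\E_{x\sim P}[r(x)^{\alpha-1}] \leq e^{(\alpha-1)\varepsilon}$.

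Next I would put $\varepsilon' \coloneqq \varepsilon + \tfrac{\log(1/\delta)}{\alpha-1}$ and split $S$ by the size of $r$: let $A \coloneqq \{x : r(x) \leq e^{\varepsilon'}\}$. On the good part, a change of measure gives $P(S \cap A) = \E_{x\sim Q}[r(x)\,\mathbf{1}[x \in S\cap A]] \leq e^{\varepsilon'}\,Q(S\cap A) \leq e^{\varepsilon'}\,Q(S)$. On the bad part, bound $P(S\cap A^{c}) \leq P(A^{c})$ and apply Markov's inequality to the nonnegative variable $r^{\alpha-1}$ under $P$ — here $\alpha>1$ is exactly what makes $z\mapsto z^{\alpha-1}$ increasing, so $A^{c} = \{x : r(x)^{\alpha-1} > e^{(\alpha-1)\varepsilon'}\}$ — obtaining $P(A^{c}) \leq e^{-(\alpha-1)\varepsilon'}\,\E_{x\sim P}[r(x)^{\alpha-1}] \leq e^{(\alpha-1)(\varepsilon-\varepsilon')} = \delta$ by the choice of $\varepsilon'$. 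Summing the two estimates yields $P(S) \leq e^{\varepsilon'}Q(S) + \delta$; since $\D,\D',S$ were arbitrary, $\M$ is $(\varepsilon',\delta)$-DP, which is the claim.

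The argument has no real obstacle — it is entirely the threshold-and-Markov decomposition above — so the only care needed is measure-theoretic bookkeeping: confirming that $\mathrm{D}_{\alpha}(P\|Q) \leq \varepsilon < \infty$ implies $P \ll Q$ (so that $r$ and the change-of-measure steps are legitimate), recording the identity $\E_{x\sim Q}[r(x)^{\alpha}] = \E_{x\sim P}[r(x)^{\alpha-1}]$ used to move the moment bound from $Q$ to $P$, and noting that strictly $\alpha > 1$ is needed both for the Markov step to point in the right direction and for the resulting bound $\tfrac{\log(1/\delta)}{\alpha-1}$ to be finite.
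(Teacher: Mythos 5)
Your proof is correct and is essentially the standard threshold-and-Markov argument from \cite{mironov2017renyi} (Proposition~3 there); the paper states this lemma as a citation without reproducing a proof, so there is nothing in the paper itself to compare against. All the steps check out: finiteness of $\mathrm{D}_\alpha(P\|Q)$ for $\alpha>1$ does force $P \ll Q$, the change-of-measure identity $\E_{Q}[r^\alpha] = \E_{P}[r^{\alpha-1}]$ is correct, monotonicity of $z\mapsto z^{\alpha-1}$ for $\alpha>1$ lets you rewrite the bad event as a tail event for $r^{\alpha-1}$, and Markov then yields exactly $\delta$ with your choice of $\varepsilon'$.
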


We first prove a lemma analyzing a single iteration of Algorithm~\ref{algo:robust-dpsgd} within the SecRDP formalism.

\begin{restatable}{lemma}{privatestep}
\label{lem:rdp}
Let $\alpha > 1$ and $q\leq f$.
Each iteration of Algorithm~\ref{algo:robust-dpsgd} satisfies $(\alpha, \alpha \varepsilon)$-SecRDP (Definition~\ref{def:rdp}) against the honest-but-curious server, colluding with $q$ malicious workers, where
\begin{align}
\label{eq:privacy}
 \varepsilon = \frac{2 C^2}{(n-q) \sigmacor^2 + \sigmacdp^2} \left[1 + \frac{\sigmacor^2}{(f-q)\sigmacor^2 + \sigmacdp^2}\right]. 
\end{align}
\end{restatable}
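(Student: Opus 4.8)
The plan is to analyze a single iteration of Algorithm~\ref{algo:robust-dpsgd} by isolating the noise contribution that the curious server cannot remove, and then computing the exact Rényi divergence between the distributions of the transcript under two adjacent datasets. Fix an honest worker $i^\star$ whose dataset is changed, and recall that the transcript consists of the momentums $m_t^{(1)}, \ldots, m_t^{(n)}$; since momentum is a fixed linear recursion of the $\tilde g_t^{(j)}$ and past messages, for a single iteration it suffices to reason about the joint distribution of $(\tilde g_t^{(j)})_{j \in \H}$ conditioned on the revealed secrets $\cS_{\text{known}} = \{S_{ij} : i \in [n]\setminus\H,\ j \in [n]\}$. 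Changing $\D_{i^\star}$ shifts the clipped gradient $g_t^{(i^\star)}$ by some vector $\Delta$ with $\norm{\Delta} \leq 2C$ (two clipped gradients of norm at most $C$); all other honest gradients are unchanged. So the problem reduces to: what is $\mathrm{D}_\alpha$ between $\cN(\mu,\Sigma)$ and $\cN(\mu+e_{i^\star}\otimes\Delta,\Sigma)$, where $\Sigma$ is the covariance of the honest noise vector $(\overline v^{(j)}_t + \sum_{k\neq j} v^{(jk)}_t)_{j\in\H}$ after conditioning on the known secrets (i.e., after the server subtracts off all correlated noise terms it can reconstruct)?

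The key structural step is to write $\Sigma$ explicitly. Each honest coordinate-block gets variance $\sigmacdp^2 I_d$ from its independent noise, plus $\sigmacor^2 I_d$ for each of the $n-1$ correlated terms; but the server, knowing $S_{i^\star j}$ for the $q$ colluding malicious $j$, can cancel those $q$ correlated terms, and likewise the correlated terms between two honest workers appear with opposite sign and create off-diagonal structure. Concretely, per coordinate (the covariance is block-diagonal across the $d$ coordinates, each block identical), the honest-by-honest covariance matrix has the form $a I_{n-f} + \sigmacor^2 (\text{something})$ encoding the pairwise cancellations among honest workers, where $a = \sigmacdp^2 + (f-q)\sigmacor^2 + \text{(correlated terms with non-colluding parties)}$. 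I would diagonalize this matrix using the $\ones$/orthogonal-complement decomposition: the pairwise-canceling structure means $\Sigma$ restricted to honest workers is a rank-one perturbation of a scaled identity, so its inverse is computable via Sherman–Morrison. Then by Lemma~\ref{lem:renyigaussian}, $\mathrm{D}_\alpha = \tfrac{\alpha}{2}(e_{i^\star}\otimes\Delta)^\top \Sigma^{-1}(e_{i^\star}\otimes\Delta) = \tfrac{\alpha}{2}\norm{\Delta}^2 \cdot [\Sigma^{-1}]_{i^\star i^\star}$, and the diagonal entry of the inverse is exactly the expression $\tfrac{1}{(n-q)\sigmacor^2 + \sigmacdp^2}\big[1 + \tfrac{\sigmacor^2}{(f-q)\sigmacor^2+\sigmacdp^2}\big]^{-1}$... wait — I should be careful about the direction: the bound in \eqref{eq:privacy} has the bracket as a \emph{multiplicative} factor, so $[\Sigma^{-1}]_{i^\star i^\star}$ should come out as $\tfrac{1}{(n-q)\sigmacor^2+\sigmacdp^2}\big[1+\tfrac{\sigmacor^2}{(f-q)\sigmacor^2+\sigmacdp^2}\big]$; bounding $\norm{\Delta}^2 \leq 4C^2$ then yields $\varepsilon$ as stated (the stated $\varepsilon$ already absorbs the factor $2C^2 = \tfrac12 (2C)^2$ and the $\tfrac\alpha2$ is pulled out as $\alpha\varepsilon$).

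The main obstacle is getting the conditioning on $\cS_{\text{known}}$ exactly right and correctly identifying $\Sigma$ — in particular accounting for the fact that honest workers also share correlated noise with the $f-q$ \emph{non-colluding} malicious workers, whose secrets the server does \emph{not} know. That correlated noise behaves, from the server's viewpoint, like an extra independent noise source common to... no — each honest worker shares a \emph{distinct} seed with each non-colluding malicious worker, so these are $f-q$ independent $\cN(0,\sigmacor^2 I_d)$ terms per honest worker, with no cancellation visible to the server (the canceling counterpart sits at a malicious worker whose message the server cannot trust to subtract). This is precisely why the denominator $(f-q)\sigmacor^2 + \sigmacdp^2$ appears. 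I would therefore partition the $n-1$ correlated terms of worker $i^\star$ into: (i) $q$ terms with colluding malicious — canceled, contribute $0$; (ii) $f-q$ terms with non-colluding malicious — uncanceled, contribute as independent noise; (iii) $n-f-1$ terms with other honest workers — these survive with the pairwise-canceling sign structure and give the off-diagonal coupling. Carefully tracking these three groups through the Sherman–Morrison computation is the crux; the rest is bookkeeping. Finally I would note the per-iteration divergence is $\tfrac\alpha2\cdot 4C^2 [\Sigma^{-1}]_{i^\star i^\star}$, observe that post-processing (the CAF aggregation and the momentum recursion do not touch raw data, only published messages) preserves SecRDP, and conclude the single-iteration bound $(\alpha,\alpha\varepsilon)$ with $\varepsilon$ exactly as in \eqref{eq:privacy}.
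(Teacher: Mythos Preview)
Your proposal is correct and follows essentially the same route as the paper: reduce the single-iteration transcript to the joint Gaussian $(\tilde g_t^{(j)})_{j\in\H}$ after subtracting the noise reconstructible from the revealed secrets, write the resulting covariance as a rank-one perturbation of a scaled identity (the paper expresses this via the Laplacian of the complete graph on $\H$, $\Sigma = \sigmacor^2 L + ((f-q)\sigmacor^2+\sigmacdp^2)I$, which expands to exactly your $((n-q)\sigmacor^2+\sigmacdp^2)I - \sigmacor^2\mathbf{1}\mathbf{1}^\top$), invert it via Sherman--Morrison, and apply Lemma~\ref{lem:renyigaussian} with sensitivity $2C$ to read off $[\Sigma^{-1}]_{i^\star i^\star}$. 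Your three-way partition of the correlated terms (colluding malicious: canceled; non-colluding malicious: extra independent $\sigmacor^2$ each; other honest: pairwise-canceling off-diagonal) is exactly the right bookkeeping and matches the paper's derivation; the only cosmetic difference is that the paper reduces to $d=1$ up front rather than carrying the tensor $e_{i^\star}\otimes\Delta$.
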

\begin{proof}
Let $\alpha>1$, $q>1$, and $\mathcal{I} \subseteq [n] \setminus \cH$ be an arbitrary group of $\card{\mathcal{I}} = q$ malicious workers.
Recall that we denote by $\mathcal{S}_\mathcal{I} \coloneqq \left\{s_{jk} \colon j \in [n],k \in \mathcal{I} \right\}$ the set of secrets revealed by the malicious workers in $\mathcal{I}$ to the server.
We will prove that Algorithm~\ref{algo:robust-dpsgd} satisfies $(\alpha, \varepsilon, \mathcal{S}_{\mathcal{I}})$-SecRDP, which protects against the honest-but-curious server, in addition to $q$ colluding malicious workers.
For ease of exposition, we consider the one-dimensional case $d=1$. Extending the proof to the general case is straightforward.

Formally, at each iteration of Algorithm~\ref{algo:robust-dpsgd}, users possess private inputs (gradients) in the form of vector $x \in [-C,C]^n$, given that gradients are clipped at threshold $C$. Each user $i \in [n]$ shares the following privatized quantity:
\begin{align}
    \label{eq:privatized1}
    \widetilde{x}_i \coloneqq x_i + \sum_{\substack{j=1 \\ j \neq i}}^n v_{ij} + \bar{v}_{i},
\end{align}
where $v_{ij} = -v_{ji} \sim \mathcal{N}(0, \sigmacor^2)$ for all $j \in [n]$, and $\bar{v}_{i} \sim \mathcal{N}(0, \sigmacdp^2)$. 

Our goal is to show that the mechanism producing $\widetilde{X}_{\mathcal{H}} \coloneqq \big[\widetilde{x}_i\big]_{i \in \mathcal{H}}$ satisfies SecRDP when a single entry of $X \coloneqq \big[x_i\big]_{i \in \mathcal{H}}$ is arbitrarily changed; i.e., one user's input differs.
To do so, we first rewrite~\eqref{eq:privatized1} to discard the noise terms known to the colluding curious users who can simply substract them to get for every $i \in \mathcal{H}$: 
\begin{align}
    \label{eq:privatized2}
    \widetilde{x}_i &= x_i + \sum_{\substack{j = 1 \\ j \notin \mathcal{I} \cup \{i\}}}^n v_{ij} + \bar{v}_{i} = x_i + \sum_{j \in \mathcal{H}} v_{ij} + \sum_{\substack{j = 1 \\ j \notin \cH \cup \cI}}^n v_{ij}  + \bar{v}_{i}
    = x_i + \sum_{j \in \mathcal{H}} v_{ij} + \overline{u}_i,
\end{align}
where $\overline{u}_i \coloneqq \sum_{\substack{j = 1 \\ j \notin \cH \cup \cI}}^n v_{ij}  + \bar{v}_{i} \sim \cN(0, (f-q) \sigmacor^2 + \sigmacdp^2)$.
We now rewrite the above in matrix form as:
\begin{align}
\label{eq:vectorform}
    \widetilde{X}_{\mathcal{H}} = X_{\mathcal{H}} + K N_{\mathcal{H}} + \bar{N},
\end{align}
where $K \in \R^{(n-f) \times \tfrac{(n-f)(n-f-1)}{2}}$ is the oriented incidence matrix of the complete graph over $\cH$, $N_{\mathcal{H}} = [v_{ij}]_{1 \leq i < j \leq n-f} \in \R^{\tfrac{(n-f)(n-f-1)}{2}}$ is the vector of pairwise noises, and $\bar{N} \coloneqq [\overline{u}_i]_{i \in \cH}$.
Now, consider two input vectors $X_A, X_B \in [-C,C]^{n-f}$ which differ maximally in an arbitrary coordinate $i \in [n-f]$ without loss of generality: 
\begin{equation}
    \label{eq:xa-xb}
    X_A - X_B = 2C e_i \in \R^{n-f},
\end{equation}
where $e_i$ is the vector of $\R^{n-f}$ where the only nonzero element is $1$ in the $i$-th coordinate.

We will then show that the $\alpha$-Rényi divergence between $\widetilde{X}_A$ and $\widetilde{X}_B$, which are respectively produced by input vectors $X_A$ and $X_B$, is bounded.
To do so, by looking at Equation~\eqref{eq:vectorform}, we can see that $\widetilde{X}_A, \widetilde{X}_B$ follow a multivariate Gaussian distribution of means $X_A, X_B$ respectively and of variance
\begin{align}
    \label{eq:Sigma}
    \Sigma \coloneqq \mathbb{E} (\widetilde{X}_A - X_A) (\widetilde{X}_{A} - X_A)^\top 
    = \mathbb{E} (\widetilde{X}_B -  X_B)(\widetilde{X}_{B} -  X_B)^\top
    = \sigmacor^2 L + \left( (f-q) \sigmacor^2 + \sigmacdp^2 \right) I_{n-q}  \in \R^{(n-f) \times (n-f)},
\end{align}
where $L = K K^\top \in \R^{(n-f) \times (n-f)}$ is the Laplacian matrix of the complete graph over $\cH$.
Note that $\Sigma$ is positive definite when $(f-q)\sigmacor^2 +\sigmacdp^2>0$, because $L$ is positive semi-definite.
Also, recalling the expression $L \coloneqq (n-f) I_{n-f} - \1 \1^\top$ of the Laplacian of the complete graph over $\cH$, we have
\begin{align}
    \Sigma &= \sigmacor^2 L + \left( (f-q) \sigmacor^2 + \sigmacdp^2 \right) I_{n-f}
    = \sigmacor^2 \left( (n-f) I_{n-f} - \1 \1^\top \right) + \left( (f-q) \sigmacor^2 + \sigmacdp^2 \right) I_{n-f} \nonumber\\
    &= \left( (n-q) \sigmacor^2 + \sigmacdp^2 \right) I_{n-f} - \sigmacor^2 \1\1^\top.
\end{align}
Moreover, recall the Sherman-Morrison formula, i.e., for any matrix $A$ and vectors $u,v$ such that $v^\top A^{-1} u \neq -1$, we have $(A+uv^\top)^{-1} = A^{-1} - \tfrac{A^{-1}u v^\top A^{-1}}{1+v^\top A^{-1} u}$.
Using this formula, we have
\begin{align}
    \Sigma^{-1} &= \left[\left( (n-q) \sigmacor^2 + \sigmacdp^2 \right) I_{n-f} - \sigmacor^2 \1\1^\top \right]^{-1} \nonumber \\
    &= \frac{1}{(n-q) \sigmacor^2 + \sigmacdp^2}I_{n-f} + \frac{1}{\left[(n-q) \sigmacor^2 + \sigmacdp^2\right]^2} \frac{\sigmacor^2}{1-\frac{(n-f)\sigmacor^2}{(n-q) \sigmacor^2 + \sigmacdp^2}}\1\1^\top \nonumber\\
    &= \frac{1}{(n-q) \sigmacor^2 + \sigmacdp^2}I_{n-f} + \frac{1}{(n-q) \sigmacor^2 + \sigmacdp^2} \frac{\sigmacor^2}{(f-q)\sigmacor^2 + \sigmacdp^2}\1\1^\top.
\end{align}

Finally, following Lemma~\ref{lem:renyigaussian}, the $\alpha$-Rényi divergence between the distributions of $\widetilde{X}_A$ and $\widetilde{X}_B$ is
\begin{align}
\label{eq:renyi1}
D_\alpha(\widetilde{X}_A~\|~\widetilde{X}_B) &= \frac{\alpha}{2} (X_A - X_B)^\top \Sigma^{-1} (X_A - X_B)\nonumber\\
&=2\alpha C^2 \left[\frac{1}{(n-q) \sigmacor^2 + \sigmacdp^2} + \frac{1}{(n-q) \sigmacor^2 + \sigmacdp^2} \frac{\sigmacor^2}{(f-q)\sigmacor^2 + \sigmacdp^2}\right]\nonumber\\
&=\frac{2\alpha C^2}{(n-q) \sigmacor^2 + \sigmacdp^2} \left[1 + \frac{\sigmacor^2}{(f-q)\sigmacor^2 + \sigmacdp^2}\right].
\end{align}

\end{proof}

We now prove the SecLDP property of the full Algorithm~\ref{algo:robust-dpsgd} by using the composition properties of RDP and converting to DP~\cite{mironov2017renyi}.
 \privacyfull*
\begin{proof}
    Recall from Lemma~\ref{lem:rdp} that each iteration of Algorithm~\ref{algo:robust-dpsgd} satisfies $(\alpha, \alpha \varepsilon_{\text{step}})$-SecRDP against collusion at level $q$ for every $\alpha>1$ where
\begin{align}
    \varepsilon_{\text{step}} \leq \frac{2 C^2}{(n-q) \sigmacor^2 + \sigmacdp^2} \left[1 + \frac{\sigmacor^2}{(f-q)\sigmacor^2 + \sigmacdp^2}\right].
\end{align}
Thus, following the composition property of RDP from Lemma~\ref{lem:rdp-dp}, the full Algorithm~\ref{algo:robust-dpsgd} satisfies $(\alpha, T \alpha \varepsilon_{\text{step}})$-SecRDP for any $\alpha>1$. From Lemma~\ref{lem:rdp-dp}, we deduce that Algorithm~\ref{algo:robust-dpsgd} satisfies $(\varepsilon'{(\alpha)}, \delta)$-SecLDP for any $\delta \in (0,1)$ and any $\alpha > 1$, where
\begin{align*}
    \varepsilon'{(\alpha)} = T \alpha \varepsilon_{\text{step}} + \frac{\log(1/\delta)}{\alpha-1} 
    \leq \alpha T  \frac{2 C^2}{(n-q) \sigmacor^2 + \sigmacdp^2} \left[1 + \frac{\sigmacor^2}{(f-q)\sigmacor^2 + \sigmacdp^2}\right] + \frac{\log(1/\delta)}{\alpha-1}.
\end{align*}
Optimizing the above bound over $\alpha>1$ yields the solution $$\alpha_\star = 1 + \tfrac{\sqrt{\log(1/\delta) \left( (n-q) \sigmacor^2 + \sigmacdp^2\right)}}{C\sqrt{2T \left( 1 + \frac{\sigmacor^2}{(f-q)\sigmacor^2 + \sigmacdp^2}  \right)}},$$ which in turn leads to the following bound
\begin{align*}
    \varepsilon_\star = \varepsilon'{(\alpha_\star)} 
    \leq T  \frac{2 C^2}{(n-q) \sigmacor^2 + \sigmacdp^2} \left[1 + \frac{\sigmacor^2}{(f-q)\sigmacor^2 + \sigmacdp^2}\right] + 2 C\sqrt{2 T \log{(1/\delta)} \frac{1 + \frac{\sigmacor^2}{(f-q)\sigmacor^2 + \sigmacdp^2}}{(n-q) \sigmacor^2 + \sigmacdp^2} }.
\end{align*}
Now, recall that we assume
\begin{align*}
    \frac{1}{(n-q) \sigmacor^2 + \sigmacdp^2} \left[1 + \frac{\sigmacor^2}{(f-q)\sigmacor^2 + \sigmacdp^2}\right] \leq \frac{\varepsilon^2}{16 C^2 T \log(1/\delta)}.
\end{align*}
Therefore, using the assumption $\varepsilon \leq \log{(1/\delta)}$, Algorithm~\ref{algo:robust-dpsgd} satisfies $(\varepsilon_\star, \delta)$-SecLDP where
\begin{align*}
    \varepsilon_\star \leq \frac{\varepsilon^2}{8\log{(1/\delta)}} + \frac{\varepsilon}{\sqrt{2}} \leq \varepsilon.
\end{align*}
This concludes the proof.
\end{proof}
\clearpage

\section{Robustness Analysis: Proof of Proposition~\ref{prop:filter_adapt}}
\label{app:robustness}

\filter*
\begin{proof}
Let $n \geq 1, 0 \leq f < n/2$, $x_1, \ldots, x_n \in \R^n$, and $S \subseteq [n], \card{S}=n-f$.
Denote $\overline{x}_S \coloneqq \frac{1}{\card{S}}\sum_{i \in S}x_i$ and $\sigma_S^2 \coloneqq \lambda_{\max}{\left(\tfrac{1}{\card{S}} \sum_{i \in S}(x_i - \overline{x}_S)(x_i - \overline{x}_S)^\top\right)}$.

First, we remark that Algorithm~\ref{algo:filter_adapt} terminates after at most $2f$ iterations. Indeed, at each iteration, at least one weight $c_i, i \in [n]$, becomes zero---this is true for any $i \in [n]$ such that $i \in \argmax_{j \in [n]} \iprod{\hat{v}_c}{x_j - \hat{\mu}_c}^2$---for the remaining iterations. Therefore, since the weights $c_i, i \in [n],$ are all in $[0,1]$, it is guaranteed that $\sum_{i=1}^n c_i < n-2f$ after $2f+1$ iterations.

Second, we show that there exists an iteration such that the maximum eigenvalue $\hat{\sigma}_c^2$ is at most $\tfrac{2n(n-f)}{(n-2f)^2} \sigma_S^2$.
For the sake of contradiction, assume that for every iteration we have $\hat{\sigma}_c^2 > \tfrac{2n(n-f)}{(n-2f)^2} \sigma_S^2$. It is well-known from the analysis of the original Filter algorithm~\cite{diakonikolas2017being,steinhardt2018resilience} (see~\citep[Appendix~E.3]{zhu2022robust} for details) that the following invariant thus holds:
\begin{align}
    \sum_{i \in S} (1 - c_i) \leq \sum_{i \in [n] \setminus S} (1 - c_i).
\end{align}
That is, the mass removed from the points in $S$ is never greater than that removed from the points outside $S$.
Importantly, manipulating the bound above yields:
\begin{align}
    \sum_{i=1}^n c_i \geq n-2f + 2\sum_{i \in [n]\setminus S} c_i \geq n-2f.
\end{align}
This is exactly the opposite of the termination condition, i.e., the algorithm does not terminate. This is a contradiction, since we have shown first that the algorithm terminates.
Therefore, we have shown that there exists an iteration such that the maximum eigenvalue $\hat{\sigma}_c^2$ is at most $\tfrac{2n(n-f)}{(n-2f)^2} \sigma_S^2$.

Finally, recall that the output $\mu_\star$ is the weighted mean of the original inputs with respect to weights corresponding to $\sigma_\star^2$ the smallest maximum eigenvalue across the algorithm. Thus, thanks to the last shown result, we have $\sigma_\star^2 \leq \tfrac{2n(n-f)}{(n-2f)^2} \sigma_S^2$.
Therefore, using~\citep[Lemma~2.2]{zhu2022robust}, we have
\begin{equation}
    \norm{\mu_\star - \overline{x}_S} \leq \sqrt{\frac{f}{n-f}} \left( \sigma_\star + \sigma_S \right) \leq \sqrt{\frac{f}{n-f}} \left( 1 + \frac{\sqrt{2n(n-f)}}{n-2f} \right) \sigma_S.
\end{equation}
We conclude by taking squares and using Jensen's inequality.
\end{proof}

CAF draws inspiration from recent algorithmic advances in robust statistics~\cite{diakonikolas2017being}. However, a key distinction is that CAF does not rely on knowing a bound on the covariance of honest inputs, which has been a crucial requirement in the design of efficient robust mean estimators~\cite{diakonikolas2017being,steinhardt2018resilience}.
In the aforementioned literature, knowing such a bound allows for an alternative termination condition in Algorithm~\ref{algo:filter_adapt}, where the algorithm halts once the weighted covariance is sufficiently small relative to the known bound, and the final output is computed as the mean weighted by the last iteration’s computed weights~\cite{diakonikolas2017being}.

\clearpage
\section{Utility Analysis: Proof of Theorem~\ref{thm:convergence} and Corollary~\ref{cor:tradeoff}}
\label{app:ub}

\subsection{Preliminaries}

We recall that we assume the loss function to be differentiable everywhere, throughout the paper.
\begin{definition}[$L$-smoothness]
A function $\loss \colon \R^d \to \R$ is $L$-smooth if, for all $\theta, \theta' \in \R^d$, we have 
\begin{align*}
   \loss(\theta') - \loss(\theta)  - \lin{\nabla \loss(\theta),\theta'-\theta}
    \leq \frac{L}{2} \norm{\theta' - \theta}^2.
\end{align*}
\end{definition}
The above is equivalent to, for all $\theta, \theta' \in \R^d$, having 
$\norm{\nabla\loss(\theta') - \nabla\loss(\theta)}
    \leq L \norm{\theta' - \theta}$ (see, e.g., \citep{nesterov2018lectures}).

\begin{definition}[$\mu$-strong convexity]
A function $\loss \colon \R^d \to \R$ is $\mu$-stongly convex if, for all $\theta, \theta' \in \R^d$, we have 
\begin{align*}
   \loss(\theta') - \loss(\theta)  - \lin{\nabla \loss(\theta),\theta'-\theta}
    \geq \frac{\mu}{2} \norm{\theta' - \theta}^2.
\end{align*}
\end{definition}
    Moreover, we recall that strong convexity implies the Polyak-Lojasiewicz (PL) inequality~\citep{karimi2016linear} $2\mu \left(\loss(\theta) - \loss_\star \right)
    \leq \norm{\nabla \loss(\theta)}^2$.
    Note that a function satisfies $L$-smoothness and $\mu$-strong convexity inequality simultaneously only if $\mu \leq L$.

\subsection{Proof Outline}
\label{sec:resultsskeleton}
Our analysis of \algoname{} (\cref{algo:robust-dpsgd}), is inspired from standard analyses in non-private robust distributed learning \cite{Karimireddy2021,farhadkhani2022byzantine,allouah2023privacy}.
We include all proofs for completeness.

\noindent \paragraph{Notation.} Recall that for each step $t$, for each honest worker $w_i$,
\begin{align}
    &m_t^{(i)} = \beta_{t-1} m_{t-1}^{(i)} + (1-\beta_{t-1}) \tilde{g}_t^{(i)} \label{eqn:mmt_i},\\
    &\tilde{g}_t^{(i)} = g_t^{(i)} + \sum_{\substack{j=1 \\ j \neq i}}^n v^{(ij)}_t + \overline{v}^{(i)}_t ; \, \, \,  \overline{v}^{(i)}_t \sim \cN(0, \sigmacdp^2 I_d), v^{(ij)}_t = -v^{(ji)}_t \sim \cN(0, \sigmacor^2 I_d),
    \label{eq:noise-gradient}
\end{align}
where we initialize $m_0^{(i)} = 0$. Recall that we denote 
\begin{align}
    &R_t \coloneqq \mathrm{CAF}{\left(m_t^{(1)}, \ldots, m_t^{(n)} \right)}, \label{eqn:R}\\
    &\theta_{t+1} = \theta_{t} - \gamma_t R_t. \label{eqn:SGD}
\end{align}
Throughout, we denote by $\mathcal{P}_t$ the history from steps $0$ to $t \in \{0, \ldots, T-1\}$:
$\P_t \coloneqq \left\{\weight{0}, \ldots, \, \weight{t}; ~ \mmt{i}{1}, \ldots, \, \mmt{i}{t-1}; i \in [n] \right\}.$ 
By convention, $\P_1 = \{ \weight{0}\}$. We denote by $\condexpect{t}{\cdot}$ and $\expect{\cdot}$ the conditional expectation $\expect{\cdot ~ \vline ~ \P_t}$ and the total expectation, respectively. 
Thus, $\expect{\cdot} = \condexpect{1}{ \cdots \condexpect{T}{\cdot}}$.

\paragraph{Momentum drift.}
\label{sec:drift}
Along the trajectory $\theta_0,\ldots,\theta_{t}$, the honest workers' local momentums may drift away from each other.
The drift has three distinct sources: 
(i) noise injected by the correlated noise scheme,
(ii) gradient dissimilarity induced by data heterogeneity,
and (iii) stochasticity of the mini-batch gradients.
The aforementioned drift of local momentums can be exploited by the adversaries to maliciously bias the aggregation output.
Formally, we will control the growth of the drift $\Delta_t$ between momentums, which we define as
\begin{equation}
    \Delta_t \coloneqq 
    \lambda_{\max}{\left(\frac{1}{\card{\H}} \sum_{i \in \H}(m_t^{(i)} - \overline{m}_t)(m_t^{(i)} - \overline{m}_t)^\top\right)},
    \label{eq:defdrift}
\end{equation}
where $\lambda_{\max}$ denotes the maximum eigenvalue, and $\overline{m}_t \coloneqq \frac{1}{\card{\H}} \sum_{i \in \H} m^{(i)}_t$ denotes the average honest momentum.
We show in Lemma~\ref{lem:drift} below that the growth of the drift $\Delta_t$ of the momentums can be controlled by tuning the momentum coefficient $\beta_t$.
The full proof can be found in \cref{app:lem-drift}.

\begin{restatable}{lemma}{globaldrift}
\label{lem:drift}
Suppose that assumptions \ref{asp:bnd_var} and \ref{asp:bnd_norm} hold. Consider Algorithm~\ref{algo:robust-dpsgd}. For every $t \in \{0, \ldots, T-1\}$, we have
    \begin{align*}
        \expect{\Delta_{t+1}}
    \leq \beta_t \expect{\Delta_t}
    +2(1-\beta_t)^2\left(\sigma_b^2+108\left(n\sigmacor^2 + \sigmacdp^2 \right)(1+\frac{d}{n-f})\right)
    + (1-\beta_t)\gcov^2,
    \end{align*}
    where $\overline{m}_t \coloneqq \frac{1}{\card{\H}} \sum_{i \in \H} m^{(i)}_t$,
$\sigma_b^2 \coloneqq 2(1-\frac{b}{m})\frac{\sigma^2}{b}$,
and $\gcov^2 \coloneqq \sup_{\theta \in \R^d} \sup_{\norm{v}\leq 1} \frac{1}{\card{\H}}\sum_{i \in \H}\iprod{v}{\nabla{\loss_i{(\theta)}} - \nabla{\loss_{\H}{(\theta)}}}^2$.
\end{restatable}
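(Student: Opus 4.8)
\textbf{Proof plan for Lemma~\ref{lem:drift}.}
The plan is to unroll the momentum recursion one step and bound the drift $\Delta_{t+1}$ in terms of $\Delta_t$ plus fresh perturbations. First I would write $m_{t+1}^{(i)} - \overline{m}_{t+1} = \beta_t (m_t^{(i)} - \overline{m}_t) + (1-\beta_t)(\tilde g_{t+1}^{(i)} - \overline{\tilde g}_{t+1})$, where $\overline{\tilde g}_{t+1} \coloneqq \frac{1}{\card{\H}}\sum_{j \in \H}\tilde g_{t+1}^{(j)}$. Since $\Delta_{t+1} = \lambda_{\max}(\cdot)$ of the empirical covariance of the left-hand side and $\lambda_{\max}$ is convex (it is the supremum over unit $v$ of $\frac{1}{\card{\H}}\sum_i \iprod{v}{\cdot}^2$), I would apply the decomposition $\|a+b\|^2 \le \frac{1}{\beta_t}\|a\|^2 \cdot \beta_t + \ldots$; more precisely, use that for any scalars, $\iprod{v}{\beta_t a_i + (1-\beta_t) b_i}^2 \le \beta_t \iprod{v}{a_i}^2 + (1-\beta_t)\iprod{v}{b_i}^2$ by convexity of $x \mapsto x^2$, so that taking the average over $i \in \H$ and then the supremum over unit $v$ gives $\Delta_{t+1} \le \beta_t \Delta_t + (1-\beta_t)\,\Lambda_{t+1}$, where $\Lambda_{t+1} \coloneqq \lambda_{\max}\big(\frac{1}{\card\H}\sum_{i\in\H}(\tilde g_{t+1}^{(i)} - \overline{\tilde g}_{t+1})(\tilde g_{t+1}^{(i)} - \overline{\tilde g}_{t+1})^\top\big)$.

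Next I would take conditional expectation given $\P_{t+1}$ and bound $\expect{\Lambda_{t+1}}$. Splitting $\tilde g_{t+1}^{(i)} = g_{t+1}^{(i)} + w_{t+1}^{(i)}$ with $w_{t+1}^{(i)} \coloneqq \sum_{j \ne i} v_{t+1}^{(ij)} + \overline{v}_{t+1}^{(i)}$ the injected noise, I would further decompose the centered quantity into a clipped-gradient deviation part and a noise part. For the gradient part, $\lambda_{\max}$ of the empirical covariance of $g_{t+1}^{(i)} - \overline{g}_{t+1}$ is controlled, after subtracting and adding $\nabla\loss_i(\theta_{t+1})$ and $\nabla\loss_\H(\theta_{t+1})$, by the heterogeneity term $\gcov^2$ (Assumption~\ref{asp:hetero}) plus the within-worker mini-batch stochasticity $\sigma_b^2 = 2(1-\frac bm)\frac{\sigma^2}{b}$ coming from sampling $b$ points without replacement (Assumption~\ref{asp:bnd_var}); clipping only helps here since $\|\nabla\ell\|\le C$ means clipping is inactive, so $g$ equals the mini-batch average. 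For the noise part, I would bound $\lambda_{\max}$ by the trace, i.e. $\lambda_{\max}(\Sigma) \le \trace{\Sigma} = \frac{1}{\card\H}\sum_{i\in\H}\norm{w_{t+1}^{(i)} - \overline{w}_{t+1}}^2$, and then take expectation: each coordinate of $w_{t+1}^{(i)}$ has variance $(n-1)\sigmacor^2 + \sigmacdp^2 \le n\sigmacor^2 + \sigmacdp^2$, so $\expect{\norm{w_{t+1}^{(i)}}^2} \le d(n\sigmacor^2+\sigmacdp^2)$ and, after accounting for the averaging/centering over $n-f$ honest workers, the noise contribution is $O\big((n\sigmacor^2+\sigmacdp^2)(1 + \frac{d}{n-f})\big)$. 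Combining via $\lambda_{\max}(A+B) \le 2\lambda_{\max}(A) + 2\lambda_{\max}(B)$ (Young's inequality at the level of quadratic forms) yields $\expect{\Lambda_{t+1}} \le 2\big(\sigma_b^2 + 108(n\sigmacor^2+\sigmacdp^2)(1+\frac{d}{n-f})\big) + \gcov^2$, and substituting into $\Delta_{t+1} \le \beta_t\Delta_t + (1-\beta_t)\Lambda_{t+1}$, noting $(1-\beta_t) \le 1$ for the $\gcov^2$ term and keeping $(1-\beta_t)^2$ where the bookkeeping allows, gives the claimed inequality; taking total expectation finishes.

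The main obstacle I anticipate is the careful bookkeeping of constants in the noise term — in particular tracking how the pairwise-canceling structure of the $v^{(ij)}$ interacts with centering by $\overline{\tilde g}_{t+1}$ over the honest set (the correlated noises do \emph{not} cancel within $\H$ since the canceling partners may be malicious), and propagating the worst-case variance $n\sigmacor^2 + \sigmacdp^2$ through the $\lambda_{\max} \le \trace$ step and the $(1+\frac{d}{n-f})$ factor to land on exactly the constant $108$. A secondary subtlety is justifying that clipping can be ignored (Assumption~\ref{asp:bnd_norm} guarantees $\norm{\nabla\ell(\theta;x)}\le C$, hence the per-example gradients, and their mini-batch average, are already within the clipping ball), and that the without-replacement sampling variance bound gives precisely $\sigma_b^2 = 2(1-\frac bm)\frac{\sigma^2}{b}$ rather than the naive $\frac{\sigma^2}{b}$; both are standard but must be stated. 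The recursive structure itself is routine once $\expect{\Lambda_{t+1}}$ is bounded.
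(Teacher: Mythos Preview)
Your plan has two genuine gaps, both of which cost you exactly the factors the lemma claims.

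\textbf{Gap 1: Jensen loses the $(1-\beta_t)^2$.} Applying convexity of $x\mapsto x^2$ to the convex combination $\beta_t a_i + (1-\beta_t)b_i$ gives you $\Delta_{t+1}\le\beta_t\Delta_t+(1-\beta_t)\Lambda_{t+1}$, and there is no way to recover a second factor of $(1-\beta_t)$ in front of the stochastic part of $\Lambda_{t+1}$ afterward. Your sentence ``keeping $(1-\beta_t)^2$ where the bookkeeping allows'' is precisely where the argument breaks: the bookkeeping does \emph{not} allow it. The paper instead expands the square fully after first separating $\tilde g_{t+1}^{(i)}-\overline{\tilde g}_{t+1}$ into a deterministic heterogeneity piece $\nabla\loss_i(\theta_{t+1})-\nabla\loss_\H(\theta_{t+1})$ and a conditionally zero-mean stochastic piece. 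The cross terms between the stochastic piece and the other two pieces are then shown to be nonpositive in expectation; since these cross terms live inside a supremum over $v$, this requires a covering argument (swap $\sup$ and $\E$ at the cost of $9^d$, then observe the conditional expectation is zero). Once those cross terms are gone, the stochastic piece keeps its coefficient $(1-\beta_t)^2$ from the square expansion, while only the deterministic heterogeneity piece gets merged with the momentum piece via $2ab\le a^2+b^2$, producing $\beta_t^2+\beta_t(1-\beta_t)=\beta_t$ and $(1-\beta_t)^2+\beta_t(1-\beta_t)=(1-\beta_t)$ as stated.

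\textbf{Gap 2: the trace bound is too crude.} Bounding $\lambda_{\max}$ by the trace for the injected noise gives $\E\big[\tfrac{1}{\card\H}\sum_{i\in\H}\norm{w_{t+1}^{(i)}-\overline w_{t+1}}^2\big]$, which is of order $d(n\sigmacor^2+\sigmacdp^2)$, not $(n\sigmacor^2+\sigmacdp^2)(1+\tfrac{d}{n-f})$. Centering over the honest set only subtracts $\E\norm{\overline w_{t+1}}^2$, which is a lower-order correction; it does not divide the $d$-factor by $n-f$. To obtain the claimed $(1+\tfrac{d}{n-f})$ one must exploit the Gaussianity of the noise: the paper proves a separate concentration lemma showing that for i.i.d.\ Gaussians $\xi^{(1)},\ldots,\xi^{(k)}\sim\cN(0,\sigma^2 I_d)$ one has $\E\big[\sup_{\norm v\le1}\tfrac1k\sum_i\iprod{v}{\xi^{(i)}}^2\big]\le 36\sigma^2(1+\tfrac dk)$, via a covering net plus the sub-Gaussian moment generating function. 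This is then applied (after decoupling the pairwise-correlated terms into independent collections) with $k$ of order $n-f$, which is where the constant $108$ ultimately comes from. Your trace route cannot reach the stated bound.
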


The dimension factor $d$ due to correlated and uncorrelated noises is divided by $n-f$, which would not have been possible without leveraging the Gaussian nature of the noise. 
To do this, we use a concentration argument on the empirical covariance matrix of Gaussian random variables, stated in \cref{lem:concentration}.
Moreover, the correlated noise magnitude $\sigmacor^2$ has a multiplicative factor $n$, which essentially reflects the fact that a correlated noise term is added per worker, for each worker.
Finally, the remaining term $\gcov^2$ of the upper bound is only due to data heterogeneity, and also appears in~\citet{allouah2023privacy}.
Similar to the aforementioned work, an important distinction from \citet{karimireddy2022byzantinerobust} is that $\gcov^2$ is a tighter bound on heterogeneity, compared to $G^2$ the bound on the average squared distance from \cref{asp:hetero}.
This is because the drift $\Delta_t$ is not an average squared distance, but rather a bound on average squared distances of every projection on the unit ball.
Controlling this quantity requires a covering argument (stated in \cref{lem:cover}).

\paragraph{Momentum deviation.}
\label{sec:deviation}
Next, we study the momentum deviation; i.e., the distance between the average honest momentum $\overline{m}_t$ and the true gradient $\nabla \loss_{\H}(\weight{t})$ in an arbitrary step $t$. Specifically, we define momentum {\em deviation} to be
\begin{align}
    \dev{t} \coloneqq \overline{m}_t - \nabla \loss_{\H}\left( \weight{t} \right). \label{eqn:dev}
\end{align}

Also, we introduce the error between the aggregate $R_t$  and $\overline{m}_t \coloneqq \frac{1}{\card{\H}} \sum_{i \in \H} m^{(i)}_t$ the average momentum of honest workers for the case. Specifically, when defining the error
\begin{align}
    \drift{t} \coloneqq R_t - \overline{m}_t, \label{eqn:drift}
\end{align}
we get the following bound on the momentum deviation in \cref{lem:dev}, proof of which can be found in \cref{app:lem-dev}.

\begin{restatable}{lemma}{deviation}
\label{lem:dev}
Suppose that assumptions~\ref{asp:bnd_var} and~\ref{asp:bnd_norm} hold and that $\loss_\H$ is $L$-smooth.
Consider \cref{algo:robust-dpsgd}. For all $t \in \{0, \ldots, T-1\}$, we have
\begin{align*}
    \expect{\norm{\dev{t+1}}^2} 
    &\leq \beta_t^2 (1 + \gamma_t L )(1 + 4 \gamma_t L) \expect{\norm{\dev{t}}^2} +
    4 \gamma_t L ( 1 + \gamma_t L) \beta_t^2  \expect{\norm{\nabla \loss_{\H}(\weight{t})}^2} \\
    &\quad +(1 - \beta_t)^2 \frac{\sigmadpbar^2}{n-f}
    + 2 \gamma_t L ( 1 + \gamma_t L)\beta_t^2  \expect{\norm{\drift{t}}^2},
\end{align*}
where $\sigmadpbar^2 \coloneqq \left(1-\frac{b}{m}\right) \frac{\sigma^2}{b} + df\sigmacor^2 + d \sigmacdp^2$.
\end{restatable}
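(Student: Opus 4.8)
The plan is to expand the recursion defining $\delta_{t+1}$ and carefully control the cross-terms using the $L$-smoothness of $\loss_\H$ together with conditional expectations. Writing $\dev{t+1} = \overline{m}_{t+1} - \nabla\loss_\H(\theta_{t+1})$, I would first substitute the momentum update $\overline{m}_{t+1} = \beta_t \overline{m}_t + (1-\beta_t)\overline{\tilde g}_{t+1}$, where $\overline{\tilde g}_{t+1} \coloneqq \frac{1}{\card{\H}}\sum_{i\in\H}\tilde g_{t+1}^{(i)}$ is the average noisy honest gradient at $\theta_{t+1}$. A convenient decomposition is
\begin{align*}
\dev{t+1} &= \beta_t\bigl(\overline{m}_t - \nabla\loss_\H(\theta_t)\bigr) + \beta_t\bigl(\nabla\loss_\H(\theta_t) - \nabla\loss_\H(\theta_{t+1})\bigr) \\
&\quad + (1-\beta_t)\bigl(\overline{\tilde g}_{t+1} - \nabla\loss_\H(\theta_{t+1})\bigr).
\end{align*}
The third group, conditioned on $\P_{t+1}$ (which fixes $\theta_{t+1}$ but not the fresh mini-batch sampling and noise at step $t+1$), has zero conditional mean: the correlated noises cancel in the honest average only if $\H$ is closed under pairing, so I should instead note that $\frac{1}{\card{\H}}\sum_{i\in\H}\sum_{j\neq i} v_t^{(ij)}$ does not fully cancel, and its variance together with the independent noise and the sampling noise contributes $\frac{1}{\card{\H}^2}\cdot\card{\H}\cdot(\,\text{per-worker variance}\,)$; this is exactly where the $\sigmadpbar^2/(n-f)$ term and the definition $\sigmadpbar^2 = (1-\tfrac bm)\tfrac{\sigma^2}{b} + df\sigmacor^2 + d\sigmacdp^2$ come from (the $df\sigmacor^2$ accounting for the $f$ uncancelled correlated partners, since honest workers only pair-cancel among themselves).

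Next I would take $\expect{\norm{\dev{t+1}}^2 \mid \P_{t+1}}$. The zero-mean third term decouples, yielding the additive $(1-\beta_t)^2\sigmadpbar^2/(n-f)$. For the remaining part $\beta_t(\dev{t} + \nabla\loss_\H(\theta_t) - \nabla\loss_\H(\theta_{t+1}))$, I would apply the elementary inequality $\norm{a+b}^2 \le (1+\lambda)\norm a^2 + (1+\lambda^{-1})\norm b^2$ with a suitable $\lambda = \Theta(\gamma_t L)$, and bound $\norm{\nabla\loss_\H(\theta_t) - \nabla\loss_\H(\theta_{t+1})} \le L\norm{\theta_{t+1}-\theta_t} = \gamma_t L\norm{R_t}$. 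Then I write $R_t = \nabla\loss_\H(\theta_t) + \dev{t} + \drift{t}$ (using $R_t = \overline{m}_t + \drift{t}$ and $\overline{m}_t = \nabla\loss_\H(\theta_t)+\dev{t}$), so $\norm{R_t}^2 \le 3\norm{\nabla\loss_\H(\theta_t)}^2 + 3\norm{\dev{t}}^2 + 3\norm{\drift{t}}^2$; substituting and collecting terms proportional to $\norm{\dev{t}}^2$, $\norm{\nabla\loss_\H(\theta_t)}^2$, and $\norm{\drift{t}}^2$ gives coefficients of the stated form $\beta_t^2(1+\gamma_tL)(1+4\gamma_tL)$, $4\gamma_tL(1+\gamma_tL)\beta_t^2$, and $2\gamma_tL(1+\gamma_tL)\beta_t^2$ after choosing the free parameter $\lambda$ appropriately (one needs the $3$'s to be absorbed into the $4\gamma_tL$ and $2\gamma_tL$ factors, which works since $\gamma_t L$ is small and the constants are generous). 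Finally I take total expectation.

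The main obstacle I anticipate is the variance bookkeeping for the third term: one must argue precisely why the honest-averaged noise has variance $\tfrac{df\sigmacor^2 + d\sigmacdp^2}{n-f}$ rather than something larger — the correlated noise within $\H$ cancels in the sum $\sum_{i\in\H}\sum_{j\in\H}v^{(ij)}$ by antisymmetry, leaving only the $\card{\H}\cdot f$ cross terms with malicious indices, each independent $\mathcal N(0,\sigmacor^2 I_d)$ — and simultaneously handle the mini-batch sampling variance $(1-\tfrac bm)\tfrac{\sigma^2}{b}$ under sampling without replacement (Assumption~\ref{asp:bnd_var}). The careful choice of the auxiliary constant $\lambda$ in Young's inequality to land exactly on the claimed coefficients is tedious but routine; the conceptual content is entirely in the noise-variance computation.
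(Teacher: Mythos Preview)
Your proposal is correct and follows essentially the same route as the paper: decompose $\dev{t+1}$, use the zero conditional mean of $\overline{\tilde g}_{t+1}-\nabla\loss_\H(\theta_{t+1})$ to kill cross terms, bound the gradient increment by $L$-smoothness via $\gamma_t L\|R_t\|$, and then expand $R_t$; the noise-variance bookkeeping you outline (antisymmetry killing the $\H\times\H$ correlated terms, leaving $f$ uncancelled partners per honest worker) is exactly the paper's argument. One small point: the paper uses the \emph{nested} split $\|R_t\|^2\le 2\|\drift{t}\|^2+2\|\overline m_t\|^2\le 2\|\drift{t}\|^2+4\|\dev{t}\|^2+4\|\nabla\loss_\H(\theta_t)\|^2$ (together with the direct expansion of the square rather than Young with a free $\lambda$), which is precisely what yields the asymmetric $4$ vs.\ $2$ coefficients on $\|\nabla\loss_\H\|^2$ and $\|\drift{t}\|^2$---your symmetric $3$-way split will give equal coefficients on those two terms, so you cannot hit the stated constants exactly, though the resulting bound still implies the lemma.
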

Above, the error due to the correlated noise scheme involves the noise magnitude $\sigmacor^2$ but only multiplied by the number of malicious workers $f$. This essentially reflects the fact that all correlated noise terms cancel out upon averaging, except those shared with malicious workers, since the latter do not follow the protocol in the worst case.

\paragraph{Descent bound.} 
\label{sec:growth}
Finally, we bound the progress made at each learning step in minimizing the loss $\loss_{\H}$ using \cref{algo:robust-dpsgd}, as in previous work in robust distributed optimization. 
From~\eqref{eqn:SGD} and~\eqref{eqn:R}, we obtain that, for each step $t$,
$ \weight{t+1} = \weight{t} - \gamma_t  R_t
    = \weight{t} - \gamma_t \overline{m}_t - \gamma_t  (R_t - \overline{m}_t),$
Furthermore, by~\eqref{eqn:drift}, $R_t -   \, \overline{m}_t = \drift{t}$. Thus, for all $t$,
\begin{align}
    \weight{t+1} = \weight{t} - \gamma_t \overline{m}_t - \gamma_t \drift{t}. \label{eqn:sgd_new}
\end{align}
This means that \cref{algo:robust-dpsgd} can actually be treated as distributed SGD with a momentum term that is subject to perturbation proportional to $\drift{t}$ at each step $t$. This perspective leads us to Lemma~\ref{lem:descent}, proof of which can be found in Appendix~\ref{app:descent}. 

\begin{restatable}{lemma}{descent} 
\label{lem:descent}
Assume that $\loss_\H$ is $L$-smooth.
Consider \cref{algo:robust-dpsgd}. 
For any $t \in [T]$, we have
\begin{align*}
    \expect{\loss_{\H}(\weight{t+1}) - \loss_{\H}(\weight{t})} \leq 
    & - \frac{\gamma_t}{2}    \left( 1 - 4 \gamma_t  L  \right) \expect{\norm{\nabla \loss_{\H}(\weight{t})}^2}  + 
    \gamma_t    \left( 1 + 2 \gamma_t  L   \right) \expect{ \norm{\dev{t}}^2}
    + \gamma_t  \left(  1 + \gamma_t  L \right) \expect{\norm{\drift{t}}^2}.
\end{align*}
\end{restatable}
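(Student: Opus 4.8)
\textbf{Proof plan for Lemma~\ref{lem:descent}.} The statement is a one-step descent inequality for the update $\weight{t+1} = \weight{t} - \gamma_t \overline{m}_t - \gamma_t \drift{t}$ given in~\eqref{eqn:sgd_new}, viewing \algoname{} as perturbed momentum SGD. The plan is to start from the $L$-smoothness inequality applied to $\loss_\H$ at the points $\weight{t}$ and $\weight{t+1}$, which gives
\begin{align*}
    \loss_{\H}(\weight{t+1}) \leq \loss_{\H}(\weight{t}) - \gamma_t \lin{\nabla \loss_{\H}(\weight{t}), \overline{m}_t + \drift{t}} + \frac{L \gamma_t^2}{2} \norm{\overline{m}_t + \drift{t}}^2.
\end{align*}
Then I would substitute $\overline{m}_t = \nabla \loss_{\H}(\weight{t}) + \dev{t}$ using the definition~\eqref{eqn:dev} of the deviation, so that the linear term becomes $-\gamma_t \norm{\nabla \loss_{\H}(\weight{t})}^2 - \gamma_t \lin{\nabla \loss_{\H}(\weight{t}), \dev{t} + \drift{t}}$, and the quadratic term becomes $\tfrac{L\gamma_t^2}{2}\norm{\nabla \loss_{\H}(\weight{t}) + \dev{t} + \drift{t}}^2$.

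The next step is purely mechanical: bound the cross terms using Young's inequality in the form $\lin{a,b} \leq \tfrac{1}{2}\norm{a}^2 + \tfrac{1}{2}\norm{b}^2$ (applied coordinate-wise to split $\lin{\nabla\loss_\H, \dev{t}}$ and $\lin{\nabla\loss_\H, \drift{t}}$), and expand the squared norm $\norm{\nabla \loss_{\H}(\weight{t}) + \dev{t} + \drift{t}}^2 \leq 3\norm{\nabla \loss_{\H}(\weight{t})}^2 + 3\norm{\dev{t}}^2 + 3\norm{\drift{t}}^2$ (or a weighted variant chosen to match the claimed constants exactly). Collecting the $\norm{\nabla \loss_{\H}(\weight{t})}^2$ coefficients yields $-\gamma_t + \gamma_t + \tfrac{3}{2}L\gamma_t^2$ type terms that one regroups as $-\tfrac{\gamma_t}{2}(1 - 4\gamma_t L)$; the $\norm{\dev{t}}^2$ coefficients regroup as $\gamma_t(1 + 2\gamma_t L)$, and the $\norm{\drift{t}}^2$ coefficients as $\gamma_t(1 + \gamma_t L)$. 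One has to be slightly careful about which Young split is used so that the absolute constants land on exactly $4$, $2$, and $1$; a clean way is to use $\lin{\nabla\loss_\H,\dev{t}} \le \tfrac{1}{2}\norm{\nabla\loss_\H}^2 + \tfrac12\norm{\dev{t}}^2$ and keep the quadratic expansion with the $\le 3(\cdot)$ bound, then verify the arithmetic. Finally, take conditional expectation given $\P_t$ and then total expectation; since $\weight{t}$ and the momentum terms are handled inside the already-established bounds on $\dev{t}$ and $\drift{t}$, no independence argument is needed here — the inequality is deterministic before taking expectations.

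I do not expect a genuine obstacle in this lemma; it is the routine "descent lemma" step and the only care required is bookkeeping of constants so the statement matches with $\lesssim$-free explicit coefficients. If I wanted the tightest constants I would optimize the Young's-inequality weights, but since the final theorem only tracks things up to absolute constants, any reasonable choice (e.g. the $3$-term expansion above) suffices, and I would simply relax intermediate coefficients upward to reach the stated form.
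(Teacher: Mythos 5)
Your high-level strategy is exactly the paper's: apply $L$-smoothness, substitute the update $\weight{t+1} = \weight{t} - \gamma_t \overline{m}_t - \gamma_t\drift{t}$, decompose $\overline{m}_t = \nabla\loss_\H(\weight{t}) + \dev{t}$, then bound cross terms by Young and the squared norm by a quadratic expansion. However, the specific ``clean'' choice you propose does not verify. With the symmetric split $\lin{\nabla\loss_\H,\dev{t}}\le \tfrac12\norm{\nabla\loss_\H}^2+\tfrac12\norm{\dev{t}}^2$ (and similarly for $\drift{t}$) together with the $3/3/3$ expansion $\norm{a+b+c}^2\le 3\norm{a}^2+3\norm{b}^2+3\norm{c}^2$, the coefficient you collect on $\norm{\nabla\loss_\H(\weight{t})}^2$ is
$-\gamma_t + \tfrac{\gamma_t}{2}+\tfrac{\gamma_t}{2} + \tfrac{3L\gamma_t^2}{2} = \tfrac{3L\gamma_t^2}{2}>0$, whereas the lemma requires $-\tfrac{\gamma_t}{2}(1-4\gamma_t L) = -\tfrac{\gamma_t}{2}+2L\gamma_t^2$, which is strictly smaller whenever $\gamma_t L<1$. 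A positive coefficient on the gradient-norm term cannot be ``relaxed upward'' into the required negative one; the relaxation goes the wrong way for the leading term, so the fallback you mention does not rescue the argument. Similarly, the $3/3/3$ expansion gives $\tfrac{3L\gamma_t^2}{2}$ on $\norm{\drift{t}}^2$, which exceeds the target $L\gamma_t^2$.

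The fix is the asymmetric choice the paper actually makes: Young in the form $2ab\le 2a^2+\tfrac12 b^2$, so that each cross term contributes only $\tfrac{\gamma_t}{4}\norm{\nabla\loss_\H}^2$ rather than $\tfrac{\gamma_t}{2}$; and the $4/4/2$ quadratic expansion obtained by first bounding $\norm{\overline{m}_t+\drift{t}}^2\le 2\norm{\overline{m}_t}^2+2\norm{\drift{t}}^2$ and then $\norm{\overline{m}_t}^2\le 2\norm{\dev{t}}^2+2\norm{\nabla\loss_\H}^2$. With these weights the bookkeeping lands exactly on $-\tfrac{\gamma_t}{2}(1-4\gamma_t L)$, $\gamma_t(1+2\gamma_t L)$ and $\gamma_t(1+\gamma_t L)$. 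You flagged the need to ``verify the arithmetic,'' which is the right instinct, but the particular weights you chose would have failed that check; the rest of your proposal (including the observation that the inequality is deterministic before taking expectations) matches the paper.
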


Putting all of the previous lemmas together, we prove Theorem~\ref{thm:convergence} in Section~\ref{app:convergence}.

\subsection{Proof of Theorem~\ref{thm:convergence}}
\label{app:convergence}
We recall the theorem statement below for convenience. We denote
\begin{align*}
    \loss_\star = \inf_{\weight{} \in \R^d} \loss_{\H}(\weight{}),
    \loss_0 = \loss_{\H}{(\theta_0)} - \loss_\star,
    a_1= 240, a_2 = 480,
    a_3 = 5760,
    \text{ and } a_4 = 270. 
\end{align*}

\convergence*

We prove Theorem~\ref{thm:convergence} in the strongly convex case in Section~\ref{app:strongly-convex}, and in the non-convex case in Section~\ref{app:nonconvex}.
These proofs follow along the lines of~\citet{allouah2023privacy}, and we include them here for completeness.

\subsubsection{Strongly Convex Case}
\label{app:strongly-convex}
\begin{proof}
Let \cref{asp:bnd_var} hold and assume that $\loss_\H$ is $L$-smooth and $\mu$-strongly convex.
Let $t \in \{0, \ldots, T-1\}$.
We set the learning rate and momentum schedules to be
\begin{equation}
    \gamma_t = \frac{10}{\mu{(t+a_1\frac{L}{\mu})}},\
    \beta_t = 1-24L \gamma_t,
\end{equation}
where $a_1 \coloneqq 240$.
Note that we have
\begin{equation}
    \gamma_t \leq \gamma_0 = \frac{10}{\mu 240 \frac{L}{\mu}} = \frac{1}{24L}.
    \label{eq:gamma_t}
\end{equation}

To obtain the convergence result we 
define the Lyapunov function to be
\begin{align}
    V_t \coloneqq \left(t+a_1\frac{L}{\mu}\right)^2\expect{\loss_{\H}(\weight{t}) - \loss_\star + \frac{z_1}{L} \norm{\dev{t}}^2 + \kappa \cdot \frac{z_2}{L} \Delta_t}, \label{eqn:lyap_func}
\end{align}
where $a_1 = 240, z_1 = \frac{1}{16}$, and $z_2 = 2$.
Throughout the proof, we denote $\hat{t} \coloneqq t+a_1\frac{L}{\mu}$.
Therefore, we have $\gamma_t = \frac{10}{\mu \hat{t}}$.
Consider also the auxiliary sequence $W_t$ defined as
\begin{equation}
    W_t \coloneqq \expect{\loss_{\H}(\weight{t}) - \loss_\star + \frac{z_1}{L} \norm{\dev{t}}^2 + \kappa \cdot \frac{z_2}{L} \Delta_t}.
    \label{eqn:lyap_func_aux}
\end{equation}
Therefore, we have 
\begin{align}
    V_{t+1}-V_t
    &= (\hat{t}+1)^2 W_{t+1} - \hat{t}^2 W_t
    = (\hat{t}+1)^2 W_{t+1} - (\hat{t}^2 + 2\hat{t} + 1) W_t + (2\hat{t}+1)W_t \nonumber\\
    &= (\hat{t}+1)^2 (W_{t+1} - W_{t}) + (2\hat{t}+1)W_t.
    \label{eq:lyap-to-aux}
\end{align}
We now bound the quantity $W_{t+1}-W_t$ below.

{\bf Invoking Lemma~\ref{lem:drift}.}
Upon substituting from Lemma~\ref{lem:drift}, we obtain
\begin{align}
    \expect{ \kappa \cdot \frac{z_2}{L} \Delta_{t+1} - \kappa \cdot \frac{z_2}{L} \Delta_t} 
    &\leq \kappa \cdot \frac{z_2}{L} \beta_t \expect{\Delta_t}
    +2 \kappa \cdot \frac{z_2}{L}(1-\beta_t)^2\left(\sigma_b^2+108\left(n\sigmacor^2 + \sigmacdp^2 \right)(1+\frac{d}{n-f})\right)
     \nonumber \\
    &\quad + \kappa \cdot \frac{z_2}{L} (1-\beta_t)G_{cov}^2 - \kappa \cdot \frac{z_2}{L} \expect{\Delta_t}. \label{eqn:drift_gamma}
\end{align}

{\bf Invoking Lemma~\ref{lem:dev}.} Upon substituting from Lemma~\ref{lem:dev}, we obtain 
\begin{align}
    \expect{ \frac{z_1}{L} \norm{\dev{t+1}}^2 - \frac{z_1}{L} \norm{\dev{t}}^2} 
    &\leq \frac{z_1}{L} \beta_t^2 c_t \expect{\norm{\dev{t}}^2} +  4 z_1 \gamma_t ( 1 + \gamma_t L) \beta_t^2   \expect{\norm{\nabla \loss_{\H}(\weight{t})}^2} + \frac{z_1}{L} (1 - \beta_t)^2 \frac{\sigmadpbar^2}{n-f} \nonumber \\
    &\quad + 2 z_1 \gamma_t ( 1 + \gamma_t L)\beta_t^2 \expect{\norm{\drift{t}}^2} - \frac{z_1}{L} \expect{\norm{\dev{t}}^2}, \label{eqn:dev_gamma}
\end{align}
where we introduced the following quantity for simplicity
\begin{align}
    c_t = (1 + \gamma_t L) \left(1 + 4 \gamma_t  L \right) = 1 + 5 \gamma_t L + 4 \gamma_t^2  L^2. \label{eqn:zeta_expand}
\end{align}

{\bf Invoking Lemma~\ref{lem:descent}.} Substituting from Lemma~\ref{lem:descent}, we obtain
\begin{align}
    \expect{\loss_{\H}(\weight{t+1}) - \loss_{\H}(\weight{t})}
    &\leq - \frac{\gamma_t}{2}\left( 1 - 4 \gamma_t  L \right) \expect{\norm{\nabla \loss_{\H}(\weight{t})}^2} 
    + \gamma_t   \left( 1 + 2 \gamma_t  L  \right) \expect{\norm{\dev{t}}^2}
    + \gamma_t  \left(1 + \gamma_t  L \right) \expect{\norm{\drift{t}}^2}. \label{eqn:growth_gamma}
\end{align}
Substituting from \eqref{eqn:drift_gamma}, ~\eqref{eqn:dev_gamma} and~\eqref{eqn:growth_gamma} in~\eqref{eqn:lyap_func_aux}, we obtain
\begin{align}
    W_{t+1} - W_t &= \expect{\loss_{\H}(\weight{t+1}) - \loss_{\H}(\weight{t})} + \expect{ \frac{z_1}{L} \norm{\dev{t+1}}^2 - \frac{z_1}{L} \norm{\dev{t}}^2}
    + \expect{ \kappa \cdot \frac{z_2}{L} \Delta_{t+1} - \kappa \cdot \frac{z_2}{L} \Delta_t}\nonumber \\
    &\leq - \frac{\gamma_t}{2}\left( 1 - 4 \gamma_t  L \right) \expect{\norm{\nabla \loss_{\H}(\weight{t})}^2} 
    + \gamma_t   \left( 1 + 2 \gamma_t  L  \right) \expect{\norm{\dev{t}}^2}
    + \gamma_t  \left(  1 + \gamma_t  L \right) \expect{\norm{\drift{t}}^2} \nonumber \\
    &\quad +  \frac{z_1}{L} \beta_t^2 c_t \expect{\norm{\dev{t}}^2} +  4 z_1 \gamma_t ( 1 + \gamma_t L) \beta_t^2   \expect{\norm{\nabla \loss_{\H}(\weight{t})}^2} + \frac{z_1}{L} (1 - \beta_t)^2 \frac{\sigmadpbar^2}{n-f} \nonumber \\
    &\quad + 2 z_1 \gamma_t ( 1 + \gamma_t L)\beta_t^2 \expect{\norm{\drift{t}}^2} - \frac{z_1}{L} \expect{\norm{\dev{t}}^2}\nonumber\\
    &\quad +  \kappa \cdot \frac{z_2}{L} \beta_t \expect{\Delta_t}
    +2 \kappa \cdot \frac{z_2}{L}(1-\beta_t)^2\left(\sigma_b^2+108\left(n\sigmacor^2 + \sigmacdp^2 \right)(1+\frac{d}{n-f})\right)
    + \kappa \cdot \frac{z_2}{L} (1-\beta_t)G_{cov}^2 \nonumber \\
    &\quad - \kappa \cdot \frac{z_2}{L} \expect{\Delta_t}.
    \label{eqn:Vt-t}
\end{align}
Upon rearranging the R.H.S.~in~\eqref{eqn:Vt-t} we obtain that
\begin{align}
    W_{t+1} - W_t &\leq - \frac{\gamma_t}{2} \left(  \left( 1 - 4 \gamma_t L \right) - 8 z_1( 1 + \gamma_t L) \beta_t^2   \right) \expect{\norm{\nabla \loss_{\H}(\weight{t})}^2} +  \frac{z_1}{L} (1 - \beta_t)^2 \frac{\sigmadpbar^2}{n-f} \nonumber \\
    &\quad - z_1 \gamma_t\left(-\frac{1}{z_1}\left( 1 + 2 \gamma_t L  \right) -  \frac{1}{\gamma_t L} \beta_t^2 c_t + \frac{1}{\gamma_t L} \right)  \expect{\norm{\dev{t}}^2}  + \gamma_t \left( 1 + \gamma_t L + 2z_1 (1 + \gamma_t L) \beta_t^2 \right) \expect{\norm{\drift{t}}^2}\nonumber\\
    &\quad - \kappa \cdot \frac{z_2}{L}(1-\beta_t) \expect{\Delta_t} +2 \kappa \cdot \frac{z_2}{L}(1-\beta_t)^2\left(\sigma_b^2+108\left(n\sigmacor^2 + \sigmacdp^2 \right)(1+\frac{d}{n-f})\right)
    + \kappa \cdot \frac{z_2}{L} (1-\beta_t)G_{cov}^2.
    \label{eq:before-kappa}
\end{align}
By Proposition~\ref{prop:filter_adapt}, we can bound $\expect{\norm{\drift{t}}^2}$ as follows.
Starting from the definition of $\drift{t}$, we have
\begin{align*}
    \norm{\drift{t}}^2 
    &= \norm{R_t - \overline{m}_t}^2
    = \norm{\text{CAF}{(m_t^{(1)}, \ldots, m_t^{(n)})} - \overline{m}_t}^2
    \leq \kappa \cdot \lambda_{\max}{\left(\frac{1}{\card{\H}} \sum_{i \in \H}(m_t^{(i)} - \overline{m}_t)(m_t^{(i)} - \overline{m}_t)^\top\right)}
    = \kappa \cdot \Delta_t.
\end{align*}
Then taking total expectations above gives the bound 
\begin{equation*}
    \expect{\norm{\drift{t}}^2} \leq \kappa \cdot \expect{\Delta_t}.
\end{equation*}
Using the bound above in \cref{eq:before-kappa}, and then rearranging terms, yields
\begin{align*}
    W_{t+1} - W_t &\leq - \frac{\gamma_t}{2} \left(  \left( 1 - 4 \gamma_t L \right) - 8 z_1( 1 + \gamma_t L) \beta_t^2   \right) \expect{\norm{\nabla \loss_{\H}(\weight{t})}^2} +  \frac{z_1}{L} (1 - \beta_t)^2 \frac{\sigmadpbar^2}{n-f} \nonumber \\
    &\quad - z_1 \gamma_t\left(-\frac{1}{z_1}\left( 1 + 2 \gamma_t L  \right) -  \frac{1}{\gamma_t L} \beta_t^2 c_t + \frac{1}{\gamma_t L} \right)  \expect{\norm{\dev{t}}^2}  + \kappa \gamma_t \left( 1 +  \gamma_t L + 2z_1 (1 + \gamma_t L) \beta_t^2 \right) \expect{\Delta_t}\nonumber\\
    &\quad - \kappa \cdot \frac{z_2}{L}(1-\beta_t) \expect{\Delta_t} +2 \kappa \cdot \frac{z_2}{L}(1-\beta_t)^2\left(\sigma_b^2+108\left(n\sigmacor^2 + \sigmacdp^2 \right)(1+\frac{d}{n-f})\right)
    + \kappa \cdot \frac{z_2}{L} (1-\beta_t)G_{cov}^2 \nonumber\\
    &= - \frac{\gamma_t}{2} \left(  \left( 1 - 4 \gamma_t L \right) - 8 z_1( 1 + \gamma_t L) \beta_t^2   \right) \expect{\norm{\nabla \loss_{\H}(\weight{t})}^2} +  \frac{z_1}{L} (1 - \beta_t)^2 \frac{\sigmadpbar^2}{n-f} \nonumber \\
    &\quad - z_1 \gamma_t\left(-\frac{1}{z_1}\left( 1 + 2 \gamma_t L  \right) -  \frac{1}{\gamma_t L} \beta_t^2 c_t + \frac{1}{\gamma_t L} \right)  \expect{\norm{\dev{t}}^2} \nonumber\\
    &\quad - \kappa z_2 \gamma_t \left(\frac{1}{\gamma_t L}(1-\beta_t) - \frac{1}{z_2}\left( 1 +  \gamma_t L + 2z_1 (1 + \gamma_t L) \beta_t^2 \right) \right)\expect{\Delta_t}\nonumber\\ 
    &\quad +2 \kappa \cdot \frac{z_2}{L}(1-\beta_t)^2\left(\sigma_b^2+108\left(n\sigmacor^2 + \sigmacdp^2 \right)(1+\frac{d}{n-f})\right)
    + \kappa \cdot \frac{z_2}{L} (1-\beta_t)G_{cov}^2.
\end{align*}

For simplicity, we define
\begin{align}
    A \coloneqq \frac{1}{2}\left( 1 - 4 \gamma_t L \right) - 8 z_1( 1 + \gamma_t L) \beta_t^2 , \label{eqn:def_At}
\end{align}
\begin{align}
    B \coloneqq -\frac{1}{z_1}\left( 1 + 2 \gamma_t L  \right) -  \frac{1}{\gamma_t L} \beta_t^2 c_t + \frac{1}{\gamma_t L}, \label{eqn:def_Bt}
\end{align}
and
\begin{align}
    C \coloneqq \frac{1}{\gamma_t L}(1-\beta_t) - \frac{1}{z_2}\left( 1 +  \gamma_t L + 2z_1 (1 + \gamma_t L) \beta_t^2 \right), \label{eqn:def_Ct}
\end{align}
Denote also
\begin{align*}
    \sigmabar^2 \coloneqq \frac{\sigma_b^2 + d(f\sigmacor^2 + \sigmacdp^2)}{n-f}
    +4 \kappa\left(\sigma_b^2+108\left(n\sigmacor^2 + \sigmacdp^2 \right)(1+\frac{d}{n-f})\right).
\end{align*}
Recall that, as $z_1 = \frac{1}{16}$ and $z_2 = 2$, and $\sigmadpbar^2 = \sigma_b^2 + d(f\sigmacor^2 + \sigmacdp^2)$, we have
\begin{align*}
    \sigmabar^2 
    \geq z_1 \frac{\sigmadpbar^2}{n-f}
    +2 \kappa \cdot z_2\left(\sigma_b^2+108\left(n\sigmacor^2 + \sigmacdp^2 \right)(1+\frac{d}{n-f})\right).
\end{align*}
Thus, substituting the above variables, we obtain
\begin{align}
    W_{t+1} - W_t 
    &\leq - A \gamma_t \expect{\norm{\nabla \loss_{\H}(\weight{t})}^2} - z_1 B \gamma_t \expect{\norm{\dev{t}}^2}  - \kappa \cdot z_2 C \gamma_t  \expect{\Delta_t} \nonumber\\ 
    &\quad +  \frac{1}{L} (1-\beta_t)^2 \sigmabar^2
    + \kappa \cdot \frac{z_2}{L} (1-\beta_t)G_{cov}^2. \label{eqn:after_At}
\end{align}
We now analyze below the terms $A$, $B$ and $C$ on the RHS of \eqref{eqn:after_At}.

{\bf Term $A$.} Recall from~\eqref{eq:gamma_t} that $ \gamma_t \leq \frac{1}{24L} $. Upon using this in~\eqref{eqn:def_At}, and the facts that $z_1 = \frac{1}{16}$ and $\beta_t^2 \leq 1$,  we obtain that
\begin{align}
    A \geq \frac{1}{2}\left( 1 - 4 \gamma_t L \right) - 8 z_1( 1 + \gamma_t L) 
    \geq \frac{1}{2}(1 - 4 \times \frac{1}{24})  - \frac{8}{16} ( 1 + \frac{1}{24} ) 
    \geq \frac{1}{10}. \label{eqn:At_3}
\end{align}

{\bf Term $B$.} 
Substituting $c_t$ from~\eqref{eqn:zeta_expand} in~\eqref{eqn:def_Bt} we obtain that
\begin{align*}
    B &= -\frac{1}{z_1}\left( 1 + 2\gamma_t L \right) - \frac{1}{\gamma_t L} \beta_t^2 \left( 1 + 5 \gamma_t L + 4 \gamma_t^2  L^2 \right) + \frac{1}{\gamma_t L} \\
    & =  \frac{1}{\gamma_t L}\left(1 - \beta^2 \right) 
    -\frac{1}{z_1} \left( 1 + 2\gamma_t L +5 z_1 \beta_t^2 + 4 z_1 \beta_t^2\gamma_t L \right).
\end{align*}
Using the facts that $\beta_t \leq 1$ and $\gamma_t \leq \frac{1}{24L}$, and then substituting  $z_1 = \frac{1}{16}$ we obtain
\begin{align}
    B &\geq \frac{1}{\gamma_t L}(1-\beta_t^2) - 16 \left(1 + \frac{2}{24} + \frac{5}{16} + \frac{4}{24 \times 16} \right)  
    \geq \frac{1}{\gamma_t L}(1-\beta_t^2) - 23
    \geq  \frac{1}{\gamma_t L}(1-\beta_t) - 23 = 1.
    \label{eqn:Bt_2} 
\end{align}
where the last equality follows from the fact that $1 - \beta_t = 24 \gamma_t L$.

{\bf Term $C$.} Substituting $z_1 = \frac{1}{16}, z_2 = 2$ in~\eqref{eqn:def_Ct}, and then using the facts that $\beta_t \leq 1$ and $\gamma_t \leq \frac{1}{24L}$, we obtain
\begin{align}
    C &= \frac{1}{\gamma_t L}(1-\beta_t) - \frac{1}{2}\left( 1 +  \gamma_t L + (2 \times 16) (1 + \gamma_t L) \beta_t^2 \right)
    \geq  \frac{1}{\gamma_t L}(1-\beta_t) - \frac{1}{2}\left( 1 +  \frac{1}{24} + 32 (1 + \frac{1}{24}) \right)\nonumber\\
    &\geq \frac{1}{\gamma_t L}(1-\beta_t) - 18
    = 6,
    \label{eqn:Ct_2}
\end{align}
where the last equality follows from the fact that $1 - \beta_t = 24 \gamma_t L$.

{\bf Combining terms $A$, $B$, and $C$.} Finally, substituting from~\eqref{eqn:At_3},~\eqref{eqn:Bt_2}, and ~\eqref{eqn:Ct_2} in~\eqref{eqn:after_At} (and recalling that $z_2 = 2$) we obtain that
\begin{align}
    W_{t+1} - W_t &\leq - \frac{\gamma_t}{10}\expect{\norm{\nabla \loss_{\H}(\weight{t})}^2} - z_1 \gamma_t \expect{\norm{\dev{t}}^2}  - 6\kappa z_2 \gamma_t  \expect{\Delta_t} \nonumber\\ 
    &\quad +  \frac{1}{L} (1-\beta_t)^2 \sigmabar^2
    + \kappa \cdot \frac{2}{L} (1-\beta_t)G_{cov}^2.
    \label{eq:bound-lyap-aux}
\end{align}
Since $\loss_{\H}$ is $\mu$-strongly convex, we have~\cite{karimi2016linear} for any $\theta \in \R^d$ that
\begin{equation}
    \norm{\nabla \loss_{\H}(\theta)}^2 \geq 2\mu(\loss{(\theta)} - \loss_\star).
    \label{eq:pl-inequality}
\end{equation}
Plugging \eqref{eq:pl-inequality} in \eqref{eq:bound-lyap-aux} above, and then recalling that $L \geq \mu$, yields
\begin{align*}
    W_{t+1} - W_t &\leq - \frac{\mu \gamma_t}{5}\expect{\loss_{\H}(\weight{t}) - \loss_\star} - z_1 \gamma_t \expect{\norm{\dev{t}}^2}  - 6\kappa z_2 \gamma_t  \expect{\Delta_t} \nonumber\\ 
    &\quad +  \frac{1}{L} (1-\beta_t)^2 \sigmabar^2
    + \kappa \cdot \frac{2}{L} (1-\beta_t)G_{cov}^2 \nonumber\\
    &\leq - \frac{\mu \gamma_t}{5}\expect{\loss_{\H}(\weight{t}) - \loss_\star + \frac{z_1}{\mu} \norm{\dev{t}}^2 + \kappa \cdot \frac{z_2}{\mu} \Delta_t}
    +  \frac{1}{L} (1-\beta_t)^2 \sigmabar^2
    + \kappa \cdot \frac{2}{L} (1-\beta_t)G_{cov}^2
    \nonumber\\
    & \leq - \frac{\mu \gamma_t}{5}\expect{\loss_{\H}(\weight{t}) - \loss_\star + \frac{z_1}{L} \norm{\dev{t}}^2 + \kappa \cdot \frac{z_2}{L} \Delta_t}
    +  \frac{1}{L} (1-\beta_t)^2 \sigmabar^2
    + \kappa \cdot \frac{2}{L} (1-\beta_t)G_{cov}^2
    \nonumber\\
    &=- \frac{\mu \gamma_t}{5} W_t
    +  \frac{1}{L} (1-\beta_t)^2 \sigmabar^2
    + \kappa \cdot \frac{2}{L} (1-\beta_t)G_{cov}^2.
\end{align*}
Upon plugging the above bound back in \cref{eq:lyap-to-aux}, rearranging terms and substituting $1-\beta_t = 24L\gamma_t$, we obtain
\begin{align*}
    V_{t+1} - V_t 
    &\leq (\hat{t}+1)^2 \left[ - \frac{\mu \gamma_t}{5} W_t
    +  \frac{1}{L} (1-\beta_t)^2 \sigmabar^2
    + \kappa \cdot \frac{2}{L} (1-\beta_t)G_{cov}^2 \right] + (2\hat{t}+1)W_t \nonumber\\
    &= -\left[ (\hat{t}+1)^2 \frac{\mu \gamma_t}{5} - (2\hat{t}+1) \right]W_t + \frac{(\hat{t}+1)^2}{L}(24L\gamma_t)^2\sigmabar^2
    + \kappa \cdot \frac{2(\hat{t}+1)^2}{L} (24L\gamma_t)G_{cov}^2.
\end{align*}
Recall however that $\gamma_t = \frac{10}{\mu \hat{t}}$ as $\hat{t} = t+a_1\frac{L}{\mu}$.
Recall that we denote $a_1 = 24 \times 10 = 240$.
Substituting $\gamma_t$ above yields
\begin{align*}
    V_{t+1} - V_t 
    &\leq (\hat{t}+1)^2 \left[ - \frac{\mu \gamma_t}{5} W_t
    +  \frac{1}{L} (1-\beta_t)^2 \sigmabar^2
    + \kappa \cdot \frac{2}{L} (1-\beta_t)G_{cov}^2 \right] + (2\hat{t}+1)W_t \nonumber\\
    &= -\left[ 2\frac{(\hat{t}+1)^2}{\hat{t}} - (2\hat{t}+1) \right]W_t + a_1^2 L\frac{(\hat{t}+1)^2}{\mu^2 \hat{t}^2}\sigmabar^2
    + 2a_1 \kappa \cdot \frac{(\hat{t}+1)^2}{\mu \hat{t}}G_{cov}^2.
\end{align*}
Observe that $2\frac{(\hat{t}+1)^2}{\hat{t}} \geq 2(\hat{t}+1) > 2\hat{t}+1$, implying that the first term above is negative:
\begin{align*}
    V_{t+1} - V_t 
    &\leq a_1^2 L\frac{(\hat{t}+1)^2}{\mu^2 \hat{t}^2}\sigmabar^2
    + 2a_1 \kappa \cdot \frac{(\hat{t}+1)^2}{\mu \hat{t}}G_{cov}^2.
\end{align*}
Observe now that, as $\hat{t} = t+a_1\frac{L}{\mu} \geq a_1 = 240$ (because $L \geq \mu$), we have $(\hat{t}+1)^2 \leq (1+\frac{1}{240})^2 \hat{t}^2 \leq 2 \hat{t}^2$.
Plugging this bound in the inequality above gives
\begin{align*}
    V_{t+1} - V_t 
    &\leq \frac{2 a_1^2 L}{\mu^2}\sigmabar^2
    + 4a_1 \kappa \cdot \frac{\hat{t}}{\mu}G_{cov}^2.
\end{align*}
Therefore, we have for every $t \in \{0,\ldots,T-1\}$ that
\begin{align*}
    V_{t+1} - V_0 = \sum_{k=0}^{t} (V_{k+1}-V_k) \leq (t+1) \frac{2 a_1^2 L}{\mu^2}\sigmabar^2 + \left(\sum_{k=0}^{t}\hat{k}\right) \frac{4a_1 \kappa}{\mu}G_{cov}^2.
\end{align*}
Since $\sum_{k=0}^{t}\hat{k} = \sum_{k=0}^{t}(k+a_1\frac{L}{\mu}) = \sum_{k=0}^{t}k + a_1(t+1)\frac{L}{\mu} = \frac{t(t+1)}{2}+ a_1(t+1)\frac{L}{\mu}$, we obtain
\begin{align*}
    V_{t+1} - V_0 &= \sum_{k=0}^{t} (V_{k+1}-V_k) \leq (t+1) \frac{2 a_1^2 L}{\mu^2}\sigmabar^2 + \left(\frac{t(t+1)}{2}+ a_1(t+1)\frac{L}{\mu}\right) \frac{4a_1 \kappa}{\mu}G_{cov}^2 \nonumber\\
    &= (t+1) \frac{2 a_1^2 L}{\mu^2}\sigmabar^2 + (t+1)\left(\frac{t}{2}+ a_1\frac{L}{\mu}\right) \frac{4a_1 \kappa}{\mu}G_{cov}^2.
\end{align*}
However, recalling the definition \eqref{eqn:lyap_func} of $V_t$, we obtain
\begin{align*}
    (t+1+a_1\frac{L}{\mu})^2 \expect{\loss_{\H}{(\theta_{t+1})}-\loss_\star} 
    \leq V_{t+1} \leq V_0 + (t+1) \frac{2 a_1^2 L}{\mu^2}\sigmabar^2 + (t+1)\left(\frac{t}{2}+ a_1\frac{L}{\mu}\right) \frac{4a_1 \kappa}{\mu}G_{cov}^2.
\end{align*}
By rearranging terms, and using the fact that $\frac{L}{\mu} \geq 1$, we then get
\begin{align}
    \expect{\loss_{\H}{(\theta_{t+1})}-\loss_\star} 
    &\leq \frac{V_0}{(t+1+a_1\frac{L}{\mu})^2} + \frac{t+1}{(t+1+a_1\frac{L}{\mu})^2 } \frac{2 a_1^2 L \sigmabar^2}{\mu^2}  
    + \frac{(t+1)\left(\frac{t}{2}+ a_1\frac{L}{\mu}\right)}{(t+1+a_1\frac{L}{\mu})^2 }\frac{4a_1 \kappa}{\mu}G_{cov}^2 \nonumber\\
    &\leq \frac{V_0}{(t+1+a_1\frac{L}{\mu})^2} + \frac{1}{t+1+a_1\frac{L}{\mu}} \frac{2 a_1^2 L \sigmabar^2}{\mu^2}  
    + \frac{4a_1 \kappa}{\mu}G_{cov}^2.
    \label{eq:penultimate}
\end{align}
It remains to bound $V_0$.
By definition, we have
\begin{align*}
    V_0 = \left(a_1\frac{L}{\mu}\right)^2 \left[\loss_{\H}(\weight{0}) - \loss_\star + \frac{z_1}{L}\norm{\dev{0}}^2 + \frac{z_2}{L}\Delta_0\right].
\end{align*}
 By definition of $\AvgMmt{t} = \frac{1}{\card{\H}} \sum_{i \in \H} m^{(i)}_t$ and the initializations $m_0^{(i)} = 0$ for all $i \in \H$, we have $\Delta_0 = \lambda_{\max}{\left(\frac{1}{\card{\H}} \sum_{i \in \H}(m_0^{(i)} - \overline{m}_0)(m_0^{(i)} - \overline{m}_0)^\top\right)} = 0$.
 Therefore, we have
 \begin{equation*}
     V_0 = \left(a_1\frac{L}{\mu}\right)^2 \left[\loss_{\H}(\weight{0}) - \loss_\star + \frac{z_1}{L}\norm{\dev{0}}^2\right].
 \end{equation*}
 Moreover, by definition of $\dev{t}$ in~\eqref{eqn:dev},
 we obtain that
\begin{align*}
    \norm{\dev{0}}^2 = \norm{\AvgMmt{0} - \nabla \loss_{\H}(\weight{0})}^2
    = \norm{\nabla \loss_{\H}(\weight{0})}^2. 
\end{align*}
Recall that $\loss_\H$ is $L$-smooth. Thus, $\norm{\nabla \loss_{\H}(\weight{0})}^2 \leq 2L(\loss_{\H}{(\weight{0})}-\loss^*)$ (see~\cite{nesterov2018lectures}, Theorem 2.1.5).
Therefore, substituting $z_1 = \frac{1}{16}$, we have
\begin{equation*}
    V_0 \leq \left(a_1\frac{L}{\mu}\right)^2 \left[\loss_{\H}(\weight{0}) - \loss_\star + \frac{2L}{16L}(\loss_{\H}(\weight{0}) - \loss_\star)\right] = \leq \left(a_1\frac{L}{\mu}\right)^2\frac{9}{8}(\loss_{\H}(\weight{0}) - \loss_\star) 
    \leq 2\left(a_1\frac{L}{\mu}\right)^2(\loss_{\H}(\weight{0}) - \loss_\star).
\end{equation*}
Plugging the above bound back in \cref{eq:penultimate}, rearranging terms, and then recalling that $a_1 \frac{L}{\mu} \geq 0$, yields 
\begin{align*}
    \expect{\loss_{\H}{(\theta_{t+1})}-\loss_\star}
    &\leq \frac{4a_1}{\mu} \kappa G_{cov}^2 +
    \frac{2 a_1^2 L \sigmabar^2}{\mu^2(t+1+a_1\frac{L}{\mu})} +
    \frac{2a_1 L^2(\loss_{\H}(\weight{0}) - \loss_\star)}{\mu^2(t+1+a_1\frac{L}{\mu})^2}\\
    &\leq  \frac{4a_1}{\mu} \kappa G_{cov}^2 +
    \frac{2 a_1^2 L \sigmabar^2}{\mu^2(t+1)} +
    \frac{2a_1 L^2(\loss_{\H}(\weight{0}) - \loss_\star)}{\mu^2(t+1)^2}.
\end{align*}

Specializing the inequality above for $t=T-1$ and denoting $\loss_0 \coloneqq \loss_{\H}(\weight{0}) - \loss_\star$ proves the theorem:
\begin{align*}
    \expect{\loss_{\H}{(\theta_{T})}-\loss_\star}
    &\leq \frac{4a_1}{\mu} \kappa G_{cov}^2 +
    \frac{2 a_1^2 L \sigmabar^2}{\mu^2 T} +
    \frac{2a_1^2 L^2 \loss_0}{\mu^2 T^2}.
\end{align*}
\end{proof}

\subsubsection{Non-convex Case}
\label{app:nonconvex}
\begin{proof}
Let \cref{asp:bnd_var} hold and assume that $\loss_\H$ is $L$-smooth.
Let $t \in \{0, \ldots, T-1\}$.
We set the learning rate and momentum to constant as follows:
\begin{equation}
    \gamma_t = \gamma \coloneqq \min{\left\{\frac{1}{24L}, ~ \frac{\sqrt{a_4 \loss_0}}{2\sigmabar \sqrt{a_3 L T}}\right\}},~
    \beta_t = \beta \coloneqq 1-24L \gamma,
\end{equation}
where $a_1 \coloneqq 240$.
Note that we have
\begin{equation}
    \gamma_t = \gamma \leq \frac{1}{24L}.
    \label{eq:gamma_t_nonconvex}
\end{equation}

To obtain the convergence result we 
define the Lyapunov function to be
\begin{align}
    V_t \coloneqq \expect{\loss_{\H}(\weight{t}) - \loss_\star + \frac{z_1}{L} \norm{\dev{t}}^2 + \kappa \cdot \frac{z_2}{L} \Delta_t}, \label{eqn:lyap_func_nonconvex}
\end{align}
where $z_1 = \frac{1}{16}$, and $z_2 = 2$.
Note that $V_t$ corresponds to the sequence $W_t$ defined in \cref{eqn:lyap_func_aux}, and analyzed in \cref{app:strongly-convex} under the assumption that $\gamma_t \leq \frac{1}{24L}$.
Since the latter holds by \cref{eq:gamma_t_nonconvex}, we directly apply the bound obtained in \cref{eq:bound-lyap-aux}:
\begin{align*}
    V_{t+1} - V_t &\leq - \frac{\gamma_t}{10}\expect{\norm{\nabla \loss_{\H}(\weight{t})}^2} - z_1 \gamma_t \expect{\norm{\dev{t}}^2}  - 6\kappa z_2 \gamma_t  \expect{\Delta_t} \nonumber\\ 
    &\quad +  \frac{1}{L} (1-\beta_t)^2 \sigmabar^2
    + \kappa \cdot \frac{2}{L} (1-\beta_t)G_{cov}^2.
\end{align*}
In turn, substituting $\gamma_t = \gamma, \beta_t = \beta$ and bounding the second and third terms on the RHS by zero, this implies that
\begin{align*}
    V_{t+1} - V_t &\leq - \frac{\gamma}{10}\expect{\norm{\nabla \loss_{\H}(\weight{t})}^2}
    +  \frac{1}{L} (1-\beta)^2 \sigmabar^2
    + \kappa \cdot \frac{2}{L} (1-\beta)G_{cov}^2.
\end{align*}
By rearranging terms and then averaging over $t \in \{0, \ldots, T-1\}$, we obtain
\begin{align*}
    \frac{1}{T} \sum_{t=0}^{T-1}\expect{\norm{\nabla \loss_{\H}(\weight{t})}^2} \leq \frac{10}{\gamma T} \sum_{t=0}^{T-1}(V_t-V_{t+1})+ \frac{10}{\gamma L} (1-\beta)^2 \sigmabar^2 + \kappa \cdot \frac{20}{\gamma L}(1-\beta)G_{cov}^2.
\end{align*}
We now substitute $\beta = 1-24\gamma L$.
Denoting $a_3 = 10 \times 24^2  = 5760, a_2 = 20 \times 24 = 480 $, we obtain
\begin{align}
    \frac{1}{T} \sum_{t=0}^{T-1}\expect{\norm{\nabla \loss_{\H}(\weight{t})}^2} 
    &\leq \frac{10}{\gamma T} \sum_{t=0}^{T-1}(V_t-V_{t+1})+ \frac{(10 \times 24^2)}{\gamma L} (\gamma L)^2 \sigmabar^2 + \kappa \cdot \frac{(20 \times 24)}{\gamma L}(\gamma L)G_{cov}^2\nonumber\\
    &= \frac{10}{\gamma T} (V_0-V_T)+ a_3 \gamma L \sigmabar^2 + a_2 \kappa G_{cov}^2.
    \label{eq:penultimate-nonconvex}
\end{align}
We now bound $V_0 - V_T$.
First recall that $V_T \geq 0$ as a sum of non-negative terms (see \eqref{eqn:lyap_func_nonconvex}).
Therefore, we have
\begin{align*}
    V_0 - V_T \leq V_0 = \loss_{\H}(\weight{0}) - \loss_\star + \frac{z_1}{L}\norm{\dev{0}}^2 + \frac{z_2}{L}\Delta_0.
\end{align*}
 By definition of $\AvgMmt{t} = \frac{1}{\card{\H}} \sum_{i \in \H} m^{(i)}_t$ and the initializations $m_0^{(i)} = 0$ for all $i \in \H$, we have $\Delta_0 = \lambda_{\max}{\left(\frac{1}{\card{\H}} \sum_{i \in \H}(m_0^{(i)} - \overline{m}_0)(m_0^{(i)} - \overline{m}_0)^\top\right)} = 0$.
 Therefore, we have
 \begin{equation*}
     V_0 = \loss_{\H}(\weight{0}) - \loss_\star + \frac{z_1}{L}\norm{\dev{0}}^2.
 \end{equation*}
 Moreover, by definition of $\dev{t}$ in~\eqref{eqn:dev},
 we obtain that
\begin{align*}
    \norm{\dev{0}}^2 = \norm{\AvgMmt{0} - \nabla \loss_{\H}(\weight{0})}^2
    = \norm{\nabla \loss_{\H}(\weight{0})}^2. 
\end{align*}
Recall that $\loss_\H$ is $L$-smooth. Thus, $\norm{\nabla \loss_{\H}(\weight{0})}^2 \leq 2L(\loss_{\H}{(\weight{0})}-\loss^*)$ (see~\cite{nesterov2018lectures}, Theorem 2.1.5).
Therefore, substituting $z_1 = \frac{1}{16}$, we have
\begin{equation*}
    V_0 - V_T \leq V_0 \leq \loss_{\H}(\weight{0}) - \loss_\star + \frac{2L}{16L}(\loss_{\H}(\weight{0}) - \loss_\star) 
    = \frac{9}{8}(\loss_{\H}(\weight{0}) - \loss_\star).
\end{equation*} 
By plugging this bound back in \eqref{eq:penultimate-nonconvex}, and denoting $a_4 \coloneqq 24 \times 10 \times (\frac{9}{8}) = 270$ and $\loss_0 \coloneqq \loss_{\H}(\weight{0}) - \loss_\star$, we obtain
\begin{align}
    \frac{1}{T} \sum_{t=0}^{T-1}\expect{\norm{\nabla \loss_{\H}(\weight{t})}^2} 
    &\leq \frac{10 \times (\frac{9}{8})}{\gamma T} (\loss_{\H}(\weight{0}) - \loss_\star)+ a_3 \gamma L \sigmabar^2 + a_2 \kappa G_{cov}^2 \nonumber\\
    &= \frac{a_4 \loss_0}{24\gamma T} + a_3 \gamma L \sigmabar^2 + a_2 \kappa G_{cov}^2.
    \label{eq:penultimate2-nonconvex}
\end{align}

Recall that by definition
\begin{align*}
    \gamma = \min{\left\{\frac{1}{24L}, ~ \frac{\sqrt{a_4 \loss_0}}{2\sigmabar \sqrt{a_3 L T}}\right\}},
\end{align*}
and thus $\frac{1}{\gamma} = \max{\left\{24L, \frac{2}{\sqrt{a_4 \loss_0}}\sigmabar\sqrt{a_3LT}\right\}} \leq 24L + \frac{2}{\sqrt{a_4 \loss_0}}\sigmabar\sqrt{a_3LT}$.
Therefore, we have
\begin{equation*}
    \frac{a_4 \loss_0}{24\gamma T} 
    \leq \frac{a_4 \loss_0}{24T}\left(24L + \frac{2}{\sqrt{a_4 \loss_0}}\sigmabar\sqrt{a_3LT}\right)
    = \frac{a_4 L \loss_0}{T} + \frac{\sqrt{a_3 a_4 L \loss_0} \sigmabar}{12\sqrt{T}}.
\end{equation*}

Upon using the above, and that $\gamma \leq \frac{\sqrt{a_4 \loss_0}}{2\sigmabar \sqrt{a_3 L T}}$, in~\eqref{eq:penultimate2-nonconvex}, we obtain that
\begin{align*}
    \frac{1}{T}\sum_{t = 0}^{T-1} \expect{\norm{\nabla \loss_{\H}(\weight{t})}^2} 
    &\leq \frac{a_4 L \loss_0}{T} + \frac{\sqrt{a_3 a_4 L \loss_0} \sigmabar}{12\sqrt{T}} + \frac{\sqrt{a_3 a_4 L \loss_0} \sigmabar}{2\sqrt{T}} + a_2 \kappa G_{cov}^2 
    \leq a_2 \kappa G_{cov}^2 +  \frac{\sqrt{a_3 a_4 L \loss_0} \sigmabar}{\sqrt{T}} + \frac{a_4 L \loss_0}{T}.
\end{align*}
Finally, recall from Algorithm~\ref{algo:robust-dpsgd} that $\hat{\weight{}}$ is chosen randomly from the set of parameter vectors $\left(\weight{0}, \ldots, \, \weight{T-1} \right)$. Thus, $\expect{\norm{\nabla \loss_{\H}\left(\hat{\weight{}} \right)}^2} = \frac{1}{T}\sum_{t = 0}^{T-1} \expect{\norm{\nabla \loss_{\H}(\weight{t})}^2}$. Substituting this above proves the theorem.
\end{proof}

\subsection{Proof of \cref{cor:tradeoff}}
\label{app:cor-strongly-convex}
We now state the proof of Corollary~\ref{cor:tradeoff} below.

\tradeoff*
\begin{proof}
The privacy claim follows directly from Theorem~\ref{th:privacy-general}, given the choice of $\sigmacor^2 = \sigmacdp^2 = \frac{32 C^2 T \log{(1/\delta)}}{\varepsilon^2(n-f)}$.

We now prove the utility claim.
First, we recall that $n-f \geq n/2$ since $f<n/2$, and $\kappa$ is bounded by $\tfrac{f}{n}$ up to a constant when $n \geq (2+\eta)f, \eta>0,$ by Proposition~\ref{prop:filter_adapt}.
Plugging the expressions of $\sigmacdp^2$ and $\sigmacor^2$ in the strongly convex bound of Theorem~\ref{thm:convergence} and rearranging terms yields
\begin{align*}
    \expect{\loss_{\H}{(\theta_{T})}-\loss_\star}
    &=\cO(\frac{\kappa \gcov^2}{\mu}  +
    \frac{L \sigmabar^2}{\mu^2 T} +  \frac{L^2 \loss_0}{\mu^2 T^2})
    \\ &= \cO\left(\frac{\kappa \gcov^2}{\mu} + \frac{L}{\mu^2 T} \left[\frac{\sigma_b^2 + d(f\sigmacor^2 + \sigmacdp^2)}{n-f}
    + \kappa\left(\sigma_b^2+\left(n\sigmacor^2 + \sigmacdp^2 \right)(1+\frac{d}{n-f})\right) \right] +  \frac{L^2 \loss_0}{\mu^2 T^2} \right) \\
    &= \cO\left(\frac{f\gcov^2}{n \mu} + \frac{C^2 d \log(1/\delta)}{\varepsilon^2 n} \cdot \frac{f+1}{n} + \frac{C^2 \log(1/\delta)}{\varepsilon^2} \cdot \frac{f}{n} + \frac{\sigma_b^2}{T} \cdot \frac{f+1}{n} + \frac{L^2 \loss_0}{\mu^2 T^2} \right).
\end{align*}
We conclude by ignoring asymptotically vanishing terms in $T$.
\end{proof}

\subsection{Privacy-Utility Trade-off for Standard Aggregation Methods}
\label{app:cor-strongly-convex-standard-aggs}
We now analyze the privacy-utility trade-off of \algoname{} when using standard aggregation methods instead of CAF and present the corresponding proof below. Specifically, under the same setting as Corollary~\ref{cor:tradeoff}, but replacing CAF in Algorithm~\ref{algo:robust-dpsgd} with the coordinate-wise trimmed mean (proven order-optimal under the weaker robustness criterion of~\citet{allouah2023fixing})), we obtain $(f, \varrho)$-robustness with 
$$\varrho = \cO\left(  \frac{f}{n} \cdot \frac{C^2 d\log(1/\delta)}{\varepsilon^2} +\frac{C^2 d \log(1/\delta)}{n^2\varepsilon^2} + \frac{f}{n}d\gcov^2 \right),$$
asymptotically in $T$ and ignoring absolute constants.

\begin{proof}

When using coordinate-wise trimmed mean, we have a weaker robustness guarantee than Proposition~\ref{prop:filter_adapt}. Essentially, we can prove the latter but with the right-hand side of the inequality involving the trace instead of the maximum eigenvalue (of the empirical covariance of honest inputs), as shown by~\citet{allouah2023fixing}.
Therefore, the only change in the convergence analysis occurs in Lemma~\ref{lem:drift}, which would instead involve the trace operator instead of the maximum eigenvalue operator.
Formally, denoting
\begin{equation}
    \widetilde{\Delta}_t \coloneqq 
    \mathrm{Tr}{\left(\frac{1}{\card{\H}} \sum_{i \in \H}(m_t^{(i)} - \overline{m}_t)(m_t^{(i)} - \overline{m}_t)^\top\right)} = \frac{1}{\card{\H}} \sum_{i \in \H} \norm{m_t^{(i)} - \overline{m}_t}^2,
\end{equation}
we would have
\begin{align*}
        \expect{\widetilde{\Delta}_{t+1}}
    \leq \beta_t \expect{\widetilde{\Delta}_t}
    +2(1-\beta_t)^2\left(\sigma_b^2+3d\left(n\sigmacor^2 + \sigmacdp^2 \right)\right)
    + (1-\beta_t)\gcov^2,
    \end{align*}
    reusing the same notation as Lemma~\ref{lem:drift}.
    Therefore, leaving all other lemmas unchanged for the proof of Theorem~\ref{thm:convergence}, we would obtain the statement as the latter except that the expression of $\overline{\sigma}$ from Theorem~\ref{thm:convergence} would become
    \begin{align*}
    \frac{\sigma_b^2 + d(f\sigmacor^2 + \sigmacdp^2)}{n-f} +4 \kappa\left(\sigma_b^2+3d\left(n\sigmacor^2 + \sigmacdp^2 \right)\right).
\end{align*}
We now follow the same steps as in the proof of Corollary~\ref{cor:tradeoff}.
First, we recall that $n \geq (2+\eta)f$ for some constant value $\eta$, so that $\kappa$ is bounded by $f/n$ up to constants.
Plugging the expressions of $\sigmacdp^2$ and $\sigmacor^2$ in the strongly convex bound of Theorem~\ref{thm:convergence} and rearranging terms yields
\begin{align*}
    \expect{\loss_{\H}{(\theta_{T})}-\loss_\star}
    &=\cO(\frac{\kappa d \gcov^2}{\mu}  +
    \frac{L \sigmabar^2}{\mu^2 T} +  \frac{L^2 \loss_0}{\mu^2 T^2})
    \\ &= \cO\left(\frac{\kappa d \gcov^2}{\mu} + \frac{L}{\mu^2 T} \left[\frac{\sigma_b^2 + d(f\sigmacor^2 + \sigmacdp^2)}{n-f}
    + \kappa\left(\sigma_b^2+d \left(n\sigmacor^2 + \sigmacdp^2 \right)\right) \right] +  \frac{L^2 \loss_0}{\mu^2 T^2} \right) \\
    &= \cO\left(\frac{f d \gcov^2}{n \mu} + \frac{C^2 d \log(1/\delta)}{\varepsilon^2 n} \cdot \frac{f+1}{n} + \frac{C^2 d\log(1/\delta)}{\varepsilon^2} \cdot \frac{f}{n} + \frac{\sigma_b^2}{T} \cdot \frac{f+1}{n} + \frac{L^2 \loss_0}{\mu^2 T^2} \right)\\
    &= \cO\left(\frac{f d \gcov^2}{n \mu} + \frac{C^2 d \log(1/\delta)}{\varepsilon^2 n^2}  + \frac{C^2 d\log(1/\delta)}{\varepsilon^2} \cdot \frac{f}{n} + \frac{\sigma_b^2}{T} \cdot \frac{f+1}{n} + \frac{L^2 \loss_0}{\mu^2 T^2} \right).
\end{align*}
We conclude by ignoring asymptotically vanishing terms in $T$ and rearranging terms.
\end{proof}

\subsection{Proof of Supporting Lemmas}
\label{app:lemmas}

Before proving \cref{lem:drift,lem:dev,lem:descent} in~\cref{app:lem-drift,app:lem-dev,app:descent}, respectively, we first present some additional results in~\cref{app:technical} below.
Most of the proofs follow along the lines of~\citet{allouah2023privacy}, and we include them here for completeness.

\subsubsection{Technical Lemmas}
\label{app:technical}

\begin{lemma}
    \label{lem:cover}
    Let $M \in \R^{d \times d}$ be a random real symmetric matrix and $g \colon \R \to \R$ an increasing function.
    It holds that
\begin{equation*}
    \expect{\sup_{\norm{v}\leq1}g{\left(\iprod{v}{Mv}\right)}}
    \leq 9^d \cdot \sup_{\norm{v} \leq 1} \expect{g{\left(2\iprod{v}{Mv}\right)}}.
\end{equation*}
\end{lemma}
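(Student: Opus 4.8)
The plan is a standard $\epsilon$-net (covering) argument on the Euclidean unit ball $B \coloneqq \{v \in \R^d : \norm{v} \le 1\}$, with resolution $\epsilon = \tfrac14$ chosen so that the net constant is exactly $2$ and its cardinality exactly $9^d$. First I would fix the net: any maximal $\tfrac14$-separated subset $\mathcal{N} \subseteq B$ is a $\tfrac14$-net of $B$, and the usual volume–packing bound (disjoint balls of radius $\tfrac18$ around the net points, all contained in $\tfrac98 B$) gives $\card{\mathcal{N}} \le (1 + 2/\epsilon)^d = 9^d$. Since $g$ is increasing, the statement then reduces to the single deterministic spectral inequality
\begin{equation*}
    \sup_{\norm{v}\le1}\iprod{v}{Mv} \;\le\; 2\max_{u\in\mathcal{N}}\iprod{u}{Mu},
\end{equation*}
valid for every realization of $M$; here I use that in the applications of this lemma (e.g.\ inside the proof of Lemma~\ref{lem:drift}) $M$ is positive semi-definite — an empirical covariance matrix — so that $\sup_{\norm{v}\le1}\iprod{v}{Mv} = \lambda_{\max}(M) \ge 0$. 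Granting this inequality, monotonicity of $g$ gives, pointwise in $M$,
\begin{equation*}
    \sup_{\norm{v}\le1}g(\iprod{v}{Mv}) \;\le\; g\!\left(2\max_{u\in\mathcal{N}}\iprod{u}{Mu}\right) \;=\; \max_{u\in\mathcal{N}}g(2\iprod{u}{Mu}),
\end{equation*}
the last equality because $g$ is increasing and the maximum is over a finite set.

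For the spectral inequality I would take a unit eigenvector $v^\star$ of $M$ with $\iprod{v^\star}{Mv^\star} = \lambda_{\max}(M)$, choose $u \in \mathcal{N}$ with $\norm{v^\star - u} \le \tfrac14$, and use the symmetry of $M$ to write $\iprod{v^\star}{Mv^\star} = \iprod{u}{Mu} + \iprod{v^\star - u}{M(v^\star + u)}$. Bounding the cross term by Cauchy–Schwarz and submultiplicativity of the operator norm, together with $\norm{M}_{\mathrm{op}} = \lambda_{\max}(M)$ (PSD) and $\norm{v^\star + u} \le 2$, gives $\card{\iprod{v^\star - u}{M(v^\star + u)}} \le \tfrac14 \cdot \lambda_{\max}(M) \cdot 2 = \tfrac12 \lambda_{\max}(M)$, whence $\lambda_{\max}(M) \le 2\iprod{u}{Mu} \le 2\max_{u\in\mathcal{N}}\iprod{u}{Mu}$. (Alternatively, writing $\iprod{v}{Mv} = \norm{M^{1/2}v}^2$ and applying the triangle inequality to $\norm{M^{1/2}v^\star} \le \norm{M^{1/2}u} + \norm{M^{1/2}(v^\star - u)}$ yields the same conclusion with the slightly sharper constant $\tfrac{16}{9} \le 2$.)

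Finally I would take expectations and convert the maximum into a sum by a union bound:
\begin{equation*}
    \expect{\sup_{\norm{v}\le1}g(\iprod{v}{Mv})} \le \expect{\max_{u\in\mathcal{N}}g(2\iprod{u}{Mu})} \le \sum_{u\in\mathcal{N}}\expect{g(2\iprod{u}{Mu})} \le \card{\mathcal{N}}\sup_{\norm{v}\le1}\expect{g(2\iprod{v}{Mv})} \le 9^d\sup_{\norm{v}\le1}\expect{g(2\iprod{v}{Mv})},
\end{equation*}
where the second inequality uses that $g$ is nonnegative (as it is in the uses of this lemma — $g$ is an exponential in the concentration step of Lemma~\ref{lem:concentration}) and the third uses $\mathcal{N} \subseteq B$. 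I expect the only delicate points to be bookkeeping: calibrating $\epsilon = \tfrac14$ so that $(1 + 2/\epsilon)^d = 9^d$ and $\tfrac{1}{1-2\epsilon} = 2$ hold simultaneously, and making explicit that the one-sided spectral bound and the union bound rely, respectively, on $M$ being positive semi-definite and on $g$ being nonnegative.
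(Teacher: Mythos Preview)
Your argument is correct and is essentially the paper's own proof: a $\tfrac14$-net of the unit ball with cardinality at most $9^d$, the symmetric decomposition $\langle v,Mv\rangle-\langle u,Mu\rangle=\langle v+u,M(v-u)\rangle$ to get $\lambda_{\max}(M)\le 2\max_{u\in\mathcal N}\langle u,Mu\rangle$, then a union bound after taking expectations. You are in fact more explicit than the paper in flagging that the spectral step needs $\|M\|_{\mathrm{op}}=\lambda_{\max}(M)$ (hence $M$ PSD) and that the union bound needs $g\ge 0$; without these the stated inequality is actually false (take $d=1$, $M$ a centered scalar Gaussian, $g$ the identity).

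One inaccuracy in your side remark about applications: not every invocation in the paper has $M$ PSD. In the cross-term bound inside the proof of Lemma~\ref{lem:drift}, the lemma is applied with $M=N+N^\top$ (a symmetrized sum of rank-one outer products, not PSD) and $g$ the identity, so that call is not covered by your restricted hypotheses --- nor, strictly speaking, by the paper's own proof of the lemma.
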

\begin{proof}
Let $M \in \R^{d \times d}$ be a random real symmetric matrix and $g \colon \R \to \R$ a increasing function.

The proof follows the construction of (Section 5.2, \cite{vershynin2010introduction}).
Recall from standard covering net results~\cite{vershynin2010introduction} that we can construct $\N_{\nicefrac{1}{4}}$ a finite $\nicefrac{1}{4}$-net of the unit ball, i.e., for any vector $v$ in the unit ball, there exists $u_v \in \N_{\nicefrac{1}{4}}$ such that $\norm{u_v-v} \leq \nicefrac{1}{4}$.
Moreover, we have the bound $\card{\N_{\nicefrac{1}{4}}} \leq (1+2/(\nicefrac{1}{4}))^d = 9^d$.
Denote by $\norm{M} \coloneqq \sup_{\norm{v}\leq1}\norm{Mv}$ the operator norm of $M$.
By recalling that $M$ is symmetric, we obtain for any $v$ in the unit ball
\begin{align*}
    \absv{\iprod{v}{Mv} - \iprod{u_v}{Mu_v}}
    &= \absv{\iprod{v+u_v}{M(v-u_v)}}
    \leq \norm{v+u_v} \norm{M(v-u_v)}
    \leq (\norm{v}+\norm{u_v})\norm{M(v-u_v)}\\
    &\leq 2\norm{M(v-u_v)}
    \leq 2\norm{M}\norm{v-u_v}
    \leq 2\norm{M}/4 = \norm{M}/2.
\end{align*}
Therefore, we have $\iprod{v}{Mv} - \iprod{u_v}{Mu_v} \leq \norm{M}/2$, and $\iprod{v}{Mv} - \norm{M}/2 \leq \iprod{u_v}{Mu_v} \leq \sup_{u \in \N_{\nicefrac{1}{4}}} \iprod{u}{Mu}$.
Recall that since $M$ is symmetric, its operator norm coincides with its maximum eigenvalue: $\norm{M} = \sup_{\norm{v}\leq1}\iprod{v}{Mv}$.
We therefore deduce that
\begin{equation*}
    \sup_{\norm{v}\leq1}\iprod{v}{Mv}
    \leq 2\cdot \sup_{v \in \N_{\nicefrac{1}{4}}} \iprod{v}{Mv}.
\end{equation*}
Upon composing with $g$, which is increasing, we get
\begin{equation*}
    \sup_{\norm{v}\leq 1} g{\left( \iprod{v}{Mv} \right)} 
    = g{\left( \sup_{\norm{v}\leq 1} \iprod{v}{Mv} \right)} 
    \leq g{\left( 2\cdot \sup_{v \in \N_{\nicefrac{1}{4}}} \iprod{v}{Mv} \right)}
    =  \sup_{v \in \N_{\nicefrac{1}{4}}} g{\left( 2\iprod{v}{Mv} \right)}.
\end{equation*}
Upon taking expectations and applying union bound, we finally conclude
\begin{align*}
    \expect{\sup_{\norm{v}\leq 1} g{\left( \iprod{v}{Mv} \right)}}
    &\leq \expect{\sup_{v \in \N_{\nicefrac{1}{4}}} g{\left( 2\iprod{v}{Mv} \right)}}
    \leq \card{\N_{\nicefrac{1}{4}}} \cdot \sup_{v \in \N_{\nicefrac{1}{4}}} \expect{g{\left( 2\iprod{v}{Mv} \right)}} 
    \leq 9^d \cdot \sup_{\norm{v} \leq 1} \expect{g{\left( 2\iprod{v}{Mv} \right)}}.
\end{align*}
\end{proof}

\begin{lemma}
\label{lem:grad-subsample}
Suppose assumptions \ref{asp:bnd_var} and \ref{asp:bnd_norm} hold.
For any $t \in \{0,\ldots, T-1\}$ and $i \in \H$, we have
\begin{equation*}
    \expect{\norm{\tilde{g}_t^{(i)} - \nabla{\loss_i{(\theta_{t})}}}^2} \leq 2\left(1-\frac{b}{m}\right) \frac{\sigma^2}{b} + d \cdot \sigmadp^2.
\end{equation*}
\end{lemma}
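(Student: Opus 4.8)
The plan is to decompose the error $\tilde g_t^{(i)} - \nabla\loss_i(\theta_t)$ into the mini-batch sampling error and the injected Gaussian noise, which are independent with the noise centered, so that the squared norm splits additively; then bound the two pieces separately. First I would observe that the clipping step is vacuous here: Assumption~\ref{asp:bnd_norm} gives $\norm{\nabla\ell(\theta_t;x)}\le C$ for every $x\in\D_i$, so by the triangle inequality $\norm{\tfrac1b\sum_{x\in S_t^{(i)}}\nabla\ell(\theta_t;x)}\le C$ and hence $\mathrm{Clip}(\cdot;C)$ acts as the identity, i.e.\ $g_t^{(i)} = \tfrac1b\sum_{x\in S_t^{(i)}}\nabla\ell(\theta_t;x)$. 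Writing $\xi_t^{(i)}\coloneqq \overline v_t^{(i)} + \sum_{j\ne i}v_t^{(ij)}$ for the total injected noise, we have $\tilde g_t^{(i)} - \nabla\loss_i(\theta_t) = \bigl(g_t^{(i)} - \nabla\loss_i(\theta_t)\bigr) + \xi_t^{(i)}$, and since the randomness seeds are independent of the data, of the mini-batch draw, and of the history, $\xi_t^{(i)}$ is a centered Gaussian independent of $g_t^{(i)}$. The cross term therefore vanishes in expectation, giving $\expect{\norm{\tilde g_t^{(i)} - \nabla\loss_i(\theta_t)}^2} = \expect{\norm{g_t^{(i)} - \nabla\loss_i(\theta_t)}^2} + \expect{\norm{\xi_t^{(i)}}^2}$.

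Next I would handle the two terms. For the noise term, the $n-1$ vectors $v_t^{(ij)}$ with $j\ne i$ and the vector $\overline v_t^{(i)}$ are mutually independent with per-coordinate variances $\sigmacor^2$ and $\sigmacdp^2$ respectively, so $\expect{\norm{\xi_t^{(i)}}^2} = d\bigl((n-1)\sigmacor^2 + \sigmacdp^2\bigr) = d\,\sigmadp^2$, recalling $\sigmadp^2 = (n-1)\sigmacor^2 + \sigmacdp^2$. For the sampling term, $S_t^{(i)}$ is a uniformly random size-$b$ subset of $\D_i$ drawn without replacement, so $g_t^{(i)}$ is unbiased for $\nabla\loss_i(\theta_t)$, and the finite-population variance identity for a sampling-without-replacement mean gives $\expect{\norm{g_t^{(i)} - \nabla\loss_i(\theta_t)}^2} = \tfrac{m-b}{m-1}\cdot\tfrac1b\cdot\tfrac1m\sum_{x\in\D_i}\norm{\nabla\ell(\theta_t;x) - \nabla\loss_i(\theta_t)}^2$. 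Bounding the empirical per-example variance by $\sigma^2$ via Assumption~\ref{asp:bnd_var}, and using $\tfrac{m-b}{m-1}\le 2\tfrac{m-b}{m} = 2(1-\tfrac bm)$, which holds since $m\ge 2$, yields $\expect{\norm{g_t^{(i)} - \nabla\loss_i(\theta_t)}^2}\le 2(1-\tfrac bm)\tfrac{\sigma^2}{b}$. Adding the two bounds gives the claim.

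The only step requiring any care is the without-replacement variance identity; I would either cite it as standard or derive it in one line by writing $g_t^{(i)} - \nabla\loss_i(\theta_t) = \tfrac1b\sum_{x\in\D_i}\bigl(\mathbf 1\{x\in S_t^{(i)}\} - \tfrac bm\bigr)\bigl(\nabla\ell(\theta_t;x) - \nabla\loss_i(\theta_t)\bigr)$ and using that $\mathbf 1\{x\in S_t^{(i)}\}$ has mean $\tfrac bm$, variance $\tfrac bm(1-\tfrac bm)$, and pairwise covariance $-\tfrac{b(m-b)}{m^2(m-1)}$ for distinct $x$, then collecting terms (the constant-vector shift by $\nabla\loss_i(\theta_t)$ is harmless because $\sum_x(\mathbf 1\{x\in S_t^{(i)}\}-\tfrac bm)=0$). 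Everything else — the orthogonality of the centered noise, the variance of a Gaussian vector, and checking that the clip is inactive — is routine, so I do not anticipate a genuine obstacle.
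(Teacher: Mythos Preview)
Your proof is correct and matches the paper's approach almost exactly: the same decomposition into sampling error plus independent centered Gaussian noise, the same without-replacement variance formula, and the same bound $\tfrac{m-b}{m-1}\le 2(1-\tfrac bm)$. The only minor addition in the paper is that it explicitly disposes of the edge case $m=1$ (where necessarily $b=m$ and the sampling term vanishes), which you implicitly assume away by writing ``since $m\ge 2$''.
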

\begin{proof}
Suppose assumptions \ref{asp:bnd_var} and \ref{asp:bnd_norm} hold.
Let $i \in \H$ and $t \in \{0,\ldots, T-1\}$.

First recall from \eqref{eq:noise-gradient} that, since $\tilde{g}_t^{(i)} = g_t^{(i)} + \xi^{(i)}_t, \xi_t^{(i)} \overset{\mathrm{i.i.d.}}{\sim} \N{(0,\sigmadp^2 I_d)}$, we have
\begin{equation*}
    \condexpect{\xi^{(i)}_t}{\norm{\tilde{g}_t^{(i)} - g_t^{(i)}}^2} 
    = \expect{\norm{\xi^{(i)}_t}^2}
    =d \cdot \sigmadp^2.
\end{equation*}

Next, we have
\begin{align*}
    \norm{\tilde{g}_t^{(i)} - \nabla{\loss_i{(\theta_{t})}}}^2
    &= \norm{\tilde{g}_t^{(i)} - g_t^{(i)} + g_t^{(i)} - \nabla{\loss_i{(\theta_{t})}}}^2 \\
    &= \norm{\tilde{g}_t^{(i)} - g_t^{(i)}}^2 + 
    \norm{g_t^{(i)} - \nabla{\loss_i{(\theta_{t})}}}^2 +
    2\iprod{\tilde{g}_t^{(i)} - g_t^{(i)}}{g_t^{(i)} - \nabla{\loss_i{(\theta_{t})}}}.
\end{align*}
Now taking expectation on the randomness of $\xi^{(i)}_{t}$ (independent of all other random variables), and since $\expect{\xi^{(i)}_{t}} = 0$, we get
\begin{align*}
    \condexpect{\xi^{(i)}_{t}}{\norm{\tilde{g}_t^{(i)} - \nabla{\loss_i{(\theta_{t})}}}^2}
    &= \condexpect{\xi^{(i)}_{t}}{\norm{\tilde{g}_t^{(i)} - g_t^{(i)}}^2}
    + \norm{g_t^{(i)} - \nabla{\loss_i{(\theta_{t})}}}^2
    + 2\iprod{\underbrace{\condexpect{\xi^{(i)}_{t}}{\tilde{g}_t^{(i)} - g_t^{(i)}}}_{= \expect{\xi^{(i)}_{t}}=0}}{g_t^{(i)} - \nabla{\loss_i{(\theta_{t})}}} \\
    &= \condexpect{\xi^{(i)}_{t}}{\norm{\tilde{g}_t^{(i)} - g_t^{(i)}}^2}
    + \norm{g_t^{(i)} - \nabla{\loss_i{(\theta_{t})}}}^2.
\end{align*}
Upon taking total expectation, we obtain
\begin{align}
    \expect{\norm{\tilde{g}_t^{(i)} - \nabla{\loss_i{(\theta_{t})}}}^2}
    &= \expect{\norm{\tilde{g}_t^{(i)} - g_t^{(i)}}^2}
    + \expect{\norm{g_t^{(i)} - \nabla{\loss_i{(\theta_{t})}}}^2}\nonumber \\
    & = \expect{\norm{g_t^{(i)} - \nabla{\loss_i{(\theta_{t})}}}^2} + d \cdot \sigmadp^2.
    \label{eq:bound-decomp}
\end{align}

First observe that when $m=1$, as $b \in [m]$, we must have $b=m$. Thus, the gradient is deterministic, i.e.,  $g^{(i)}_t = \nabla \loss_i{(\theta_t)}$.
Thus, the first term in the equation above is zero, and the claimed bound holds.

Else, when $m \geq 2$, recall that from Assumption~\ref{asp:bnd_var}, we have 
$\condexpect{x \sim \U{(\D_i)}}{\norm{\nabla_{\theta{}}{\ell{(\theta_{t};x)}} - \nabla{\loss_i{(\theta_{t})}}}^2} \leq \sigma^2$.
From \cite{rice2006mathematical}, the variance reduction due to subsampling without replacement gives
\begin{equation*}
    \expect{\norm{g_t^{(i)} - \nabla{\loss_i{(\theta_{t})}}}^2} \leq \left(1-\frac{b-1}{m-1}\right)\frac{\sigma^2}{b}.
\end{equation*}
Plugging this bound back in \cref{eq:bound-decomp} yields
\begin{align*}
    \expect{\norm{\tilde{g}_t^{(i)} - \nabla{\loss_i{(\theta_{t})}}}^2}
    &\leq \left(1-\frac{b-1}{m-1}\right)\frac{\sigma^2}{b}
    + d \cdot \sigmadp^2.
\end{align*}
By observing, as $m \geq 2$, that $1-\frac{b-1}{m-1} = \frac{m-b}{m-1} = \frac{m}{m-1} \cdot \frac{m-b}{m} = (1+\frac{1}{m-1})(1-\frac{b}{m}) \leq 2(1-\frac{b}{m})$, we obtain the final result:
\begin{align*}
    \expect{\norm{\tilde{g}_t^{(i)} - \nabla{\loss_i{(\theta_{t})}}}^2}
    &\leq 2\left(1-\frac{b}{m}\right)\frac{\sigma^2}{b}
    + d \cdot \sigmadp^2.
\end{align*}
\end{proof}

\begin{lemma}
\label{lem:concentration}
Let $\sigmadp \geq 0$ and $d, n \geq 1$. 
Consider $\xi^{(1)}, \ldots, \xi^{(n)}$ to be i.i.d. random variables drawn from the Gaussian distribution $\N{(0, \sigmadp^2 I_d)}$. We have
\begin{align*}
    \expect{\sup_{\norm{v} \leq 1} \frac{1}{n} \sum_{i=1}^n \iprod{v}{\xi^{(i)}}^2}
    \leq 36\sigmadp^2 \left(1+\frac{d}{n}\right).
\end{align*}
\end{lemma}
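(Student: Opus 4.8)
The plan is to recognize the quantity as the largest eigenvalue of the empirical second-moment matrix $M \coloneqq \tfrac{1}{n}\sum_{i=1}^n \xi^{(i)}{\xi^{(i)}}^\top$, since for every $v$ with $\norm{v}\leq 1$ we have $\tfrac{1}{n}\sum_{i=1}^n \iprod{v}{\xi^{(i)}}^2 = \iprod{v}{Mv}$, so that $\sup_{\norm{v}\leq 1}\tfrac{1}{n}\sum_{i=1}^n\iprod{v}{\xi^{(i)}}^2 = \lambda_{\max}(M)$. I would then bound $\expect{\lambda_{\max}(M)}$ by feeding an exponential into the covering bound of Lemma~\ref{lem:cover} and evaluating the resulting one-dimensional moment-generating function via the $\chi^2$ distribution. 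The case $\sigmadp=0$ is trivial ($M=0$ almost surely, both sides vanish), so one may assume $\sigmadp>0$.

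First, for a fixed $v$ with $\norm{v}\leq 1$, the scalars $\iprod{v}{\xi^{(i)}}$ are i.i.d.\ $\cN(0,\norm{v}^2\sigmadp^2)$, hence $\sum_{i=1}^n\iprod{v}{\xi^{(i)}}^2$ is $\norm{v}^2\sigmadp^2$ times a $\chi^2_n$ variable, whose moment-generating function is $(1-2s)^{-n/2}$ for $s<\tfrac12$. Thus, for any $0<\lambda<\tfrac{n}{4\sigmadp^2}$,
\[
\expect{\exp\!\big(2\lambda\iprod{v}{Mv}\big)} = \Big(1-\tfrac{4\lambda\norm{v}^2\sigmadp^2}{n}\Big)^{-n/2} \leq \Big(1-\tfrac{4\lambda\sigmadp^2}{n}\Big)^{-n/2},
\]
the inequality using $\norm{v}\leq 1$. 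Applying Lemma~\ref{lem:cover} with the increasing function $g(x)=e^{\lambda x}$ gives $\expect{e^{\lambda\lambda_{\max}(M)}} = \expect{\sup_{\norm{v}\leq 1}e^{\lambda\iprod{v}{Mv}}} \leq 9^d\big(1-\tfrac{4\lambda\sigmadp^2}{n}\big)^{-n/2}$, and then Jensen's inequality on the left together with taking logarithms yields
\[
\expect{\lambda_{\max}(M)} \leq \frac{d\ln 9}{\lambda} - \frac{n}{2\lambda}\ln\!\Big(1-\frac{4\lambda\sigmadp^2}{n}\Big).
\]

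Finally I would specialize the free parameter: choosing $\lambda=\tfrac{n}{8\sigmadp^2}$ makes the logarithm's argument equal to $\tfrac12$, and using $-\ln(1-x)\leq 2x$ on $[0,\tfrac12]$ bounds the second term by $4\sigmadp^2$ while the first becomes $\tfrac{8\ln 9\,d\sigmadp^2}{n}$; since $8\ln 9<36$ and $4<36$, this gives $\expect{\lambda_{\max}(M)}\leq 36\sigmadp^2\big(1+\tfrac{d}{n}\big)$, which is the claim. I do not expect a genuine obstacle here: the delicate net-plus-union-bound step is already encapsulated in Lemma~\ref{lem:cover}, so the only care needed is to keep $\lambda$ inside the admissible range $(0,\tfrac{n}{4\sigmadp^2})$ where the $\chi^2$ MGF is finite, and to note that, because $\iprod{v}{Mv}\geq 0$, scaling $v$ down only shrinks both sides, so the relevant suprema over $\norm{v}\leq 1$ are attained on the unit sphere and the Gaussian computation applies verbatim.
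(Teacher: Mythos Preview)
Your proof is correct and takes essentially the same route as the paper: both feed an exponential into Lemma~\ref{lem:cover}, bound the resulting one-dimensional moment generating function, and finish with Jensen's inequality. The only cosmetic differences are that the paper invokes the sub-Gaussian bound $\mathbb{E}[e^{X^2/8}]\leq 2$ for standard Gaussian $X$ where you use the exact $\chi^2$ MGF, and the paper fixes the exponent up front while you keep a free parameter $\lambda$ and specialize it at the end.
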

\begin{proof}
Let $\sigmadp \geq 0$ and $d, n \geq 1$. 
Consider $\xi^{(1)}, \ldots, \xi^{(n)}$ to be i.i.d. random variables drawn from the Gaussian distribution $\N{(0, \sigmadp^2 I_d)}$.

If $\sigmadp = 0$, then $\xi^{(i)} = 0$ almost surely for every $i \in [n]$, and the remainder of the proof holds with $\sigmadp=0$.
Else, we assume $\sigmadp > 0$ in the remaining.

Thus, the law of the random variable $\nicefrac{\xi^{(i)}}{\sigmadp}$ is $\N{\left(0, I_d\right)}$ for every $i \in [n]$.
Thus, for every vector $v$ in the unit ball, the random variable $\iprod{v}{\nicefrac{\xi^{(i)}}{\sigmadp}}$ is sub-Gaussian with variance equal to $1$ (see~Chapter 1, \cite{rigollet2015high}).
Therefore, for every $i \in [n]$ and every vector $v$ in the unit ball, applying Theorem~2.1.1 in~\cite{pauwels2020lecture}), we have
\begin{align*}
    \expect{\exp{\left( \frac{1}{8} \, \iprod{v}{\nicefrac{\xi^{(i)}}{\sigmadp}}^2\right)}} \leq 2.
\end{align*}
As a result, by the independence of $\xi^{(i)}$'s, we obtain
\begin{align*}
    \sup_{\norm{v} \leq 1} \expect{\exp{\left(\frac{1}{8\sigmadp^2} \sum_{i=1}^n\iprod{v}{\xi^{(i)}}^2\right)}}
    &= \sup_{\norm{v} \leq 1} \prod_{i =1}^n \expect{\exp{\left(\frac{1}{8} \, \iprod{v}{\nicefrac{\xi^{(i)}}{\sigmadp}}^2\right)}}
    \leq 2^{n}.
\end{align*}
Now, observe that we can write $\sum_{i=1}^n\iprod{v}{\xi^{(i)}}^2$ as the quadratic form $\iprod{v}{Mv}$, where $M \coloneqq \sum_{i=1}^n \xi^{(i)} \cdot {\xi^{(i)}}^\top$ is a random real symmetric matrix.
Thus, applying \cref{lem:cover} with the increasing function $g(\cdot) = \exp{(\frac{1}{16\sigmadp^2} \times \cdot)}$, we have
\begin{align*}
    \expect{\sup_{\norm{v} \leq 1} \exp{\left(\frac{1}{16\sigmadp^2} \sum_{i=1}^n\iprod{v}{\xi^{(i)}}^2\right)}}
    &=\expect{\sup_{\norm{v} \leq 1} g(\iprod{v}{Mv})}
    \leq 9^d \cdot \sup_{\norm{v} \leq 1} \expect{g(2\iprod{v}{Mv})}\\
    &= 9^d \cdot \sup_{\norm{v} \leq 1} \expect{\exp{\left(\frac{1}{8\sigmadp^2} \sum_{i=1}^n\iprod{v}{\xi^{(i)}}^2\right)}}
    \leq 9^d \cdot 2^{n}.
\end{align*}
We can now use this inequality to bound the term of interest.
We apply Jensen's inequality thanks to $\exp$ being convex, and we also interchange $\exp$ and $\sup$ thanks to the former being increasing:
\begin{align*}
    \exp{\left(\frac{1}{16\sigmadp^2}\expect{\sup_{\norm{v} \leq 1} \sum_{i=1}^n\iprod{v}{\xi^{(i)}}^2}\right)}
    &\leq \expect{\exp{\left(\frac{1}{16\sigmadp^2} \sup_{\norm{v} \leq 1} \sum_{i=1}^n\iprod{v}{\xi^{(i)}}^2 \right)}} \\
    &=  \expect{\sup_{\norm{v} \leq 1} \exp{\left( \frac{1}{16\sigmadp^2} \sum_{i=1}^n\iprod{v}{\xi^{(i)}}^2\right)}}
    \leq 9^d \cdot 2^{n}.
\end{align*}
Upon taking $\ln$ and multiplying by $\nicefrac{16\sigmadp^2}{n}$ on both sides, we obtain that
\begin{align*}
    \expect{\sup_{\norm{v} \leq 1} \frac{1}{n} \sum_{i=1}^n\iprod{v}{\xi^{(i)}}^2}
    &\leq 16\frac{\sigmadp^2}{n}(d \ln{9} + n \ln{2})
    \leq 36\frac{\sigmadp^2}{n}(d + n)
    = 36\sigmadp^2 \left(1+\frac{d}{n}\right).
\end{align*}
The above concludes the lemma.
\end{proof}

\subsubsection{Proof of \cref{lem:drift}}
\label{app:lem-drift}
\globaldrift*
\begin{proof}
Let $t \in \{0, \ldots, T-1\}$.
Suppose that \cref{asp:bnd_var} holds.
Recall that the alternate definition of maximum eigenvalue implies, following the definition of $\Delta_t$ in \cref{eq:defdrift}, that
\begin{align*}
    \Delta_t = \lambda_{\max}{\left(\frac{1}{\card{\H}} \sum_{i \in \H}(m_t^{(i)} - \overline{m}_t)(m_t^{(i)} - \overline{m}_t)^\top\right)}
    = \sup_{\norm{v} \leq 1} \frac{1}{\card{\H}} \sum_{i \in \H}\iprod{v}{m^{(i)}_{t}-\overline{m}_{t}}^2.
\end{align*}
We will use the latter expression above for $\Delta_t$ throughout this lemma.

For every $i \in \H$, by definition of $m^{(i)}_t$, given in \cref{eqn:mmt_i}, we have
\begin{align*}
    m^{(i)}_{t+1} = \beta_t m^{(i)}_{t} + (1-\beta_t)\tilde{g}^{(i)}_{t+1}.
\end{align*}
We also denote $\overline{m}_t \coloneqq \frac{1}{\card{\H}}\sum_{i \in \H}m^{(i)}_t$ and $\overline{\widetilde{g}}_{t+1} \coloneqq \frac{1}{\card{\H}}\sum_{i \in \H}\tilde{g}^{(i)}_{t+1}$.
Therefore, we have $\overline{m}_{t+1} = \beta_t \overline{m}_{t} + (1-\beta_t)\overline{\widetilde{g}}_{t+1}$.
As a result, we can write for every $i \in \H$
\begin{align*}
    m^{(i)}_{t+1}-\overline{m}_{t+1}
    &= \beta_t (m^{(i)}_{t}-\overline{m}_{t})
    + (1-\beta_t)(\tilde{g}_{t+1}^{(i)}-\overline{\widetilde{g}}_{t+1}) \\
    &= \beta_t (m^{(i)}_{t}-\overline{m}_{t})
    + (1-\beta_t)(\nabla \loss_i{(\theta_{t+1})}-\nabla \loss_{\H}{(\theta_{t+1})})\\
    &\quad +(1-\beta_t)(\tilde{g}_{t+1}^{(i)} - \nabla \loss_i{(\theta_{t+1})} - \overline{\widetilde{g}}_{t+1} + \nabla \loss_{\H}{(\theta_{t+1})}).
\end{align*}
By projecting the above expression on an arbitrary vector $v$ and then taking squares, we obtain
\begin{align*}
    \iprod{v}{m^{(i)}_{t+1}-\overline{m}_{t+1}}^2
    &= \Big[\beta_t\iprod{v}{m^{(i)}_{t}-\overline{m}_{t}}
    + (1-\beta_t)\iprod{v}{\nabla \loss_i{(\theta_{t+1})}-\nabla \loss_{\H}{(\theta_{t+1})}}\\
    &\qquad +(1-\beta_t)\iprod{v}{\tilde{g}_{t+1}^{(i)} - \nabla \loss_i{(\theta_{t+1})} - \overline{\widetilde{g}}_{t+1} + \nabla \loss_{\H}{(\theta_{t+1})}}\Big ]^2\\
    &=\beta_t^2 \iprod{v}{m^{(i)}_{t}-\overline{m}_{t}}^2
    + (1-\beta_t)^2\iprod{v}{\nabla \loss_i{(\theta_{t+1})}-\nabla \loss_{\H}{(\theta_{t+1})}}^2\\
    &\qquad +(1-\beta_t)^2\iprod{v}{\tilde{g}_{t+1}^{(i)} - \nabla \loss_i{(\theta_{t+1})} - \overline{\widetilde{g}}_{t+1} + \nabla \loss_{\H}{(\theta_{t+1})}}^2\\
    &\qquad +2\beta_t(1-\beta_t)\iprod{v}{m^{(i)}_{t}-\overline{m}_{t}}\iprod{v}{\nabla \loss_i{(\theta_{t+1})}-\nabla \loss_{\H}{(\theta_{t+1})}}\\
    &\qquad + 2\beta_t(1-\beta_t)\iprod{v}{m^{(i)}_{t}-\overline{m}_{t}}\iprod{v}{\tilde{g}_{t+1}^{(i)} - \nabla \loss_i{(\theta_{t+1})} - \overline{\widetilde{g}}_{t+1} + \nabla \loss_{\H}{(\theta_{t+1})}}\\
    &\qquad + 2\beta_t(1-\beta_t)\iprod{v}{\nabla \loss_i{(\theta_{t+1})}-\nabla \loss_{\H}{(\theta_{t+1})}}\iprod{v}{\tilde{g}_{t+1}^{(i)} - \nabla \loss_i{(\theta_{t+1})} - \overline{\widetilde{g}}_{t+1} + \nabla \loss_{\H}{(\theta_{t+1})}}.
\end{align*}
Upon averaging over $i \in \H$, taking the supremum over the unit ball, and then total expectations, we get
\begin{align}
    &\expect{\sup_{\norm{v} \leq 1} \frac{1}{\card{\H}} \sum_{i \in \H} \iprod{v}{m^{(i)}_{t+1}-\overline{m}_{t+1}}^2}
    =\beta_t^2 \expect{\sup_{\norm{v} \leq 1} \frac{1}{\card{\H}} \sum_{i \in \H}\iprod{v}{m^{(i)}_{t}-\overline{m}_{t}}^2}\nonumber\\
    &\qquad+ (1-\beta_t)^2 \expect{\sup_{\norm{v} \leq 1} \frac{1}{\card{\H}} \sum_{i \in \H}\iprod{v}{\nabla \loss_i{(\theta_{t+1})}-\nabla \loss_{\H}{(\theta_{t+1})}}^2}\nonumber\\
    &\qquad +(1-\beta_t)^2 \expect{\sup_{\norm{v} \leq 1} \frac{1}{\card{\H}} \sum_{i \in \H}\iprod{v}{\tilde{g}_{t+1}^{(i)} - \nabla \loss_i{(\theta_{t+1})} - \overline{\widetilde{g}}_{t+1} + \nabla \loss_{\H}{(\theta_{t+1})}}^2}\nonumber\\
    &\qquad +2\beta_t(1-\beta_t) \expect{\sup_{\norm{v} \leq 1} \frac{1}{\card{\H}} \sum_{i \in \H}\iprod{v}{m^{(i)}_{t}-\overline{m}_{t}}\iprod{v}{\nabla \loss_i{(\theta_{t+1})}-\nabla \loss_{\H}{(\theta_{t+1})}}}\nonumber\\
    &\qquad + 2\beta_t(1-\beta_t) \expect{\sup_{\norm{v} \leq 1} \frac{1}{\card{\H}} \sum_{i \in \H}\iprod{v}{m^{(i)}_{t}-\overline{m}_{t}}\iprod{v}{\tilde{g}_{t+1}^{(i)} - \nabla \loss_i{(\theta_{t+1})} - \overline{\widetilde{g}}_{t+1} + \nabla \loss_{\H}{(\theta_{t+1})}}}\nonumber\\
    &\qquad + 2\beta_t(1-\beta_t) \expect{\sup_{\norm{v} \leq 1} \frac{1}{\card{\H}} \sum_{i \in \H}\iprod{v}{\nabla \loss_i{(\theta_{t+1})}-\nabla \loss_{\H}{(\theta_{t+1})}}
    \iprod{v}{\tilde{g}_{t+1}^{(i)} - \nabla \loss_i{(\theta_{t+1})} - \overline{\widetilde{g}}_{t+1} + \nabla \loss_{\H}{(\theta_{t+1})}}}.
    \label{eq:heterodrift1}
\end{align}
We now show that the last two terms on the RHS of \cref{eq:heterodrift1} are non-positive.
We show it for the first one, as the second one can be shown to be non-positive in the same way.

First, note that we can write the inner expression as a quadratic form.
Precisely, we have for any vector $v$ and any $i \in \H$ that
\begin{align*}
    2\sum_{i \in \H}\iprod{v}{m^{(i)}_{t}-\overline{m}_{t}}\iprod{v}{\tilde{g}_{t+1}^{(i)} - \nabla \loss_i{(\theta_{t+1})} - \overline{\widetilde{g}}_{t+1} + \nabla \loss_{\H}{(\theta_{t+1})}}
    = \iprod{v}{Mv},
\end{align*}
where we have introduced the $d \times d$ matrix $M \coloneqq N + N^\top$, such that $N \coloneqq \sum_{i \in \H}(m^{(i)}_{t}-\overline{m}_{t})(\tilde{g}_{t+1}^{(i)} - \nabla \loss_i{(\theta_{t+1})} - \overline{\widetilde{g}}_{t+1} + \nabla \loss_{\H}{(\theta_{t+1})})^\top$.
By observing that $M$ is symmetric, we can apply \cref{lem:cover} with $g$ being the identity mapping:
\begin{align}
    \expect{\sup_{\norm{v}\leq 1}2\sum_{i \in \H}\iprod{v}{m^{(i)}_{t}-\overline{m}_{t}}\iprod{v}{\tilde{g}_{t+1}^{(i)} - \nabla \loss_i{(\theta_{t+1})} - \overline{\widetilde{g}}_{t+1} + \nabla \loss_{\H}{(\theta_{t+1})}}}
    &= \expect{\sup_{\norm{v} \leq 1} \iprod{v}{Mv}} \nonumber\\
    &\leq 9^d \cdot \sup_{\norm{v} \leq 1} \expect{2\iprod{v}{Mv}}.
    \label{eq:heterodrift2}
\end{align}
However, the last term is zero by the total law of expectation.
Indeed, recall that stochastic gradients are unbiased (Assumption~\ref{asp:bnd_var}) and that $\theta_{t+1}$ and $m^{(i)}_t$ are deterministic when given history $\P_{t+1}$.
This gives
\begin{align*}
    \expect{\iprod{v}{Mv}}
    &=\expect{2\sum_{i \in \H}\iprod{v}{m^{(i)}_{t}-\overline{m}_{t}}\iprod{v}{\tilde{g}_{t+1}^{(i)} - \nabla \loss_i{(\theta_{t+1})} - \overline{\widetilde{g}}_{t+1} + \nabla \loss_{\H}{(\theta_{t+1})}}}\\
    &= \expect{\condexpect{t+1}{2\sum_{i \in \H}\iprod{v}{m^{(i)}_{t}-\overline{m}_{t}}\iprod{v}{\tilde{g}_{t+1}^{(i)} - \nabla \loss_i{(\theta_{t+1})} - \overline{\widetilde{g}}_{t+1} + \nabla \loss_{\H}{(\theta_{t+1})}}}} \\
    &= \expect{2\sum_{i \in \H}\iprod{v}{m^{(i)}_{t}-\overline{m}_{t}}\iprod{v}{\underbrace{\condexpect{t+1}{\tilde{g}_{t+1}^{(i)} - \nabla \loss_i{(\theta_{t+1})}}}_{=0} - \underbrace{\condexpect{t+1}{\overline{\widetilde{g}}_{t+1} - \nabla \loss_{\H}{(\theta_{t+1})}}}_{=0}}}
    =0.
\end{align*}
Moreover, going back to \cref{eq:heterodrift2}, we obtain
\begin{equation*}
\expect{\sup_{\norm{v}\leq 1}2\sum_{i \in \H}\iprod{v}{m^{(i)}_{t}-\overline{m}_{t}}\iprod{v}{\tilde{g}_{t+1}^{(i)} - \nabla \loss_i{(\theta_{t+1})} - \overline{\widetilde{g}}_{t+1} + \nabla \loss_{\H}{(\theta_{t+1})}}}
    \leq 9^d \cdot \sup_{\norm{v}\leq 1}\expect{2\iprod{v}{Mv}} = 0.
\end{equation*}
As mentioned previously, we can prove in the same way that
\begin{align*}
    \expect{\sup_{\norm{v} \leq 1} 2\sum_{i \in \H}\iprod{v}{\nabla \loss_i{(\theta_{t+1})}-\nabla \loss_{\H}{(\theta_{t+1})}}
    \iprod{v}{\tilde{g}_{t+1}^{(i)} - \nabla \loss_i{(\theta_{t+1})} - \overline{\widetilde{g}}_{t+1} + \nabla \loss_{\H}{(\theta_{t+1})}}} \leq 0.
\end{align*}

Plugging the two previous bounds back in \cref{eq:heterodrift1}, we have thus proved that
\begin{align}
    &\expect{\sup_{\norm{v} \leq 1} \frac{1}{\card{\H}} \sum_{i \in \H} \iprod{v}{m^{(i)}_{t+1}-\overline{m}_{t+1}}^2}
    =\beta_t^2 \expect{\sup_{\norm{v} \leq 1} \frac{1}{\card{\H}} \sum_{i \in \H}\iprod{v}{m^{(i)}_{t}-\overline{m}_{t}}^2}\nonumber\\
    &\qquad+ (1-\beta_t)^2 \expect{\sup_{\norm{v} \leq 1} \frac{1}{\card{\H}} \sum_{i \in \H}\iprod{v}{\nabla \loss_i{(\theta_{t+1})}-\nabla \loss_{\H}{(\theta_{t+1})}}^2}\nonumber\\
    &\qquad +(1-\beta_t)^2 \expect{\sup_{\norm{v} \leq 1} \frac{1}{\card{\H}} \sum_{i \in \H}\iprod{v}{\tilde{g}_{t+1}^{(i)} - \nabla \loss_i{(\theta_{t+1})} - \overline{\widetilde{g}}_{t+1} + \nabla \loss_{\H}{(\theta_{t+1})}}^2}\nonumber\\
    &\qquad +2\beta_t(1-\beta_t) \expect{\sup_{\norm{v} \leq 1} \frac{1}{\card{\H}} \sum_{i \in \H}\iprod{v}{m^{(i)}_{t}-\overline{m}_{t}}\iprod{v}{\nabla \loss_i{(\theta_{t+1})}-\nabla \loss_{\H}{(\theta_{t+1})}}}.
    \label{eq:heterodrift4}
\end{align}
We now bound the two last terms on the RHS of \cref{eq:heterodrift4}.

First, by using the fact that $2ab \leq a^2 + b^2$, we have for any vector $v$ that
\begin{align}
    &\frac{2}{\card{\H}} \sum_{i \in \H}\iprod{v}{m^{(i)}_{t}-\overline{m}_{t}}\iprod{v}{\nabla \loss_i{(\theta_{t+1})}-\nabla \loss_{\H}{(\theta_{t+1})}}
    \leq \frac{1}{\card{\H}} \sum_{i \in \H}\left[\iprod{v}{m^{(i)}_{t}-\overline{m}_{t}}^2 + \iprod{v}{\nabla \loss_i{(\theta_{t+1})}-\nabla \loss_{\H}{(\theta_{t+1})}}^2\right]\nonumber\\
    &\qquad = \frac{1}{\card{\H}} \sum_{i \in \H}\iprod{v}{m^{(i)}_{t}-\overline{m}_{t}}^2 + \frac{1}{\card{\H}} \sum_{i \in \H}\iprod{v}{\nabla \loss_i{(\theta_{t+1})}-\nabla \loss_{\H}{(\theta_{t+1})}}^2.
    \label{eq:heterodrift5.5}
\end{align}
Taking the supremum over the unit ball and then total expectations on both sides yields
\begin{align}
    &2\expect{ \sup_{\norm{v} \leq 1} \frac{1}{\card{\H}}  \sum_{i \in \H}\iprod{v}{m^{(i)}_{t}-\overline{m}_{t}}\iprod{v}{\nabla \loss_i{(\theta_{t+1})}-\nabla \loss_{\H}{(\theta_{t+1})}}} \nonumber\\
    &\qquad \leq \expect{ \sup_{\norm{v} \leq 1}\frac{1}{\card{\H}} \sum_{i \in \H}\iprod{v}{m^{(i)}_{t}-\overline{m}_{t}}^2} 
    + \expect{ \sup_{\norm{v} \leq 1}\frac{1}{\card{\H}} \sum_{i \in \H}\iprod{v}{\nabla \loss_i{(\theta_{t+1})}-\nabla \loss_{\H}{(\theta_{t+1})}}^2}.
    \label{eq:heterodrift6}
\end{align}

Second, recall that $\tilde{g}_{t+1}^{(i)} = g_{t+1}^{(i)} + \xi_{t+1}^{(i)}$, where we denote $\xi_{t+1}^{(i)} \coloneqq  \sum_{\substack{j=1 \\ j \neq i}}^n v^{(ij)}_{t+1} + \overline{v}^{(i)}_{t+1}$, where $\overline{v}^{(i)}_{t+1} \sim \cN(0, \sigmacdp^2 I_d)$ and $v^{(ij)}_{t+1} = -v^{(ji)}_{t+1} \sim \cN(0, \sigmacor^2 I_d)$.
Denote $\overline{\xi}_{t+1} \coloneqq \frac{1}{\card{\H}} \sum_{i \in \H} \xi_{t+1}^{(i)}$.
Therefore, by applying Jensen's inequality, we have
\begin{align*}
    &\expect{\sup_{\norm{v} \leq 1} \frac{1}{\card{\H}} \sum_{i \in \H}\iprod{v}{\tilde{g}_{t+1}^{(i)} - \nabla \loss_i{(\theta_{t+1})} - \overline{\widetilde{g}}_{t+1} + \nabla \loss_{\H}{(\theta_{t+1})}}^2}\\
    &\quad = \expect{\sup_{\norm{v} \leq 1} \frac{1}{\card{\H}} \sum_{i \in \H}\iprod{v}{g_{t+1}^{(i)} - \nabla \loss_i{(\theta_{t+1})} - \overline{g}_{t+1} + \nabla \loss_{\H}{(\theta_{t+1})} + \xi_{t+1}^{(i)} - \overline{\xi}_{t+1}}^2}\\
    &\leq 2\expect{\sup_{\norm{v} \leq 1} \frac{1}{\card{\H}} \sum_{i \in \H} \left[\iprod{v}{g_{t+1}^{(i)} - \nabla \loss_i{(\theta_{t+1})}-\overline{g}_{t+1} + \nabla \loss_{\H}{(\theta_{t+1})}}^2 + \iprod{v}{\xi_{t+1}^{(i)} - \overline{\xi}_{t+1}}^2 \right]}
\end{align*}
Recall the following bias-variance decomposition: for any $x_1, \ldots, x_n \in \R$, denoting $\overline{x} \coloneqq \frac{1}{n} \sum_{i=1}^n x_i$, we have $\frac{1}{n} \sum_{i=1}^n (x_i - \overline{x})^2 = \frac{1}{n} \sum_{i=1}^n x_i^2 - \overline{x}^2 \leq \sum_{i=1}^n x_i^2$.
Applying this fact above yields
\begin{align}
    &\expect{\sup_{\norm{v} \leq 1} \frac{1}{\card{\H}} \sum_{i \in \H}\iprod{v}{\tilde{g}_{t+1}^{(i)} - \nabla \loss_i{(\theta_{t+1})} - \overline{\widetilde{g}}_{t+1} + \nabla \loss_{\H}{(\theta_{t+1})}}^2}\nonumber\\
    &\leq 2\expect{\sup_{\norm{v} \leq 1} \frac{1}{\card{\H}} \sum_{i \in \H} \left[\iprod{v}{g_{t+1}^{(i)} - \nabla \loss_i{(\theta_{t+1})}}^2 + \iprod{v}{\xi_{t+1}^{(i)}}^2 \right]}\nonumber\\
    &\leq 2\expect{\frac{1}{\card{\H}} \sum_{i \in \H} \norm{g_{t+1}^{(i)} - \nabla \loss_i{(\theta_{t+1})}}^2}
    +2\expect{\sup_{\norm{v} \leq 1} \frac{1}{\card{\H}} \sum_{i \in \H}\iprod{v}{\xi_{t+1}^{(i)}}^2},
    \label{eq:heterodrift7}
\end{align}
where the last inequality is due to the Cauchy-Schwartz inequality.
Recall that, by \cref{asp:bnd_var} and \cref{lem:grad-subsample} applied with zero privacy noise, we have for every $i \in \H$ that $\expect{ \norm{g_{t+1}^{(i)} - \nabla \loss_i{(\theta_{t+1})}}^2} \leq  2(1-\frac{b}{m})\frac{\sigma^2}{b} \eqqcolon \sigma_b^2$.
Therefore, upon averaging over $i \in \H$, we have
\begin{align}
    \expect{\frac{1}{\card{\H}} \sum_{i \in \H}\norm{g_{t+1}^{(i)} - \nabla \loss_i{(\theta_{t+1})}}^2}
    \leq \sigma_b^2.
    \label{eq:heterodrift5}
\end{align}

We now bound the remaining (last) term on the RHS of~\eqref{eq:heterodrift7}.
\begin{align*}
    &\expect{\sup_{\norm{v} \leq 1} \frac{1}{\card{\H}} \sum_{i \in \H}\iprod{v}{\xi_{t+1}^{(i)}}^2}
    = \E{\sup_{\norm{v} \leq 1} \frac{1}{\card{\H}} \sum_{i \in \H}\iprod{v}{\sum_{\substack{j=1 \\ j \neq i}}^n v^{(ij)}_{t+1} + \overline{v}^{(i)}_{t+1}}^2}
    = \E\sup_{\norm{v} \leq 1} \frac{1}{\card{\H}} \sum_{i \in \H}\left(\sum_{\substack{j=1 \\ j \neq i}}^n \iprod{v}{v^{(ij)}_{t+1}} + \iprod{v}{\overline{v}^{(i)}_{t+1}}\right)^2\\
    &= \E\sup_{\norm{v} \leq 1} \frac{1}{\card{\H}} \sum_{i \in \H}\left(\sum_{\substack{j=1 \\ j \neq i}}^n \iprod{v}{v^{(ij)}_{t+1}}^2 + \iprod{v}{\overline{v}^{(i)}_{t+1}}^2 + 2\sum_{\substack{j=1 \\ j \neq i}}^n \iprod{v}{v^{(ij)}_{t+1}} \iprod{v}{\overline{v}^{(i)}_{t+1}} + 2 \sum_{\substack{j=1 \\ k < j \neq i}}^n \iprod{v}{v^{(ij)}_{t+1}} \iprod{v}{v^{(ik)}_{t+1}}\right)\\
    &\leq \E\sup_{\norm{v} \leq 1} \frac{1}{\card{\H}} \sum_{i \in \H}\left(\sum_{\substack{j=1 \\ j \neq i}}^n \iprod{v}{v^{(ij)}_{t+1}}^2 + \iprod{v}{\overline{v}^{(i)}_{t+1}}^2\right)\\
    &\quad + 2\E\sup_{\norm{v} \leq 1} \frac{1}{\card{\H}} \sum_{i \in \H}\left( \sum_{\substack{j=1 \\ j \neq i}}^n \iprod{v}{v^{(ij)}_{t+1}} \iprod{v}{\overline{v}^{(i)}_{t+1}} +  \sum_{\substack{j=1 \\ k < j \neq i}}^n \iprod{v}{v^{(ij)}_{t+1}} \iprod{v}{v^{(ik)}_{t+1}}\right)
\end{align*}
We observe that the second term above is non-positive as we have shown for~\eqref{eq:heterodrift2} using~\cref{lem:cover} and the fact that for a fixed $i \in \cH$, all the noise terms in $\xi_{t+1}^{(i)}$ are statistically independent and are zero in expectation.
Therefore, we have from above that
\begin{align*}
    \expect{\sup_{\norm{v} \leq 1} \frac{1}{\card{\H}} \sum_{i \in \H}\iprod{v}{\xi_{t+1}^{(i)}}^2}
    \leq \E\sup_{\norm{v} \leq 1} \frac{1}{\card{\H}} \sum_{i \in \H}\left(\sum_{\substack{j=1 \\ j \neq i}}^n \iprod{v}{v^{(ij)}_{t+1}}^2 + \iprod{v}{\overline{v}^{(i)}_{t+1}}^2\right).
\end{align*}
We observe here, since $v^{(ij)}_{t+1} = - v^{(ji)}_{t+1}$ for all $i,j \in \cH$, that $\sum_{i \in \H}\sum_{\substack{j=1 \\ j \neq i}}^n \iprod{v}{v^{(ij)}_{t+1}}^2 = \sum_{i \in \H}\sum_{j \in \cH \setminus \{i\}} \iprod{v}{v^{(ij)}_{t+1}}^2 + \sum_{i \in \H}\sum_{j \in [n] \setminus \cH} \iprod{v}{v^{(ij)}_{t+1}}^2 = 2\sum_{i \in \H}\sum_{j \in \cH, j<i} \iprod{v}{v^{(ij)}_{t+1}}^2 + \sum_{i \in \H}\sum_{j \in [n] \setminus \cH} \iprod{v}{v^{(ij)}_{t+1}}^2$.
Therefore, we have
\begin{align*}
    \expect{\sup_{\norm{v} \leq 1} \frac{1}{\card{\H}} \sum_{i \in \H}\iprod{v}{\xi_{t+1}^{(i)}}^2}
    &\leq \E\sup_{\norm{v} \leq 1} \frac{1}{\card{\H}} \sum_{i \in \H}\left(2\sum_{\substack{j \in \cH \\ j<i}} \iprod{v}{v^{(ij)}_{t+1}}^2 + \sum_{j \in [n] \setminus \cH} \iprod{v}{v^{(ij)}_{t+1}}^2 + \iprod{v}{\overline{v}^{(i)}_{t+1}}^2\right)\\
    &\leq 2\E\sup_{\norm{v} \leq 1} \frac{1}{\card{\H}} \sum_{i \in \H}\sum_{\substack{j \in \cH \\ j<i}} \iprod{v}{v^{(ij)}_{t+1}}^2
    + \E\sup_{\norm{v} \leq 1} \frac{1}{\card{\H}} \sum_{i \in \H}\sum_{j \in [n] \setminus \cH} \iprod{v}{v^{(ij)}_{t+1}}^2\\
    &\quad + \E\sup_{\norm{v} \leq 1} \frac{1}{\card{\H}} \sum_{i \in \H}\iprod{v}{\overline{v}^{(i)}_{t+1}}^2.
\end{align*}
Now, for each of the three suprema in the right-hand side above, all random variables involved are independent Gaussians with respective variances, so that applying~\cref{lem:concentration} separately for each of the three suprema yields:
\begin{align}
    \expect{\sup_{\norm{v} \leq 1} \frac{1}{\card{\H}} \sum_{i \in \H}\iprod{v}{\xi_{t+1}^{(i)}}^2}
    &\leq 2\E\sup_{\norm{v} \leq 1} \frac{1}{\card{\H}} \sum_{i \in \H}\sum_{\substack{j \in \cH \\ j<i}} \iprod{v}{v^{(ij)}_{t+1}}^2
    + \E\sup_{\norm{v} \leq 1} \frac{1}{\card{\H}} \sum_{i \in \H}\sum_{j \in [n] \setminus \cH} \iprod{v}{v^{(ij)}_{t+1}}^2 \nonumber\\\nonumber
    &\quad + \E\sup_{\norm{v} \leq 1} \frac{1}{\card{\H}} \sum_{i \in \H}\iprod{v}{\overline{v}^{(i)}_{t+1}}^2\\\nonumber
    &\leq 36\frac{2}{\card{\cH}} \frac{\card{\cH}(\card{\cH}-1)}{2} \sigmacor^2(1+\frac{2d}{\card{\cH}(\card{\cH}-1)})\\\nonumber
    &\quad+ 36\frac{1}{\card{\cH}} \frac{\card{\cH}(n-\card{\cH})}{2} \sigmacor^2(1+\frac{2d}{\card{\cH}(n-\card{\cH})})
    + 36\sigmacdp^2(1+\frac{d}{\card{\cH}})\\\nonumber
    &= 36(n - \frac{f}{2} -1)\sigmacor^2(1+\frac{3d}{n-f}) + 36\sigmacdp^2(1+\frac{d}{n-f})\\
    &\leq 108\left(n\sigmacor^2 + \sigmacdp^2 \right)(1+\frac{d}{n-f}).
    \label{eq:heterodrift8}
\end{align}

Plugging the bounds obtained in~\eqref{eq:heterodrift5} and~\eqref{eq:heterodrift8} back in~\eqref{eq:heterodrift7}, we get
\begin{align}
    \expect{\sup_{\norm{v} \leq 1} \frac{1}{\card{\H}} \sum_{i \in \H}\iprod{v}{\tilde{g}_{t+1}^{(i)} - \nabla \loss_i{(\theta_{t+1})} - \overline{\widetilde{g}}_{t+1} + \nabla \loss_{\H}{(\theta_{t+1})}}^2}
    \leq 2\left(\sigma_b^2+108\left(n\sigmacor^2 + \sigmacdp^2 \right) \left(1+\frac{d}{n-f} \right)\right).
    \label{eq:heterodrift9}
\end{align}
We use the above bounds in~\eqref{eq:heterodrift6} and~\eqref{eq:heterodrift9} to bound the RHS of \eqref{eq:heterodrift4}, which yields
\begin{align*}
    &\expect{\sup_{\norm{v} \leq 1} \frac{1}{\card{\H}} \sum_{i \in \H} \iprod{v}{m^{(i)}_{t+1}-\overline{m}_{t+1}}^2}
    \leq \beta_t^2 \expect{\sup_{\norm{v} \leq 1} \frac{1}{\card{\H}} \sum_{i \in \H}\iprod{v}{m^{(i)}_{t}-\overline{m}_{t}}^2}\nonumber\\
    &\qquad+ (1-\beta_t)^2 \expect{\sup_{\norm{v} \leq 1} \frac{1}{\card{\H}} \sum_{i \in \H}\iprod{v}{\nabla \loss_i{(\theta_{t+1})}-\nabla \loss_{\H}{(\theta_{t+1})}}^2}
    +2(1-\beta_t)^2\left(\sigma_b^2+108\left(n\sigmacor^2 + \sigmacdp^2 \right) \left(1+\frac{d}{n-f})\right) \right)\nonumber\\
    &\qquad +\beta_t(1-\beta_t) \expect{\sup_{\norm{v} \leq 1} \frac{1}{\card{\H}} \sum_{i \in \H}\iprod{v}{m^{(i)}_{t}-\overline{m}_{t}}^2 + \sup_{\norm{v} \leq 1} \frac{1}{\card{\H}} \sum_{i \in \H}\iprod{v}{\nabla \loss_i{(\theta_{t+1})}-\nabla \loss_{\H}{(\theta_{t+1})}}^2}.
\end{align*}
By rearranging terms, and noticing that $\beta_t^2 + \beta_t(1-\beta_t) = \beta_t$ and $(1-\beta_t)^2 + \beta_t(1-\beta_t) = 1-\beta_t$, we obtain
\begin{align*}
    &\expect{\sup_{\norm{v} \leq 1} \frac{1}{\card{\H}} \sum_{i \in \H} \iprod{v}{m^{(i)}_{t+1}-\overline{m}_{t+1}}^2}
    \leq \beta_t \expect{\sup_{\norm{v} \leq 1} \frac{1}{\card{\H}} \sum_{i \in \H}\iprod{v}{m^{(i)}_{t}-\overline{m}_{t}}^2}\nonumber\\
    &\qquad+ (1-\beta_t) \expect{\sup_{\norm{v} \leq 1} \frac{1}{\card{\H}} \sum_{i \in \H}\iprod{v}{\nabla \loss_i{(\theta_{t+1})}-\nabla \loss_{\H}{(\theta_{t+1})}}^2}
    +2(1-\beta_t)^2\left(\sigma_b^2+108\left(n\sigmacor^2 + \sigmacdp^2 \right)\left(1+\frac{d}{n-f} \right)\right).
\end{align*}
Denote $ \gcov^2 \coloneqq \sup_{\theta \in \R^d} \sup_{\norm{v}\leq 1} \frac{1}{\card{\H}}\sum_{i \in \H}\iprod{v}{\nabla{\loss_i{(\theta)}} - \nabla{\loss_{\H}{(\theta)}}}^2$.
Then, the above bound implies
\begin{align*}
    \expect{\sup_{\norm{v} \leq 1} \frac{1}{\card{\H}} \sum_{i \in \H} \iprod{v}{m^{(i)}_{t+1}-\overline{m}_{t+1}}^2}
    &\leq \beta_t \expect{\sup_{\norm{v} \leq 1} \frac{1}{\card{\H}} \sum_{i \in \H}\iprod{v}{m^{(i)}_{t}-\overline{m}_{t}}^2}\\
    &\quad +2(1-\beta_t)^2\left(\sigma_b^2+108\left(n\sigmacor^2 + \sigmacdp^2 \right) \left(1+\frac{d}{n-f} \right)\right)
    + (1-\beta_t)\gcov^2.
\end{align*}
The above inequality concludes the proof.
\end{proof}

\subsubsection{Proof of \cref{lem:dev}}
\label{app:lem-dev}
\deviation*
\begin{proof}
Let $t \in \{0, \ldots, T-1\}$.
Suppose that assumptions~\ref{asp:bnd_var} and~\ref{asp:bnd_norm} hold and that $\loss_\H$ is $L$-smooth.

Recall from~\eqref{eqn:dev} that 
\begin{align*}
    \dev{t+1} \coloneqq \AvgMmt{t+1} - \nabla \loss_{\H}\left( \weight{t+1} \right).
\end{align*}
Denote $\overline{\widetilde{g}}_t \coloneqq \frac{1}{\card{\H}}\sum_{i \in \H} \tilde{g}^{(i)}_t$.
Substituting from~\eqref{eqn:mmt_i} and recalling that $\overline{m}_t = \frac{1}{\card{\H}} \sum_{i \in \H} m^{(i)}_t$, we obtain
\begin{align*}
    \dev{t+1} = \beta_t \, \AvgMmt{t} + (1 - \beta_t) \, \overline{\widetilde{g}}_{t+1} - \nabla \loss_{\H}\left( \weight{t+1} \right).
\end{align*}
Upon adding and subtracting $\beta_t \nabla \loss_{\H}(\weight{t})$ and $\beta_t \nabla \loss_{\H}(\weight{t+1})$ on the R.H.S.~above we obtain that
\begin{align*}
    \dev{t+1} & = \beta_t \, \AvgMmt{t} - \beta_t \nabla \loss_{\H}(\weight{t}) + (1 - \beta_t) \, \overline{\widetilde{g}}_{t+1} - \nabla \loss_{\H}\left( \weight{t+1} \right) + \beta_t \nabla \loss_{\H}(\weight{t+1}) + \beta_t \nabla \loss_{\H}(\weight{t}) - \beta_t \nabla \loss_{\H}(\weight{t+1}) \\
    & = \beta_t \left( \AvgMmt{t} - \nabla \loss_{\H}(\weight{t}) \right) + (1 - \beta_t) \, \overline{\widetilde{g}}_{t+1} - (1 - \beta_t) \nabla \loss_{\H}\left( \weight{t+1} \right) + \beta_t \left( \nabla \loss_{\H}(\weight{t}) - \nabla \loss_{\H}(\weight{t+1})  \right).
\end{align*}
As $\AvgMmt{t} - \nabla \loss_{\H}(\weight{t}) = \dev{t}$ (by~\eqref{eqn:dev}), from above we obtain that
\begin{align*}
    \dev{t+1} = \beta_t \dev{t} + (1 - \beta_t) \, \left( \overline{\widetilde{g}}_{t+1} - \nabla \loss_{\H}\left( \weight{t+1} \right) \right) + \beta_t \left( \nabla \loss_{\H}(\weight{t}) - \nabla \loss_{\H}(\weight{t+1}) \right).
\end{align*}
Therefore, 
\begin{align*}
    \norm{\dev{t+1}}^2 = & \beta_t^2 \norm{\dev{t}}^2 + (1 - \beta_t)^2 \norm{ \overline{\widetilde{g}}_{t+1} - \nabla \loss_{\H}\left( \weight{t+1} \right)}^2 \\
    &+ \beta_t^2 \norm{\nabla \loss_{\H}(\weight{t}) - \nabla \loss_{\H}(\weight{t+1}) }^2 + 2 \beta_t (1 - \beta_t) \iprod{\dev{t}}{\overline{\widetilde{g}}_{t+1} - \nabla \loss_{\H}\left( \weight{t+1} \right)} \\
    & + 2 \beta_t^2 \iprod{\dev{t}}{\nabla \loss_{\H}(\weight{t}) - \nabla \loss_{\H}(\weight{t+1})} + 2 \beta_t ( 1- \beta_t) \iprod{\overline{\widetilde{g}}_{t+1} - \nabla \loss_{\H}\left( \weight{t+1} \right)}{\nabla \loss_{\H}(\weight{t}) - \nabla \loss_{\H}(\weight{t+1})}.
\end{align*}
By taking conditional expectation $\condexpect{t+1}{\cdot}$ on both sides, and recalling that $\dev{t}$, $\weight{t+1}$ and $\weight{t}$ are deterministic values when the history $\P_{t+1}$ is given, we obtain that 
\begin{align*}
    \condexpect{t+1}{\norm{\dev{t+1}}^2} = & \beta_t^2 \norm{\dev{t}}^2 + (1 - \beta_t)^2 \condexpect{t+1}{\norm{ \overline{\widetilde{g}}_{t+1} - \nabla \loss_{\H}\left( \weight{t+1} \right)}^2} + \beta_t^2 \norm{\nabla \loss_{\H}(\weight{t}) - \nabla \loss_{\H}(\weight{t+1}) }^2 + \\
    & 2 \beta_t (1 - \beta_t) \iprod{\dev{t}}{\condexpect{t+1}{\overline{\widetilde{g}}_{t+1}} - \nabla \loss_{\H}\left( \weight{t+1} \right)} + 2 \beta_t^2 \iprod{\dev{t}}{\nabla \loss_{\H}(\weight{t}) - \nabla \loss_{\H}(\weight{t+1})}\\
    & + 2 \beta_t ( 1- \beta_t) \iprod{\condexpect{t+1}{\overline{\widetilde{g}}_{t+1}} - \nabla \loss_{\H}\left( \weight{t+1} \right)}{\nabla \loss_{\H}(\weight{t}) - \nabla \loss_{\H}(\weight{t+1})}.
\end{align*}
Recall that $\overline{\widetilde{g}}_{t+1} \coloneqq \frac{1}{(n-f)} \sum_{j \in \H}\tilde{g}^{(i)}_{t+1}$. Thus, as we ignore clipping by Assumption~\ref{asp:bnd_norm}, we have $\condexpect{t+1}{\overline{\widetilde{g}}_{t+1}} = \nabla \loss_{\H}(\weight{t+1})$. Using this above we obtain that
\begin{align*}
    \condexpect{t+1}{\norm{\dev{t+1}}^2} = & \beta_t^2 \norm{\dev{t}}^2 + (1 - \beta_t)^2 \condexpect{t+1}{\norm{ \overline{\widetilde{g}}_{t+1} - \nabla \loss_{\H}\left( \weight{t+1} \right)}^2} + \beta_t^2 \norm{\nabla \loss_{\H}(\weight{t}) - \nabla \loss_{\H}(\weight{t+1}) }^2 \\
    & + 2 \beta_t^2 \iprod{\dev{t}}{\nabla \loss_{\H}(\weight{t}) - \nabla \loss_{\H}(\weight{t+1})}.
\end{align*}

Also, 
we bound the expected distance between the average of full honest gradients and the average of mini-batch gradients with additive correlated noise. 
We first write
\begin{align*}
    \overline{\widetilde{g}}_{t+1} &= \frac{1}{\card{\cH}} \sum_{i \in \cH} \Big( g_{t+1}^{(i)} + \sum_{\substack{j=1 \\ j \neq i}}^n v^{(ij)}_{t+1} + \overline{v}^{(i)}_{t+1} \Big)
    =   \frac{1}{\card{\cH}} \sum_{i \in \cH} g_{t+1}^{(i)}+  \frac{1}{\card{\cH}} \sum_{i \in \cH} \sum_{\substack{j=1 \\ j \neq i}}^n v^{(ij)}_{t+1}+ \frac{1}{\card{\cH}} \sum_{i \in \cH} \overline{v}^{(i)}_{t+1}\\
    &= \frac{1}{\card{\cH}} \sum_{i \in \cH} g_{t+1}^{(i)}+  \frac{1}{\card{\cH}} \sum_{i, j \in \cH} v^{(ij)}_{t+1} + \frac{1}{\card{\cH}} \sum_{i \in \cH} \sum_{j \in [n] \setminus \cH} v^{(ij)}_{t+1} + \frac{1}{\card{S}} \sum_{i \in \cH} \overline{v}^{(i)}_{t+1}.
\end{align*}
We remark that the second term is zero by construction of the correlated noise terms, while all remaining random variables in the third and fourth terms are independently sampled from centered Gaussians with covariance $\sigmacor^2 I_d$ or $\sigmacdp^2 I_d$. We directly deduce the following:
\begin{align*}
    \E{\norm{\overline{\widetilde{g}}_{t+1} - \nabla \loss_{\H}\left(\theta_{t+1}\right)}}^2 
    &= \E{\norm{\frac{1}{\card{\cH}} \sum_{i \in \cH} g_{t+1}^{(i)} - \nabla \loss_{\H}\left(\theta_{t+1}\right) + \frac{1}{\card{\cH}} \sum_{i \in \cH} \sum_{j \in [n] \setminus \cH} v^{(ij)}_{t+1} + \frac{1}{\card{\cH}} \sum_{i \in \cH} \overline{v}^{(i)}_{t+1}}}\\
    &= \E{\norm{\frac{1}{\card{\cH}} \sum_{i \in \cH} g_{t+1}^{(i)} - \nabla \loss_{\H}\left(\theta_{t+1}\right)}^2} + \E{\norm{\frac{1}{\card{\cH}} \sum_{i \in \cH} \sum_{j \in [n] \setminus \cH} v^{(ij)}_{t+1} + \frac{1}{\card{S}} \sum_{i \in \cH} \overline{v}^{(i)}_{t+1}}^2}\\
    &= \E{\norm{\frac{1}{\card{\cH}} \sum_{i \in \cH} g_{t+1}^{(i)} - \nabla \loss_{\H}\left(\theta_{t+1}\right)}^2} + \frac{df(n-f) \sigmacor^2}{(n-f)^2} + \frac{d \sigmacdp^2}{n-f}\\
    &\leq \frac{2}{n-f}\left(1-\frac{b}{m}\right) \frac{\sigma^2}{b} + \frac{df\sigmacor^2}{n-f} + \frac{d \sigmacdp^2}{n-f},
\end{align*}
where the last step is due to~\cref{lem:grad-subsample} thanks to assumptions~\ref{asp:bnd_var} and \ref{asp:bnd_norm}.
Now, denote $\sigmadpbar^2 \coloneqq 2\left(1-\frac{b}{m}\right) \frac{\sigma^2}{b} + df\sigmacor^2 + d \sigmacdp^2$.
Thus, we have
\begin{align*}
    \condexpect{t+1}{\norm{\dev{t+1}}^2} \leq \beta_t^2 \norm{\dev{t}}^2 + (1 - \beta_t)^2 \frac{\sigmadpbar^2}{(n-f)} + \beta_t^2 \norm{\nabla \loss_{\H}(\weight{t}) - \nabla \loss_{\H}(\weight{t+1}) }^2 + 2 \beta_t^2 \iprod{\dev{t}}{\nabla \loss_{\H}(\weight{t}) - \nabla \loss_{\H}(\weight{t+1})}.
\end{align*}
By the Cauchy-Schwartz inequality, $\iprod{\dev{t}}{\nabla \loss_{\H}(\weight{t}) - \nabla \loss_{\H}(\weight{t+1})} \leq \norm{\dev{t}} \norm{\nabla \loss_{\H}(\weight{t}) - \nabla \loss_{\H}(\weight{t+1})}$. 
Since $\loss_\H$ is $L$-smooth, we have $\norm{ \nabla \loss_{\H}(\weight{t}) - \nabla \loss_{\H}(\weight{t+1})} \leq L \norm{\weight{t+1} - \weight{t}}$. 
Recall from~\eqref{eqn:SGD} that $\weight{t+1} = \weight{t} - \gamma_t  R_t$. Thus,$\norm{\nabla \loss_{\H}(\weight{t}) - \nabla \loss_{\H}(\weight{t+1})} \leq \gamma_t  L \norm{R_t}$. 
Using this above we obtain that
\begin{align*}
    \condexpect{t+1}{\norm{\dev{t+1}}^2} \leq \beta_t^2 \norm{\dev{t}}^2 + (1 - \beta_t)^2 \frac{\sigmadpbar^2}{(n-f)} + \gamma_t^2 \beta_t^2 L^2 \norm{R_t}^2 + 2 \gamma_t  \beta_t^2 L \norm{\dev{t}} \norm{R_t}.
\end{align*}
As $2 ab \leq a^2 + b^2$, from above we obtain that
\begin{align}
    \condexpect{t+1}{\norm{\dev{t+1}}^2} & \leq \beta_t^2 \norm{\dev{t}}^2 + (1 - \beta_t)^2 \frac{\sigmadpbar^2}{(n-f)} + \gamma_t^2 \beta_t^2 L^2 \norm{R_t}^2 + \gamma_t  L \beta_t^2 \left( \norm{\dev{t}}^2 +  \norm{R_t}^2\right) \nonumber \\
    & = (1 + \gamma_t L ) \beta_t^2 \norm{\dev{t}}^2 + (1 - \beta_t)^2 \frac{\sigmadpbar^2}{(n-f)} + \gamma_t L (1 + \gamma_t L) \beta_t^2  \norm{R_t}^2. \label{eqn:dev_before_ab}
\end{align}
By definition of $\drift{t}$ in~\eqref{eqn:drift}, we have $R_t = \drift{t} +  \AvgMmt{t}$. 
Thus, owing to the triangle inequality and the fact that $2 ab \leq a^2 + b^2$, we have $\norm{R_t}^2 \leq 2 \norm{\drift{t}}^2 + 2  \norm{\AvgMmt{t}}^2$. 
Similarly, by definition of $\dev{t}$ in~\eqref{eqn:dev}, we have $\norm{\AvgMmt{t}}^2 \leq 2 \norm{\dev{t}}^2 + 2 \norm{\nabla \loss_{\H}(\weight{t})}^2$. 
Thus, $\norm{R_t}^2 \leq 2 \norm{\drift{t}}^2 + 4  \norm{\dev{t}}^2 + 4  \norm{\nabla \loss_{\H}(\weight{t})}^2$. 
Using this in~\eqref{eqn:dev_before_ab} we obtain that
\begin{align*}
    \condexpect{t+1}{\norm{\dev{t+1}}^2} &\leq (1 + \gamma_t L ) \beta_t^2 \norm{\dev{t}}^2 + (1 - \beta_t)^2 \frac{\sigmadpbar^2}{(n-f)} \\
    &\quad + 2 \gamma_t L( 1 + \gamma_t L)\beta_t^2  \left( \norm{\drift{t}}^2 + 2  \norm{\dev{t}}^2 + 2  \norm{\nabla \loss_{\H}(\weight{t})}^2 \right).
\end{align*}
By rearranging the terms on the R.H.S., we get
\begin{align*}
    \condexpect{t+1}{\norm{\dev{t+1}}^2} \leq & \beta_t^2 (1 + \gamma_t L ) \left(1 + 4 \gamma_t   L \right) \norm{\dev{t}}^2 +  4 \gamma_t  L( 1 + \gamma_t L) \beta_t^2   \norm{\nabla \loss_{\H}(\weight{t})}^2 +(1 - \beta_t)^2 \frac{\sigmadpbar^2}{(n-f)} \\
    & + 2 \gamma_t  L( 1 + \gamma_t L)\beta_t^2 \norm{\drift{t}}^2.
\end{align*}
The proof concludes upon taking total expectation on both sides.
\end{proof}

\subsubsection{Proof of \cref{lem:descent}}
\label{app:descent}
\descent*
\begin{proof}
Let $t \in \{0, \ldots, T-1\}$.
Assuming $\loss_\H$ is $L$-smooth, we have (see~Lemma~1.2.3~\cite{nesterov2018lectures})
\begin{align*}
    \loss_{\H}(\weight{t+1}) - \loss_{\H}(\weight{t}) \leq \iprod{\weight{t+1} - \weight{t}}{\nabla \loss_{\H}(\weight{t})} + \frac{L}{2} \norm{\weight{t+1} - \weight{t}}^2.
\end{align*}
Substituting from~\eqref{eqn:sgd_new}, i.e., $\weight{t+1} = \weight{t} - \gamma_t    \, \AvgMmt{t} - \gamma_t  \drift{t}$, we obtain that
\begin{align*}
    \loss_{\H}(\weight{t+1}) - \loss_{\H}(\weight{t}) &\leq - \gamma_t   \iprod{\AvgMmt{t}}{\nabla \loss_{\H}(\weight{t})} - \gamma_t  \iprod{\drift{t}}{\nabla \loss_{\H}(\weight{t})} + \gamma_t^2 \frac{L}{2} \norm{ \, \AvgMmt{t} + \drift{t}}^2 \\
    & = - \gamma_t   \iprod{\AvgMmt{t} - \nabla \loss_{\H}(\weight{t}) + \nabla \loss_{\H}(\weight{t})}{\nabla \loss_{\H}(\weight{t})} - \gamma_t  \iprod{\drift{t}}{\nabla \loss_{\H}(\weight{t})} + \gamma_t^2 \frac{L}{2} \norm{ \, \AvgMmt{t} + \drift{t}}^2.
\end{align*}
By Definition~\eqref{eqn:dev}, $\AvgMmt{t} - \nabla \loss_{\H}(\weight{t}) = \dev{t}$. Thus, from above we obtain 
\begin{align}
     \loss_{\H}(\weight{t+1}) - \loss_{\H}(\weight{t}) \leq -  \gamma_t   \norm{\nabla \loss_{\H}(\weight{t})}^2 -  \gamma_t   \iprod{\dev{t}}{\nabla \loss_{\H}(\weight{t})} -  \gamma_t  \iprod{\drift{t}}{\nabla \loss_{\H}(\weight{t})} + \frac{1}{2}\gamma_t^2 L \norm{ \, \AvgMmt{t} + \drift{t}}^2. \label{eqn:norm_1}
\end{align}
Now, we consider the last three terms on the R.H.S.~separately. Using Cauchy-Schwartz inequality, and the fact that $2 ab \leq \frac{1}{c} a^2 + c b^2$ for any $c > 0$, we obtain that (by substituting $c = 2$)
\begin{align}
    2 \mnorm{\iprod{\dev{t}}{\nabla \loss_{\H}(\weight{t})}} \leq 2 \norm{\dev{t}} \norm{\nabla \loss_{\H}(\weight{t})} \leq \frac{2}{1} \norm{\dev{t}}^2 + \frac{1}{2} \norm{\nabla \loss_{\H}(\weight{t})}^2 . \label{eqn:rho_1}
\end{align}
Similarly, 
\begin{align}
    2 \mnorm{\iprod{\drift{t}}{\nabla \loss_{\H}(\weight{t})}} \leq 2 \norm{\drift{t}} \norm{\nabla \loss_{\H}(\weight{t})} \leq \frac{ 2}{ 1} \norm{\drift{t}}^2 +  \frac{1}{2} \norm{\nabla \loss_{\H}(\weight{t})}^2. \label{eqn:rho_2}
\end{align}
Finally, using triangle inequality and the fact that $2ab \leq a^2 + b^2$ we have
\begin{align}
    \norm{ \, \AvgMmt{t} + \drift{t}}^2 & \leq 2  \, \norm{\AvgMmt{t}}^2 + 2 \norm{\drift{t}}^2 = 2  \, \norm{\AvgMmt{t} - \nabla \loss_{\H}(\weight{t+1}) + \nabla \loss_{\H}(\weight{t})}^2 + 2 \norm{\drift{t}}^2 \nonumber \\
    & \leq 4  \, \norm{\dev{t}}^2 + 4  \, \norm{\nabla \loss_{\H}(\weight{t})}^2 + 2 \norm{\drift{t}}^2. \quad \quad [\text{since} ~ ~ \AvgMmt{t} - \nabla \loss_{\H}(\weight{t}) = \dev{t}] \label{eqn:last_term}
\end{align}
Substituting from~\eqref{eqn:rho_1},~\eqref{eqn:rho_2} and~\eqref{eqn:last_term} in~\eqref{eqn:norm_1} we obtain that
\begin{align*}
    \loss_{\H}(\weight{t+1}) - \loss_{\H}(\weight{t}) \leq & -  \gamma_t   \norm{\nabla \loss_{\H}(\weight{t})}^2 
    + \frac{1}{2} \gamma_t   \left( 2 \norm{\dev{t}}^2 + \frac{1}{2}\norm{\nabla \loss_{\H}(\weight{t})}^2 \right) 
    + \frac{1}{2}\gamma_t  \left( 2 \norm{\drift{t}}^2 + \frac{1}{2} \norm{\nabla \loss_{\H}(\weight{t})}^2 \right) \nonumber \\
    & + \frac{1}{2}\gamma_t^2 L \left( 4  \, \norm{\dev{t}}^2 + 4  \, \norm{\nabla \loss_{\H}(\weight{t})}^2 + 2 \norm{\drift{t}}^2 \right).
\end{align*}
Upon rearranging the terms in the R.H.S., we obtain that
\begin{align*}
    \loss_{\H}(\weight{t+1}) - \loss_{\H}(\weight{t}) \leq - \frac{\gamma_t}{2}\left( 1 - 4 \gamma_t  L \right) \norm{\nabla \loss_{\H}(\weight{t})}^2  +
    \gamma_t   \left( 1 + 2 \gamma_t  L  \right) \norm{\dev{t}}^2 +
    \gamma_t  \left(  1 + \gamma_t  L \right) \norm{\drift{t}}^2.
\end{align*}
This concludes the proof.
\end{proof}

\section{Additional Experimental Results}
\label{app_exp_results}
In this section, we provide additional results that were not included in the main paper due to space constraints.
Appendix~\ref{app_exp_results_corr} contains the complete set of results for the evaluation of \algoname{} across the four privacy threat models, under all four attack types, and for both values of $f = 5, 10$, on the MNIST and Fashion-MNIST datasets.
Additionally, Appendix~\ref{app_exp_results_caf} presents the full results comparing CAF with other robust aggregation methods under all four attack types, across various values of $f$, data heterogeneity settings, and on both MNIST and Fashion-MNIST.

\subsection{\algoname{} Under Three Privacy Threat Models}
\label{app_exp_results_corr}

\begin{figure*}[ht]
    \centering
    \subfloat[LF]{\includegraphics[width=0.45\textwidth]{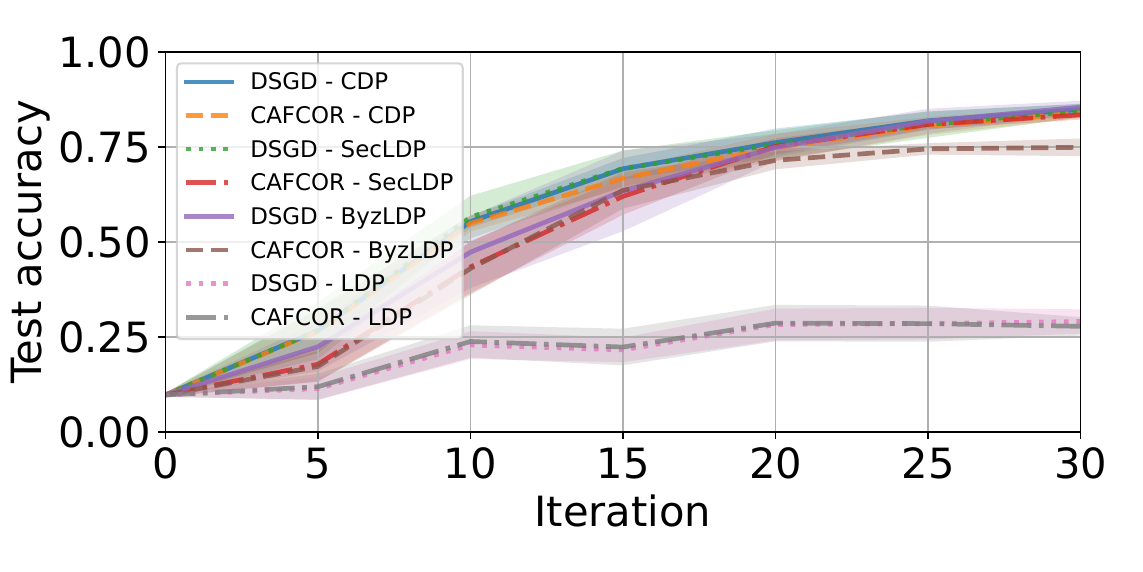}}
    \subfloat[SF]{\includegraphics[width=0.45\textwidth]{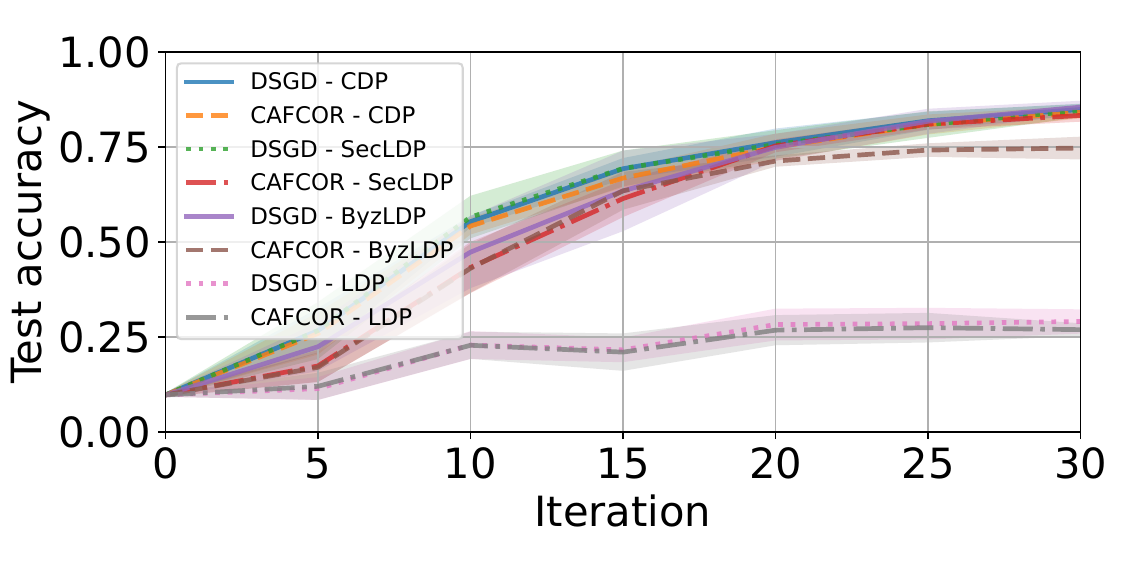}}\\
    \subfloat[FOE]{\includegraphics[width=0.45\textwidth]{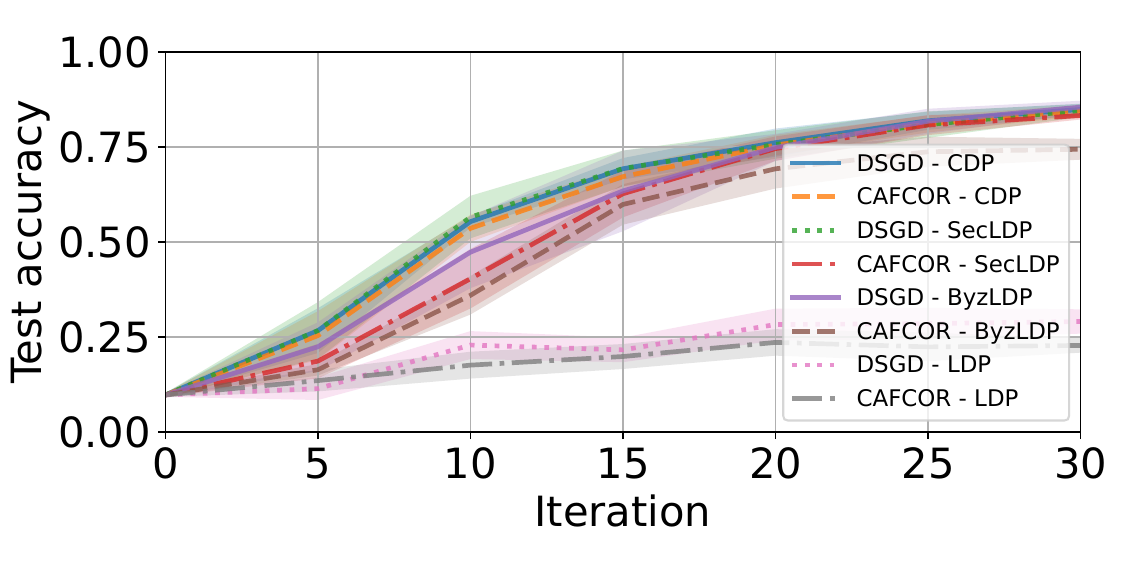}}
    \subfloat[ALIE]{\includegraphics[width=0.45\textwidth]{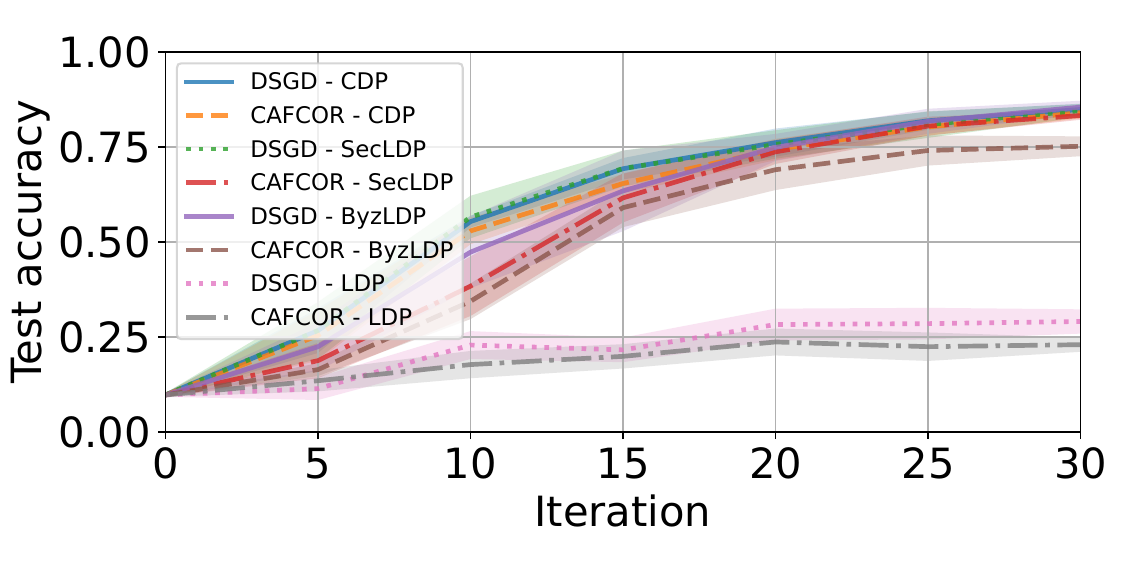}}
    \caption{Comparison of \algoname{} and DSGD under four privacy threat models on MNIST. There are $n = 100$ workers, including $f = 5$ malicious workers executing four attacks. A homogeneous data distribution is considered among honest workers. User-level DP is used and the aggregate privacy budget is $(\epsilon, \delta) = (27.8, 10^{-4})$.}
\label{fig_mnist_f=5_corr}
\end{figure*}

\begin{figure*}[ht]
    \centering
    \subfloat[LF]{\includegraphics[width=0.45\textwidth]{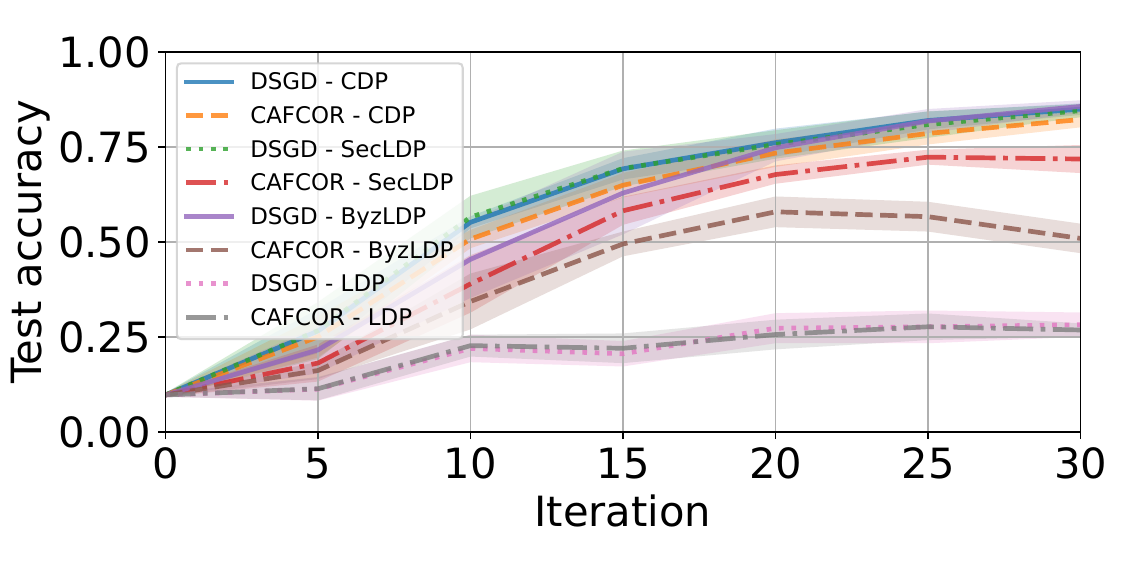}}
    \subfloat[SF]{\includegraphics[width=0.45\textwidth]{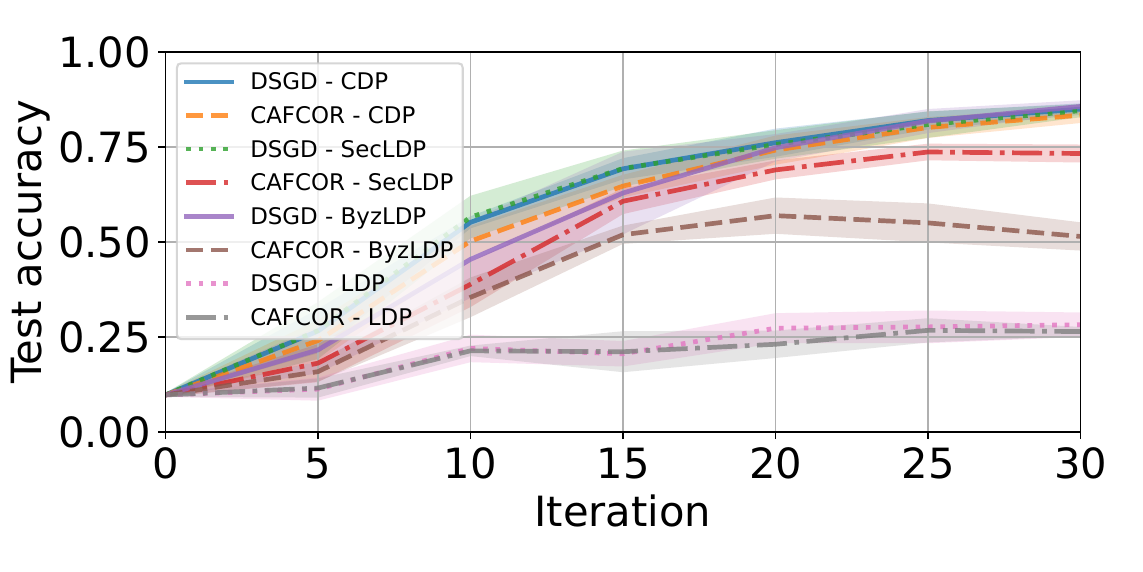}}\\
    \subfloat[FOE]{\includegraphics[width=0.45\textwidth]{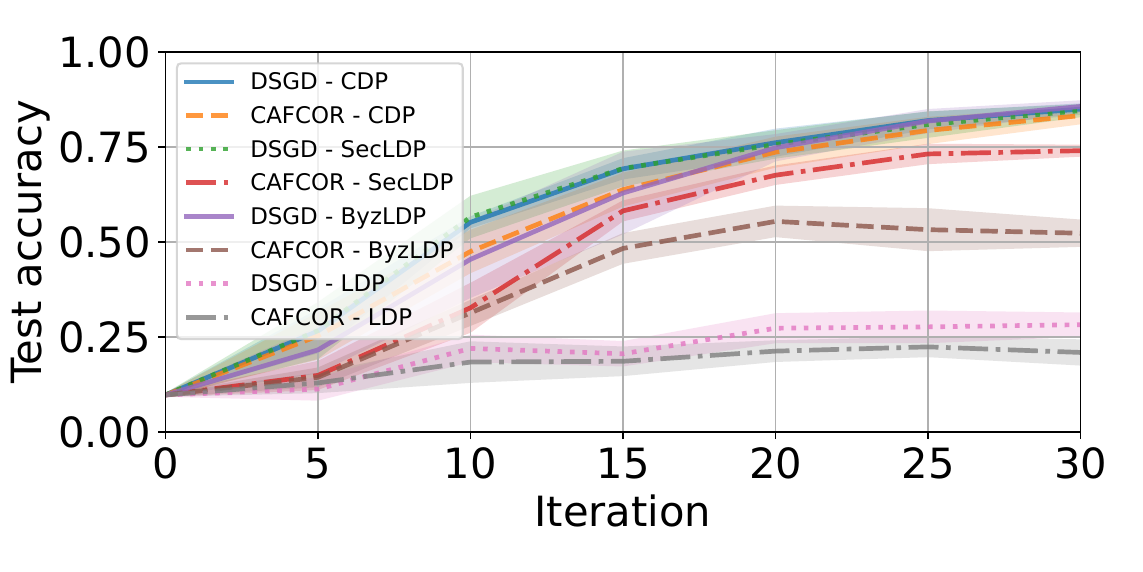}}
    \subfloat[ALIE]{\includegraphics[width=0.45\textwidth]{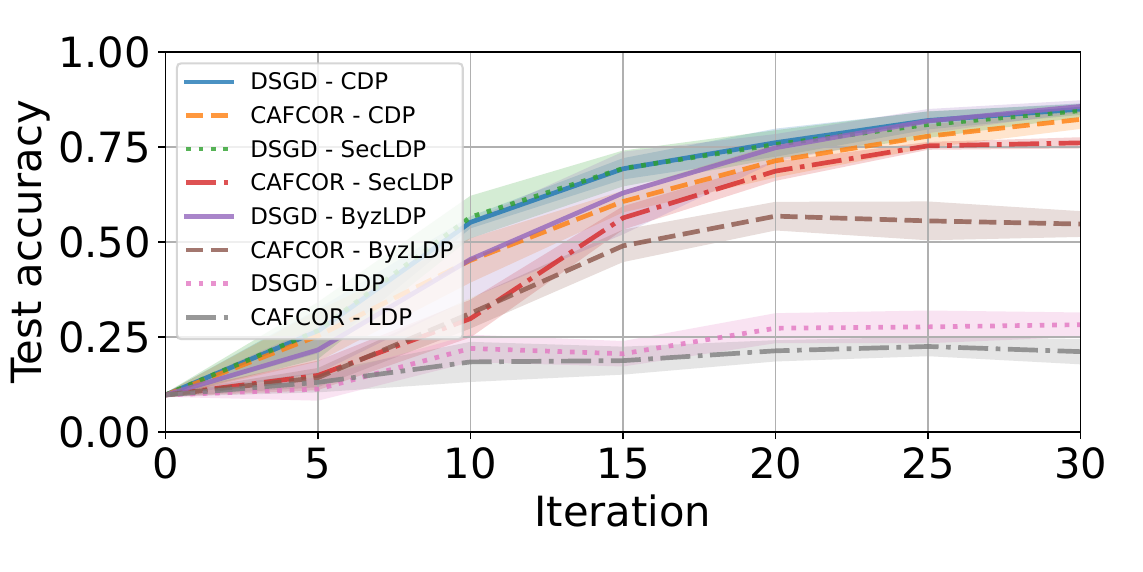}}
    \caption{Comparison of \algoname{} and DSGD under four privacy threat models on MNIST. There are $n = 100$ workers, including $f = 10$ malicious workers executing four attacks. A homogeneous data distribution is considered among honest workers. User-level DP is used and the aggregate privacy budget is $(\epsilon, \delta) = (26.4, 10^{-4})$.}
\label{fig_mnist_f=10_corr}
\end{figure*}

\begin{figure*}[ht]
    \centering
    \subfloat[LF]{\includegraphics[width=0.45\textwidth]{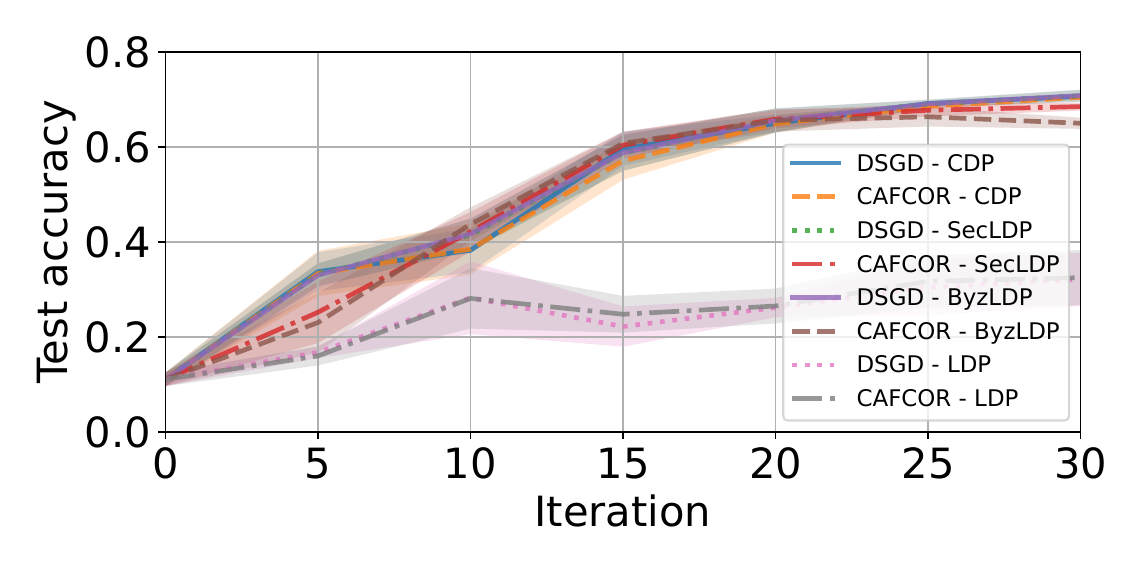}}
    \subfloat[SF]{\includegraphics[width=0.45\textwidth]{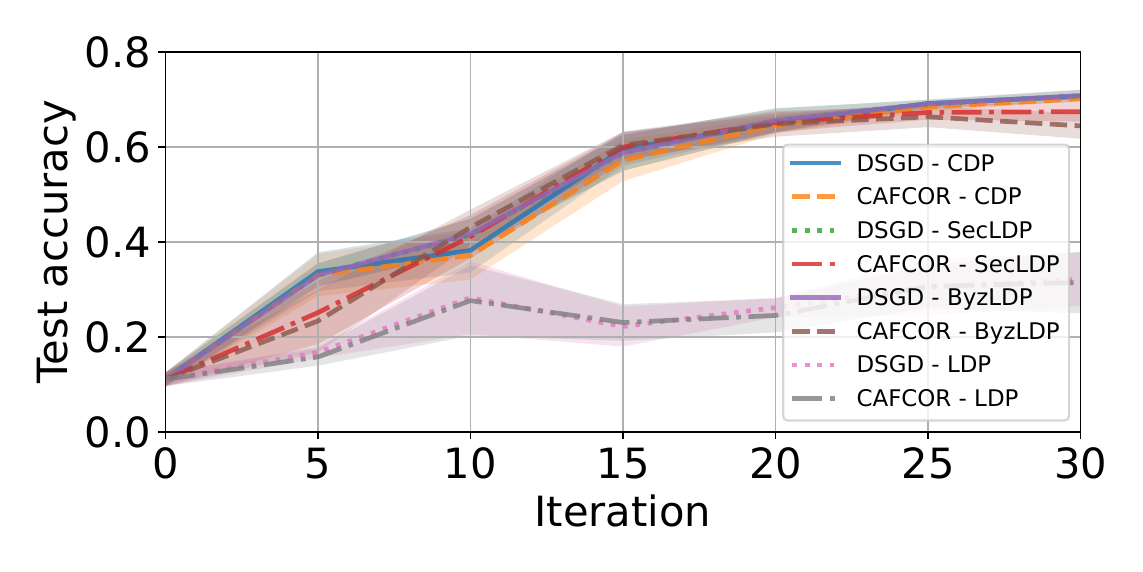}}\\
    \subfloat[FOE]{\includegraphics[width=0.45\textwidth]{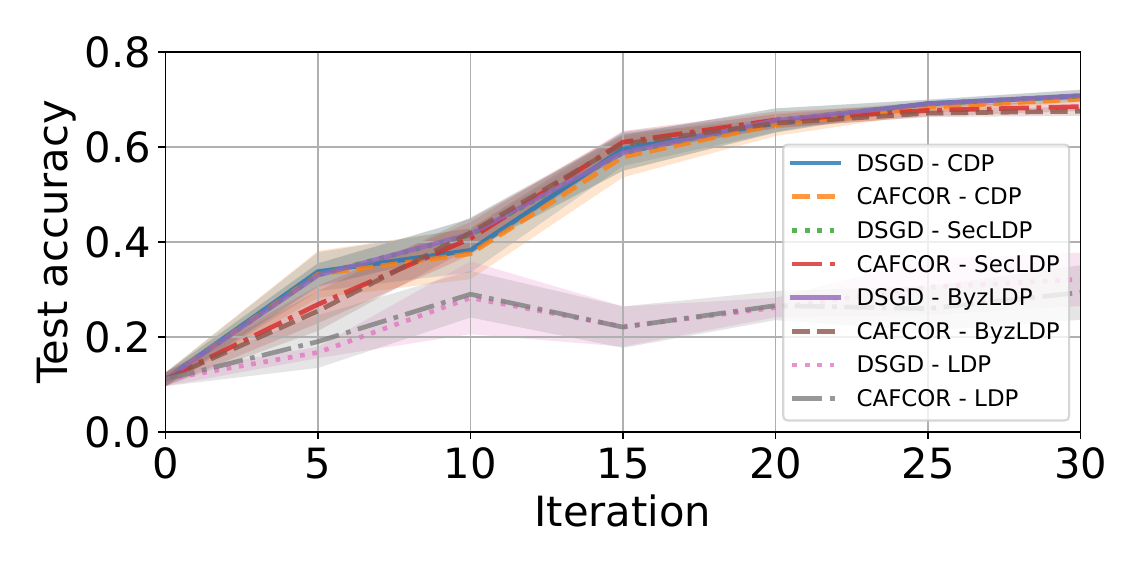}}
    \subfloat[ALIE]{\includegraphics[width=0.45\textwidth]{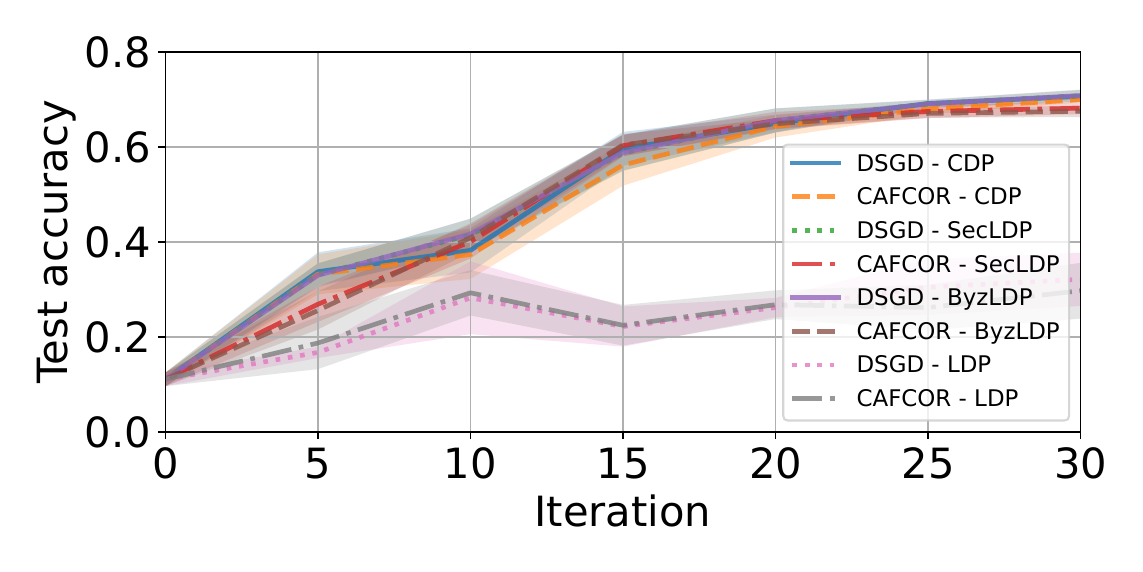}}
    \caption{Comparison of \algoname{} and DSGD under four privacy threat models on Fashion-MNIST. There are $n = 100$ workers, including $f = 5$ malicious workers executing four attacks. A homogeneous data distribution is considered among honest workers. User-level DP is used and the aggregate privacy budget is $(\epsilon, \delta) = (39.6, 10^{-4})$.}
\label{fig_fashionmnist_f=5_corr}
\end{figure*}

\begin{figure*}[ht]
    \centering
    \subfloat[LF]{\includegraphics[width=0.45\textwidth]{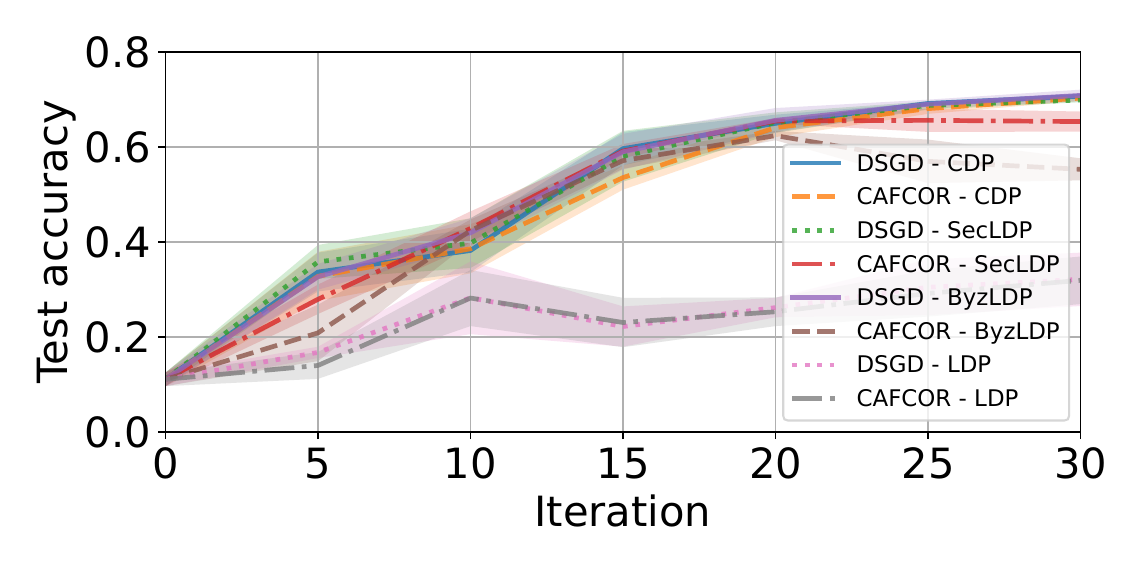}}
    \subfloat[SF]{\includegraphics[width=0.45\textwidth]{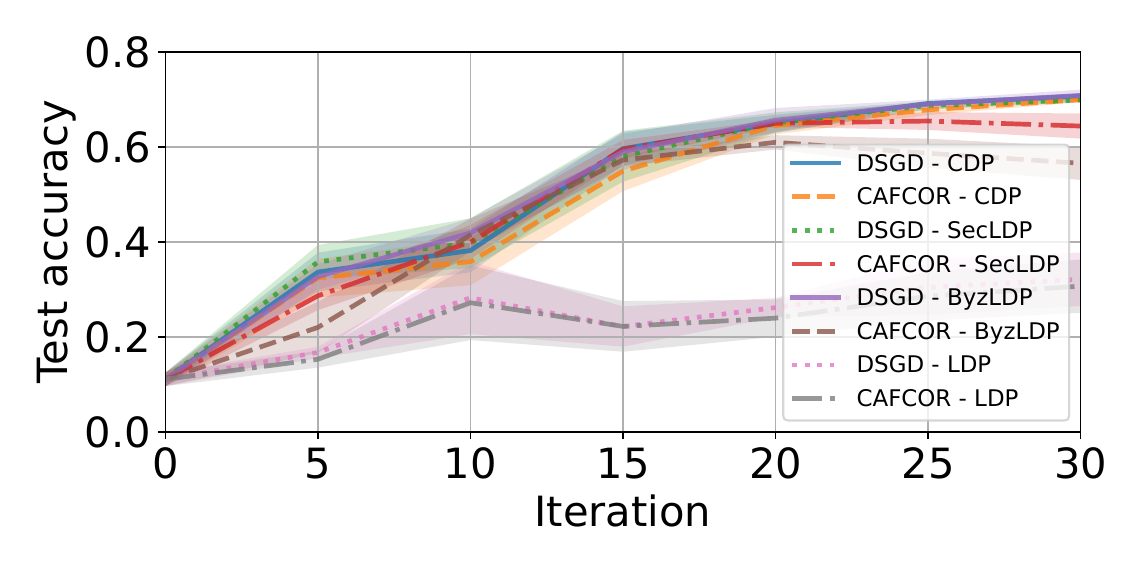}}\\
    \subfloat[FOE]{\includegraphics[width=0.45\textwidth]{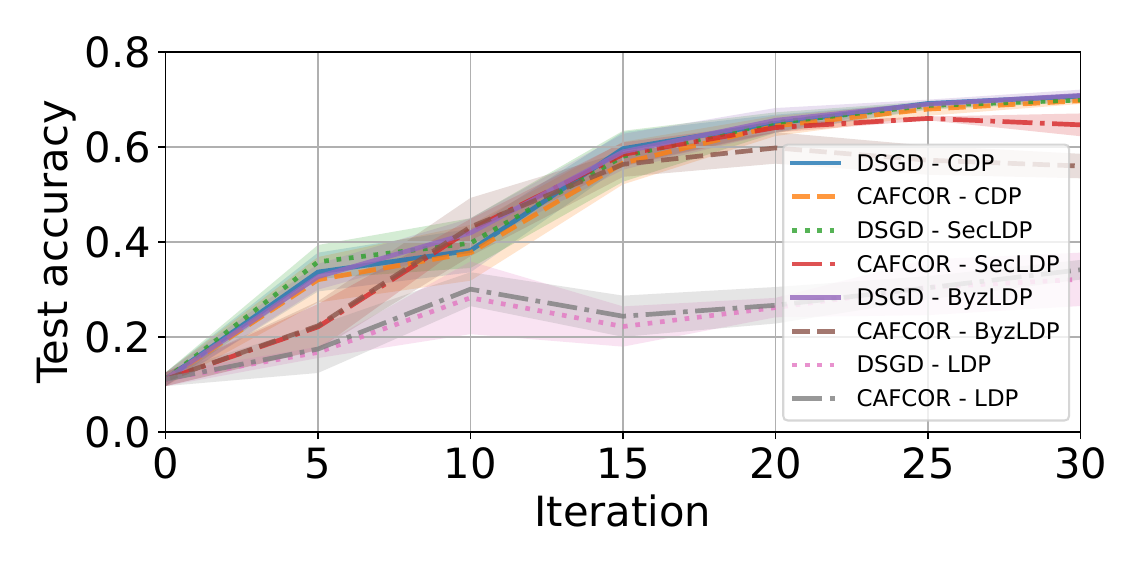}}
    \subfloat[ALIE]{\includegraphics[width=0.45\textwidth]{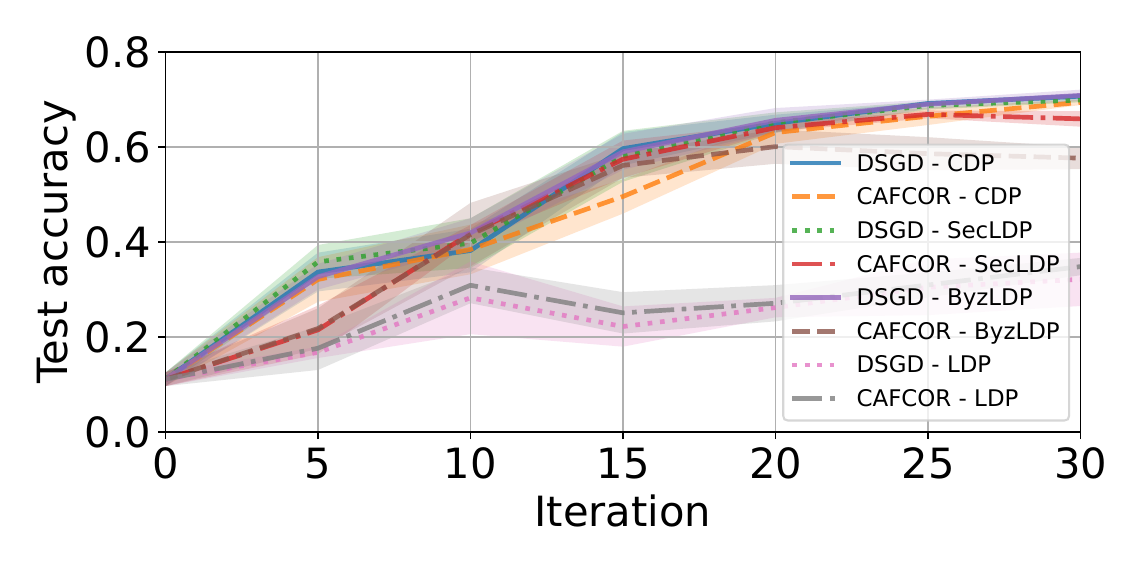}}
    \caption{Comparison of \algoname{} and DSGD under four privacy threat models on Fashion-MNIST. There are $n = 100$ workers, including $f = 10$ malicious workers executing four attacks. A homogeneous data distribution is considered among honest workers. User-level DP is used and the aggregate privacy budget is $(\epsilon, \delta) = (39.6, 10^{-4})$.}
\label{fig_fashionmnist_f=10_corr}
\end{figure*}

\clearpage
\subsection{CAF vs. Robust Aggregators}
\label{app_exp_results_caf}

\begin{figure*}[ht!]
    \centering
    \subfloat[LF]{\includegraphics[width=0.45\textwidth]{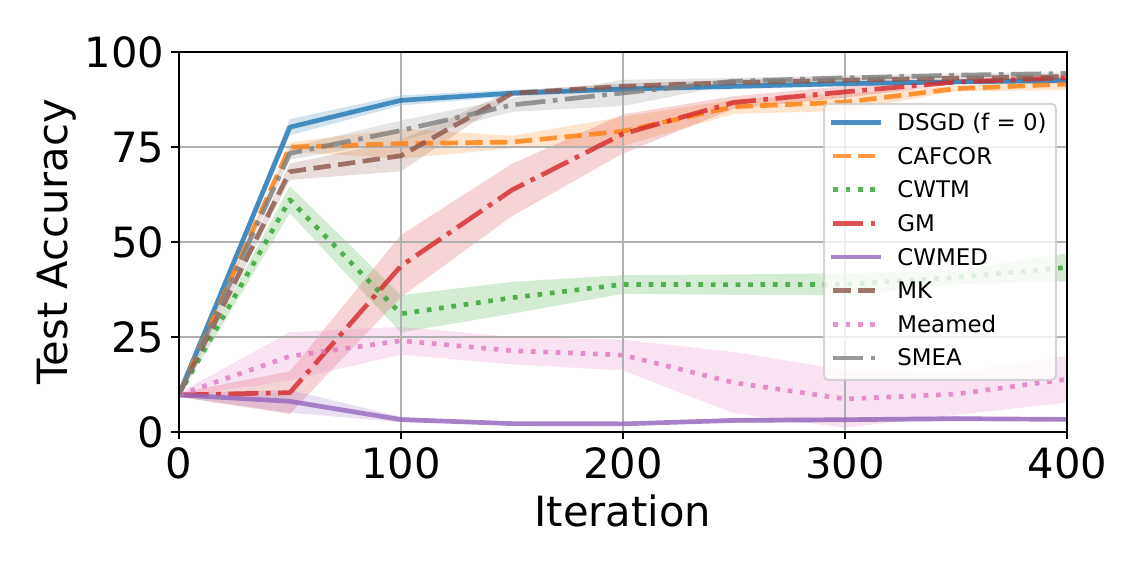}}
    \subfloat[SF]{\includegraphics[width=0.45\textwidth]{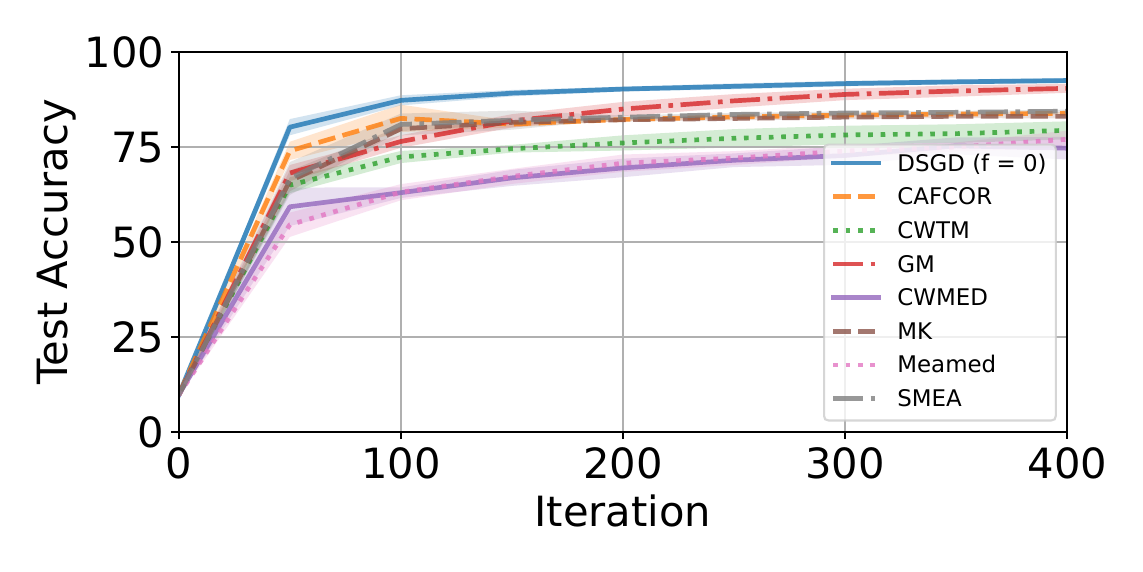}}\\
    \subfloat[FOE]{\includegraphics[width=0.45\textwidth]{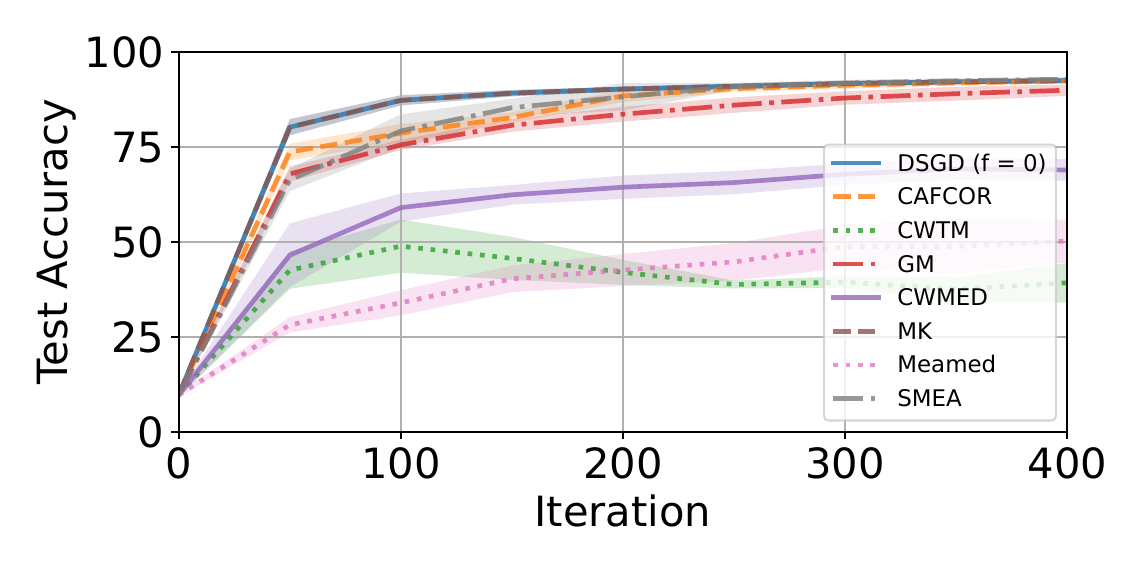}}
    \subfloat[ALIE]{\includegraphics[width=0.45\textwidth]{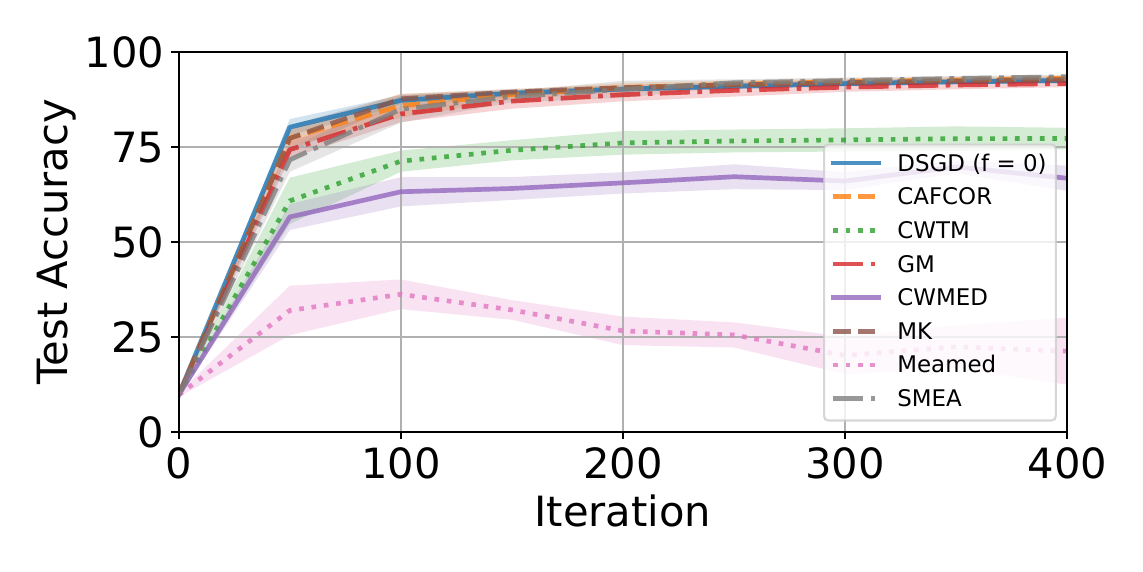}}
    \caption{Performance of \algoname{} versus standard robust algorithms on MNIST.
    There is $f = 1$ malicious worker among $n = 11$ workers, and we consider extreme heterogeneity among honest workers.
    The CDP privacy model is considered under example-level DP, and the privacy budget is $(\varepsilon, \delta) = (13.7, 10^{-4})$ throughout.}
\label{fig_mnist_f=1}
\end{figure*}

\begin{figure*}[ht!]
    \centering
    \subfloat[LF]{\includegraphics[width=0.45\textwidth]{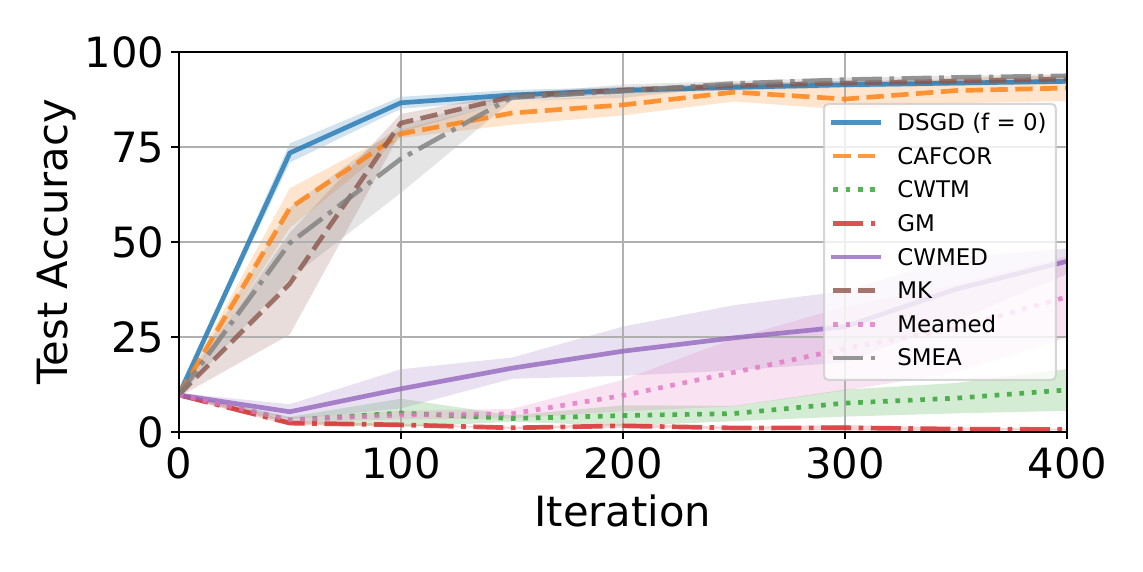}}
    \subfloat[SF]{\includegraphics[width=0.45\textwidth]{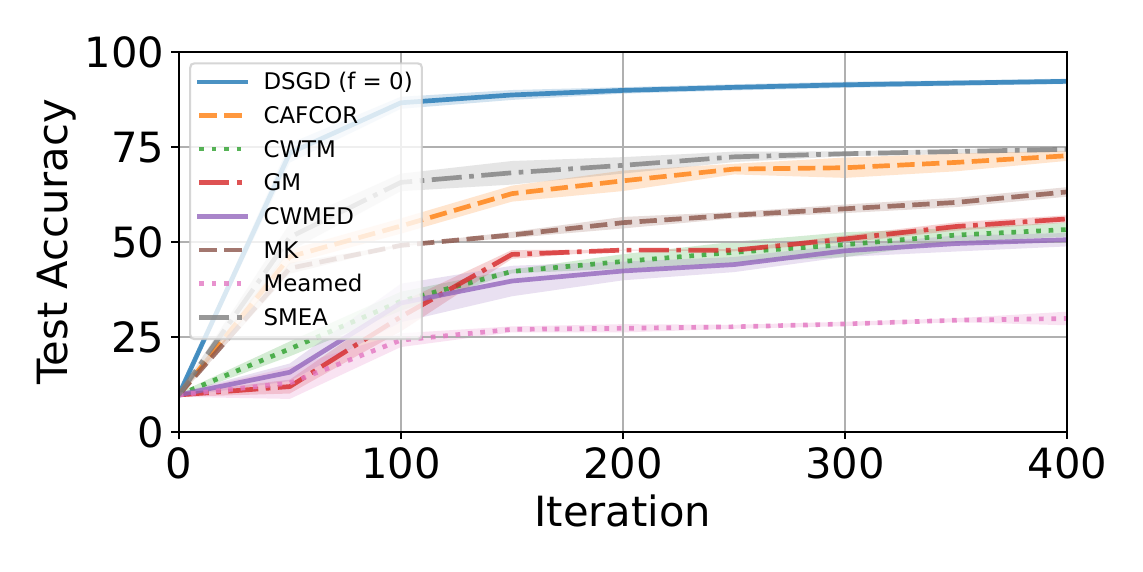}}\\
    \subfloat[FOE]{\includegraphics[width=0.45\textwidth]{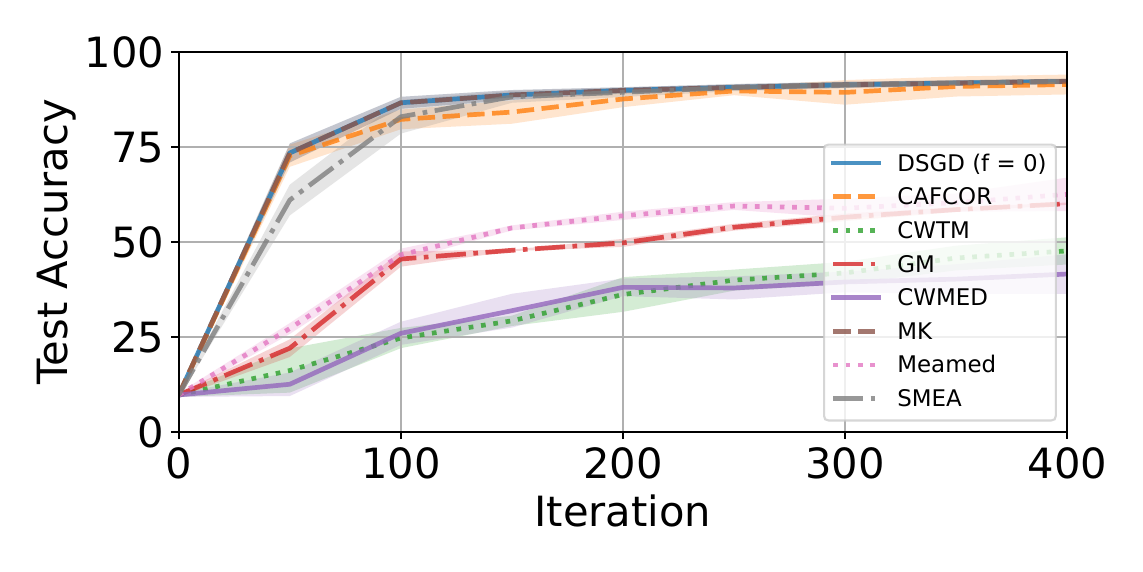}}
    \subfloat[ALIE]{\includegraphics[width=0.45\textwidth]{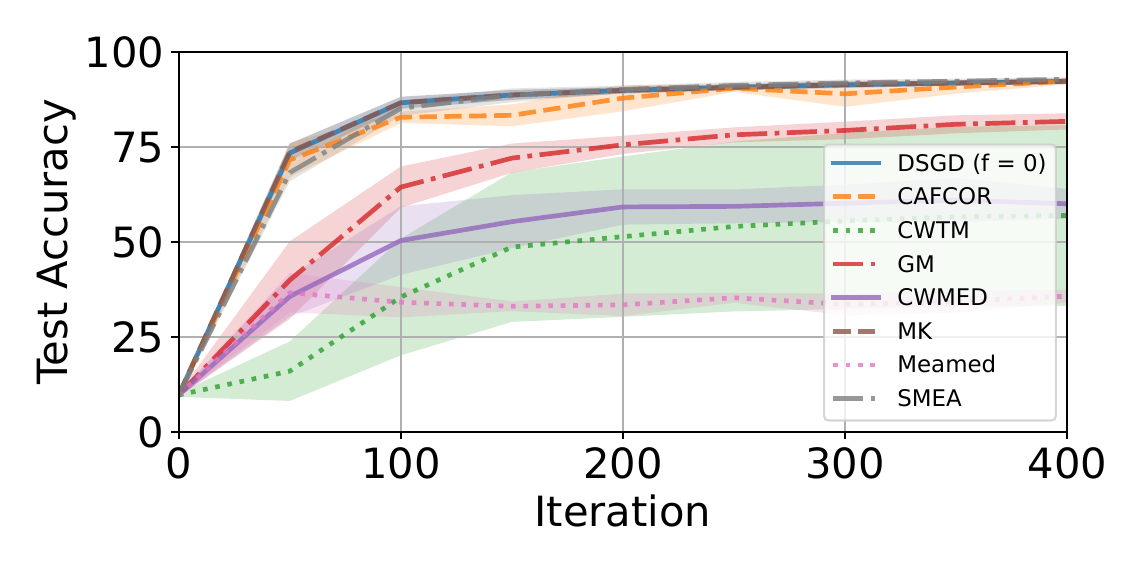}}
    \caption{Performance of \algoname{} versus standard robust algorithms on MNIST.
    There are $f = 3$ malicious workers among $n = 13$ and $\alpha = 0.1$.
    The CDP privacy model is considered under example-level DP, and the privacy budget is $(\varepsilon, \delta) = (13.7, 10^{-4})$ throughout.}
\label{fig_mnist_f=3}
\end{figure*}

\begin{figure*}[ht!]
    \centering
    \subfloat[LF]{\includegraphics[width=0.45\textwidth]{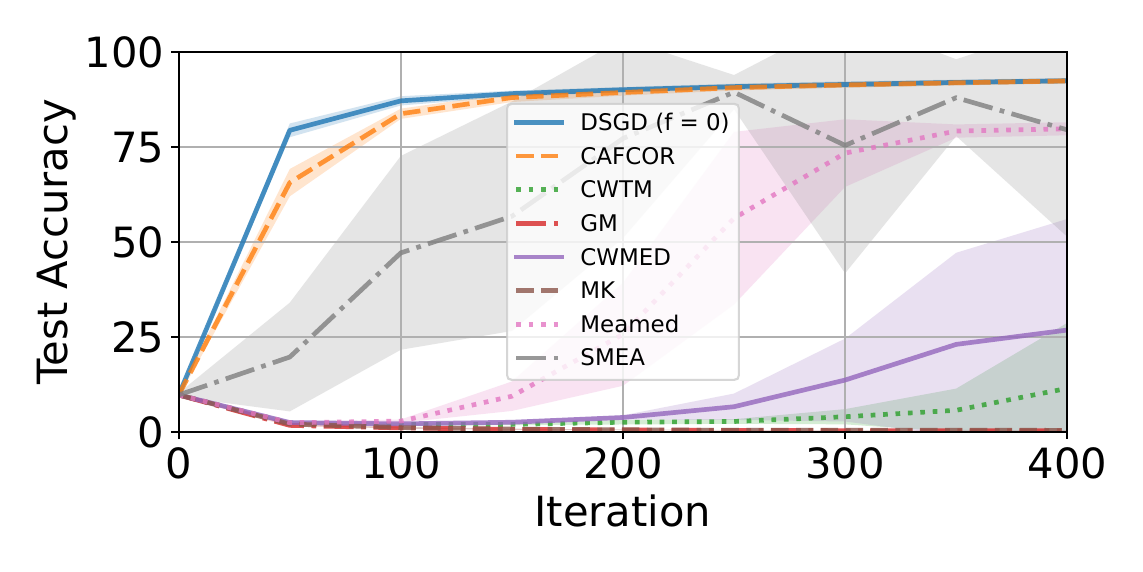}}
    \subfloat[SF]{\includegraphics[width=0.45\textwidth]{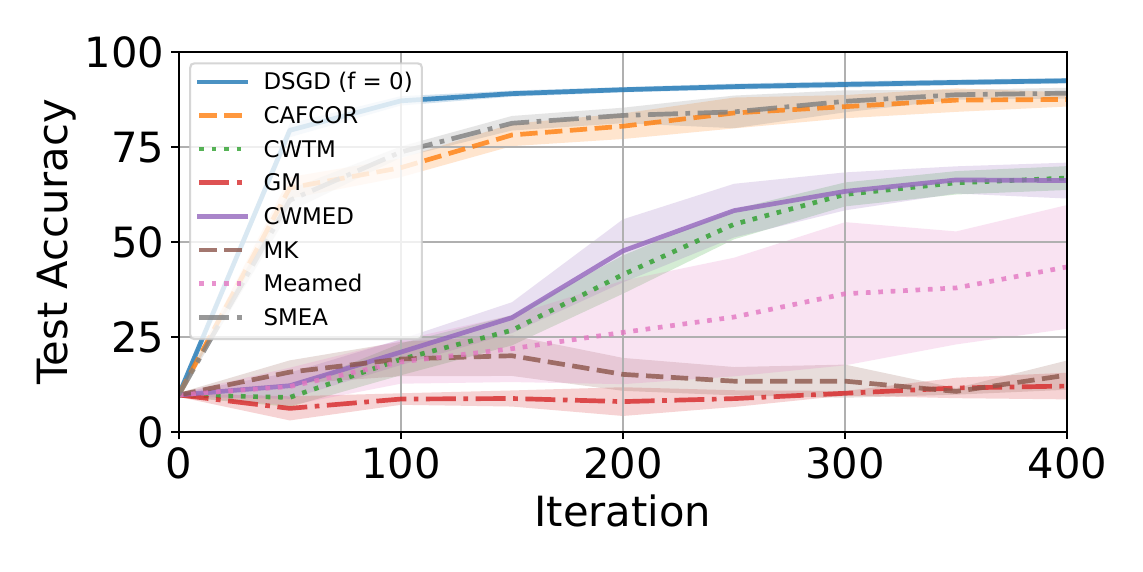}}\\
    \subfloat[FOE]{\includegraphics[width=0.45\textwidth]{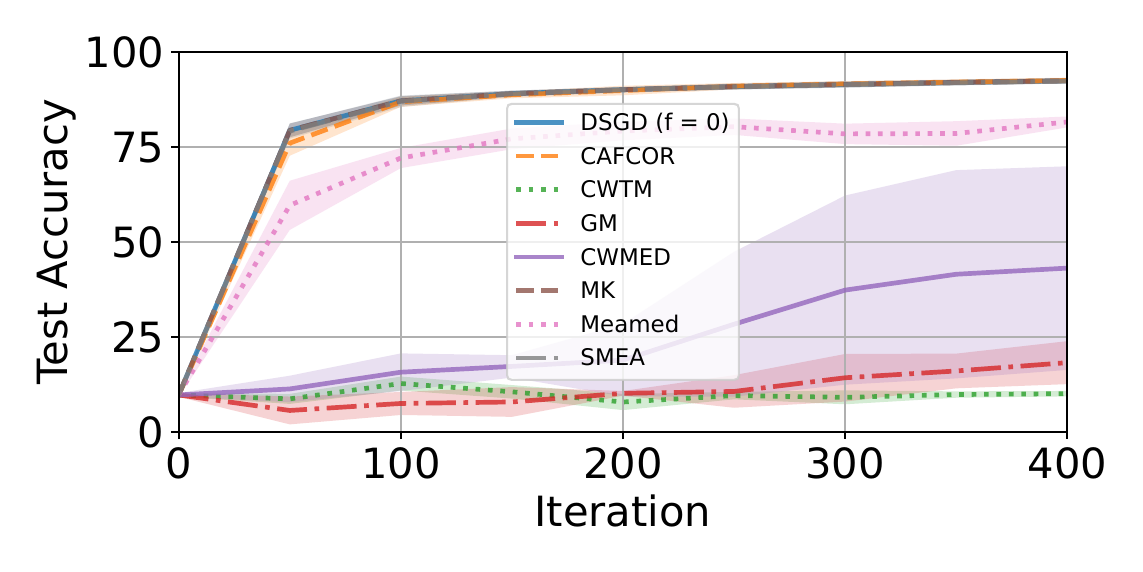}}
    \subfloat[ALIE]{\includegraphics[width=0.45\textwidth]{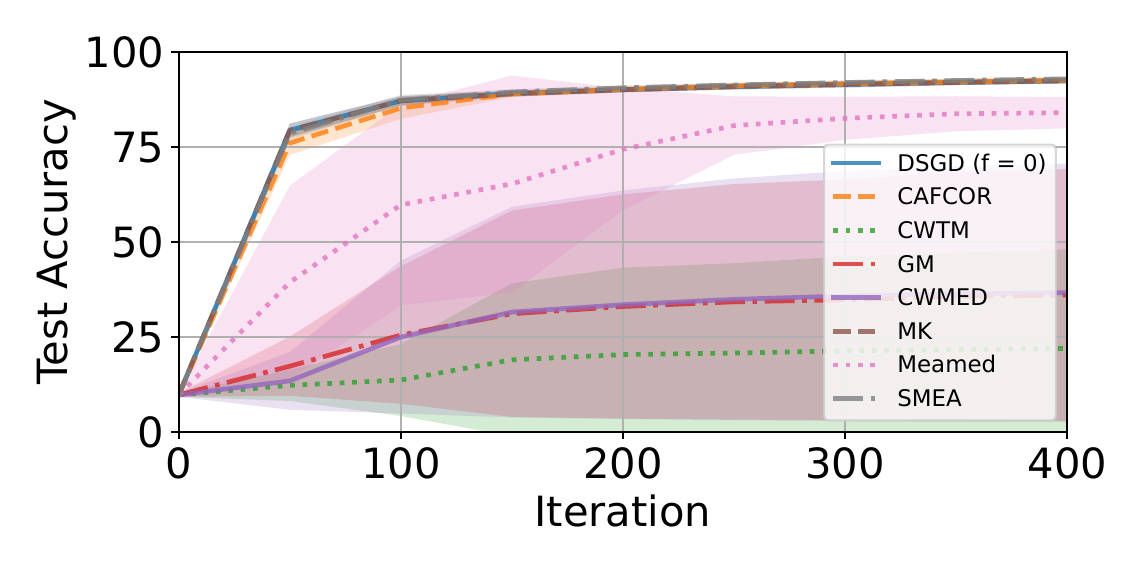}}
    \caption{Performance of \algoname{} versus standard robust algorithms on MNIST.
    There are $f = 5$ malicious workers among $n = 15$ and $\alpha = 1$.
    The CDP privacy model is considered under example-level DP, and the privacy budget is $(\varepsilon, \delta) = (13.7, 10^{-4})$ throughout.}
\label{fig_mnist_f=5}
\end{figure*}

\paragraph{Comparison with SMEA~\citep{allouah2023privacy}}
We also included in our experiments a comparison with the SMEA robust aggregation method on the MNIST dataset (see Figures~\ref{fig_mnist_f=1}, \ref{fig_mnist_f=3}, and~\ref{fig_mnist_f=5}).
These results show that SMEA matches CAF's accuracy, significantly outperforming other standard robust methods, and even approaching the averaging baseline. This observation aligns with our theoretical findings (see Proposition~\ref{prop:filter_adapt} and subsequent discussion).

However, we emphasize that SMEA incurs a substantial computational cost, which further supports the empirical advantage of CAF, despite the two methods offering similarly strong robustness in practice.
To highlight SMEA's prohibitive computational complexity, we report runtime ratios relative to simple averaging (lower is better), with $n=30$ workers and $f=3$ malicious workers, on models of varying dimension $d$.

\begin{table}[ht!]
\centering
\begin{tabular}{ccccc}
\textbf{Dimension ($d$)} & \textbf{SMEA} & \textbf{CAF} & \textbf{Meamed} & \textbf{GM} \\
\hline
$0.25 \times 10^7$ & 30,251 & \textbf{28}  & 112 & 62 \\
$0.5 \times 10^7$   & 61,142 & \textbf{51}  & 197 & 78 \\
$1 \times 10^7$            & 117,255 & \textbf{100} & 378 & 126 \\
\end{tabular}
\caption{Computation time (relative to simple averaging) across different dimensions for various aggregation methods.}
\label{tab_comp_cost}
\end{table}

As seen in Table~\ref{tab_comp_cost}, CAF dramatically outperforms SMEA in terms of runtime, and also maintains a clear advantage over other robust aggregation baselines such as Meamed and GM. Combined with CAF's superior performance on test accuracy, these results underscore \algoname{}'s practical relevance for trustworthy federated learning.

\clearpage

\begin{figure*}[ht!]
    \centering
    \subfloat[LF]{\includegraphics[width=0.45\textwidth]{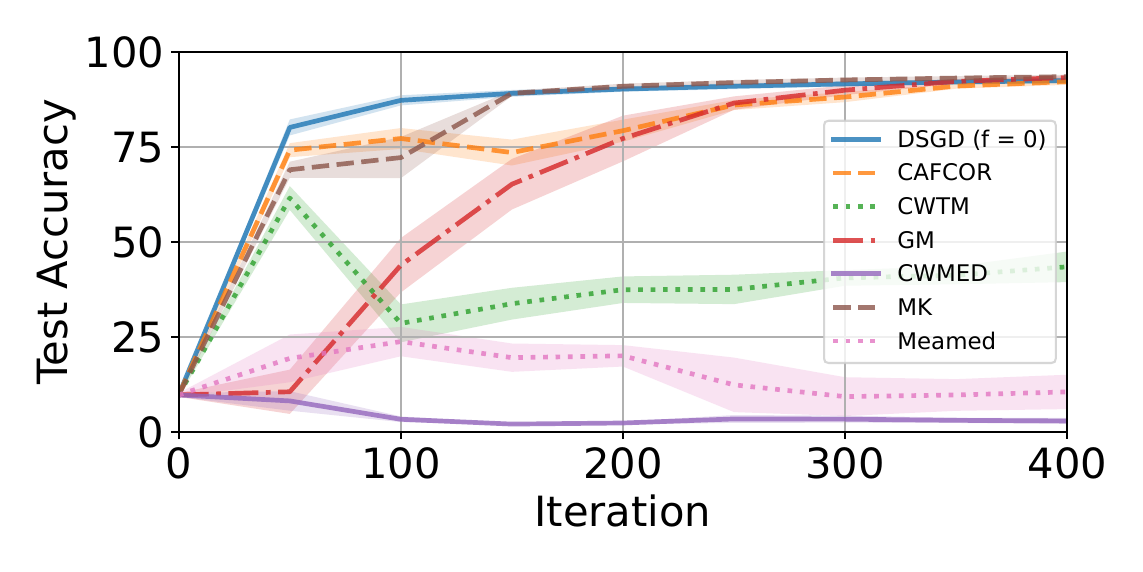}}
    \subfloat[SF]{\includegraphics[width=0.45\textwidth]{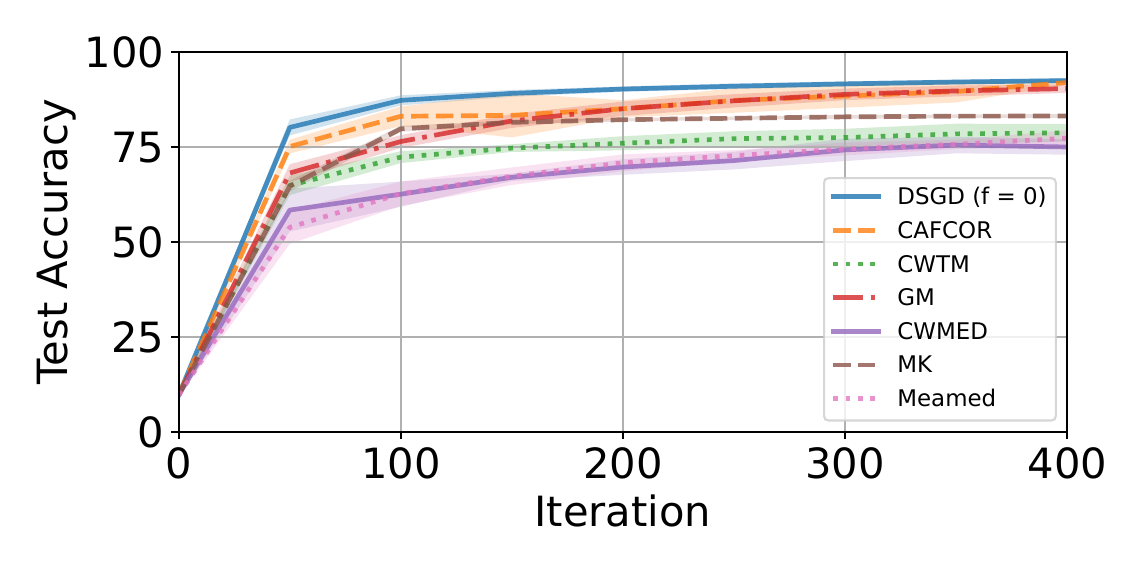}}\\
    \subfloat[FOE]{\includegraphics[width=0.45\textwidth]{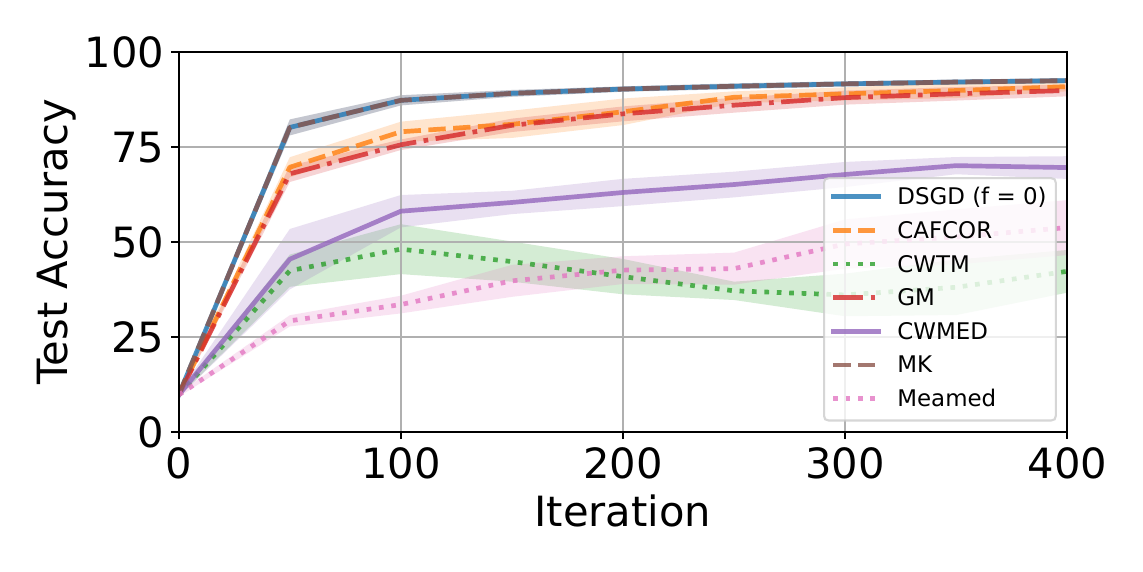}}
    \subfloat[ALIE]{\includegraphics[width=0.45\textwidth]{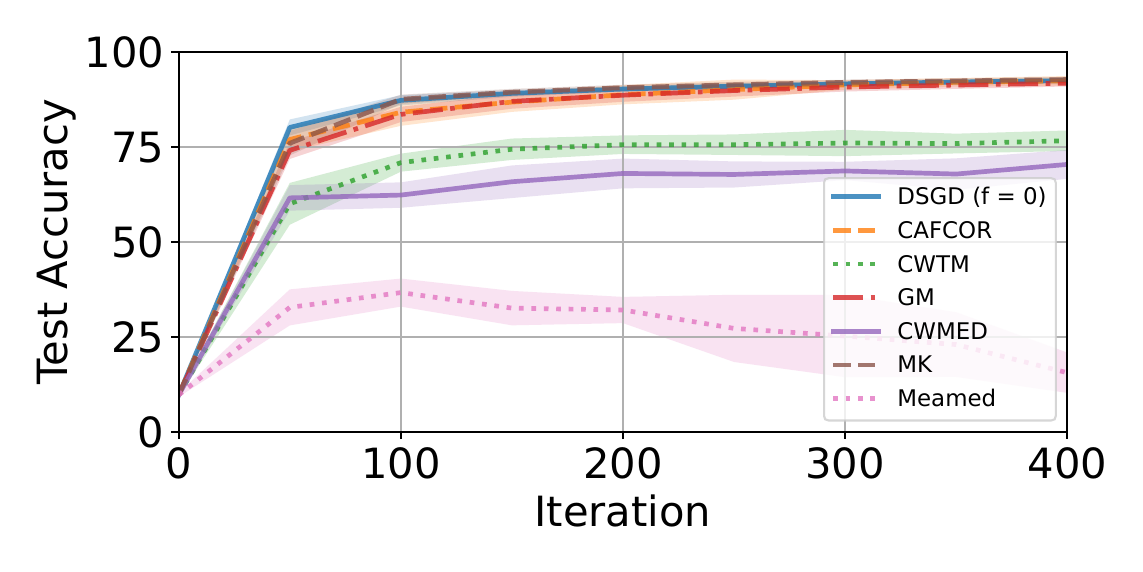}}
    \caption{Performance of \algoname{} versus standard robust algorithms on Fashion-MNIST.
    There are $f = 1$ malicious worker among $n = 11$, and we consider extreme heterogeneity among honest workers.
    The CDP privacy model is considered under example-level DP, and the privacy budget is $(\varepsilon, \delta) = (13.7, 10^{-4})$ throughout.}
\label{fig_fashionmnist_f=1}
\end{figure*}

\begin{figure*}[ht!]
    \centering
    \subfloat[LF]{\includegraphics[width=0.45\textwidth]{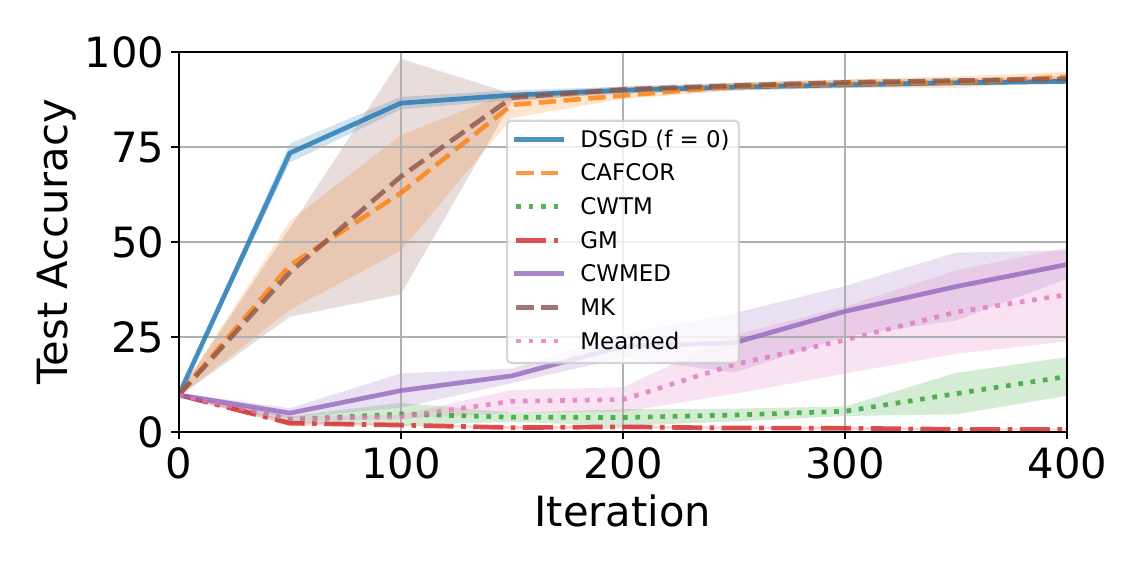}}
    \subfloat[SF]{\includegraphics[width=0.45\textwidth]{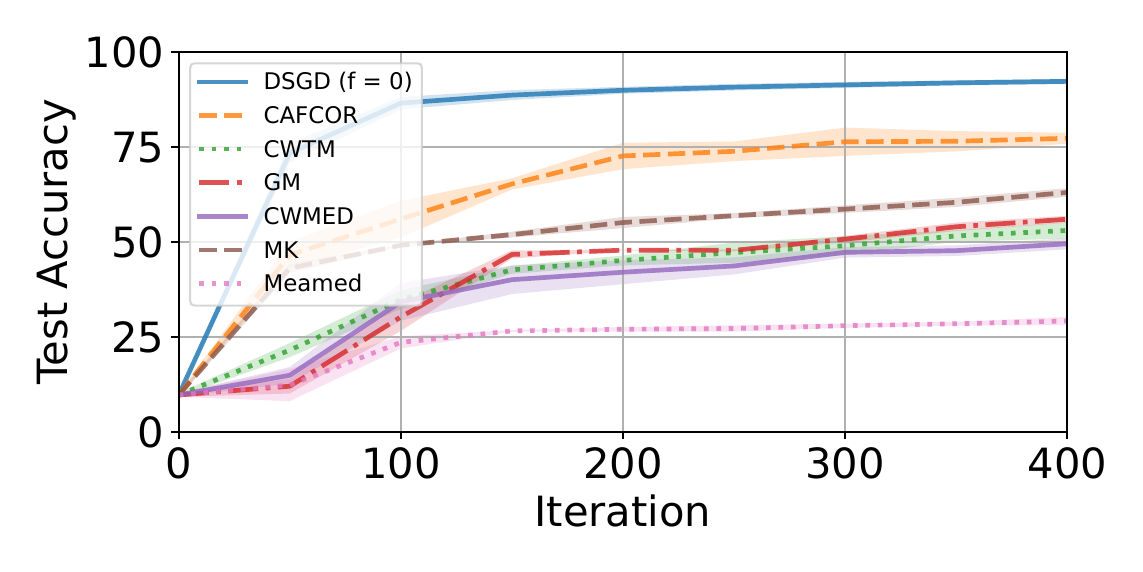}}\\
    \subfloat[FOE]{\includegraphics[width=0.45\textwidth]{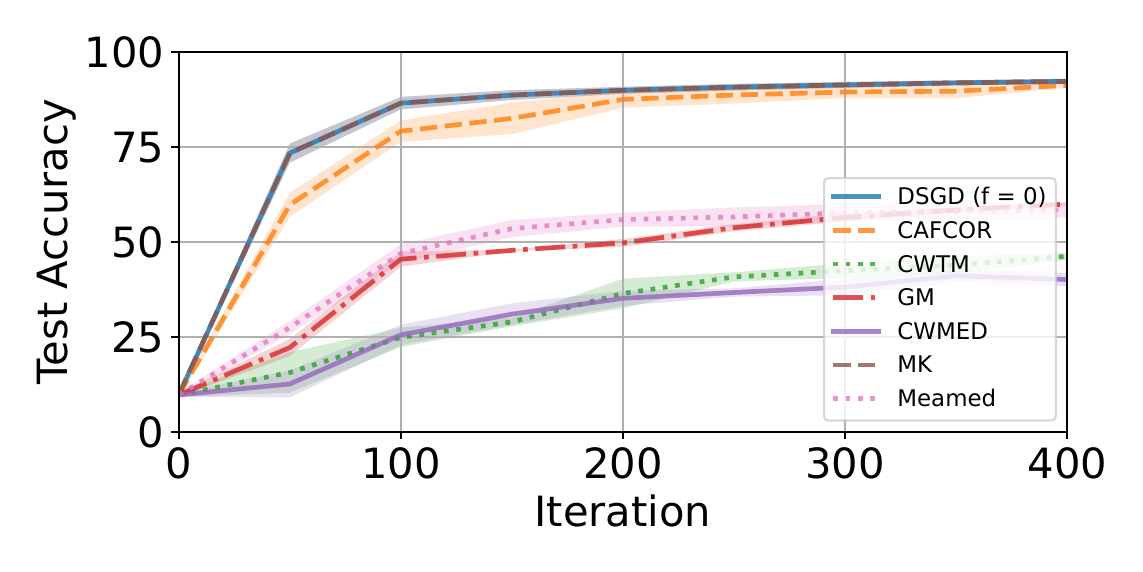}}
    \subfloat[ALIE]{\includegraphics[width=0.45\textwidth]{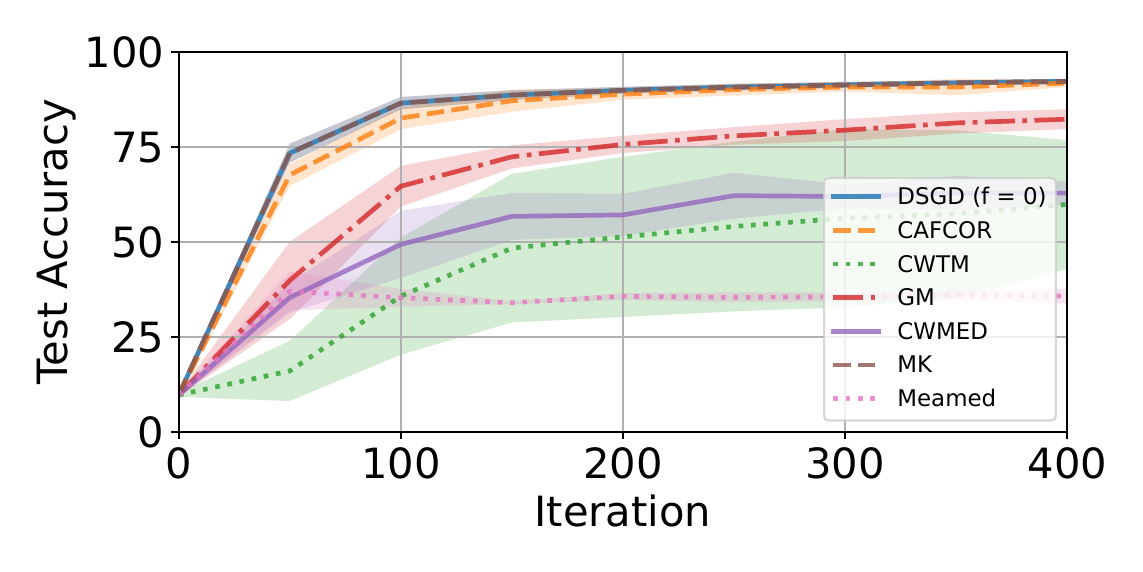}}
    \caption{Performance of \algoname{} versus standard robust algorithms on Fashion-MNIST.
    There are $f = 3$ malicious workers among $n = 13$ and $\alpha = 0.1$.
    The CDP privacy model is considered under example-level DP, and the privacy budget is $(\varepsilon, \delta) = (13.7, 10^{-4})$ throughout.}
\label{fig_fashionmnist_f=3}
\end{figure*}

\begin{figure*}[ht!]
    \centering
    \subfloat[LF]{\includegraphics[width=0.45\textwidth]{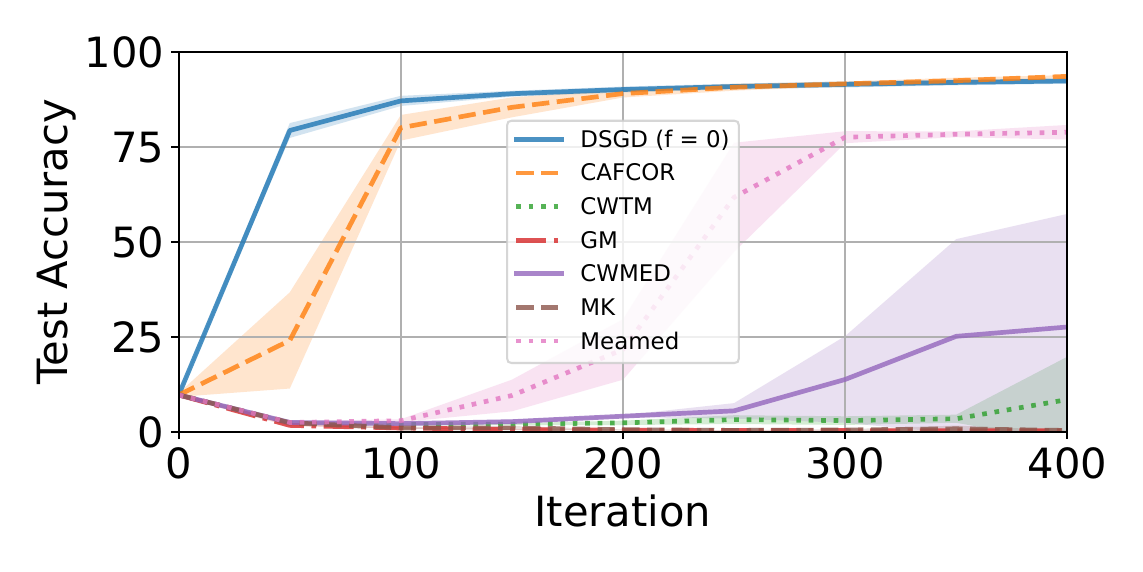}}
    \subfloat[SF]{\includegraphics[width=0.45\textwidth]{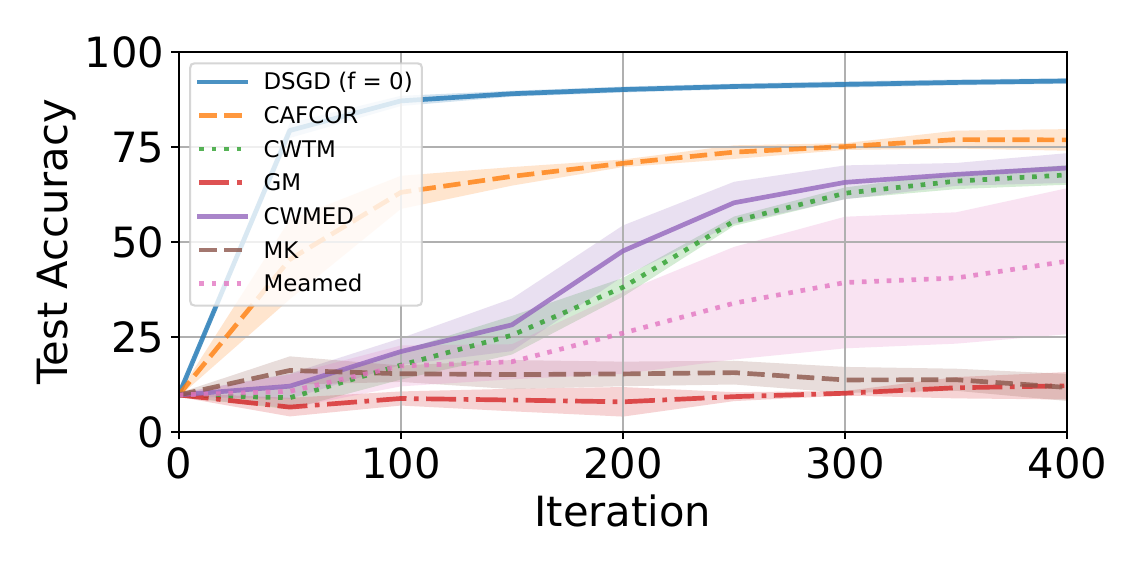}}\\
    \subfloat[FOE]{\includegraphics[width=0.45\textwidth]{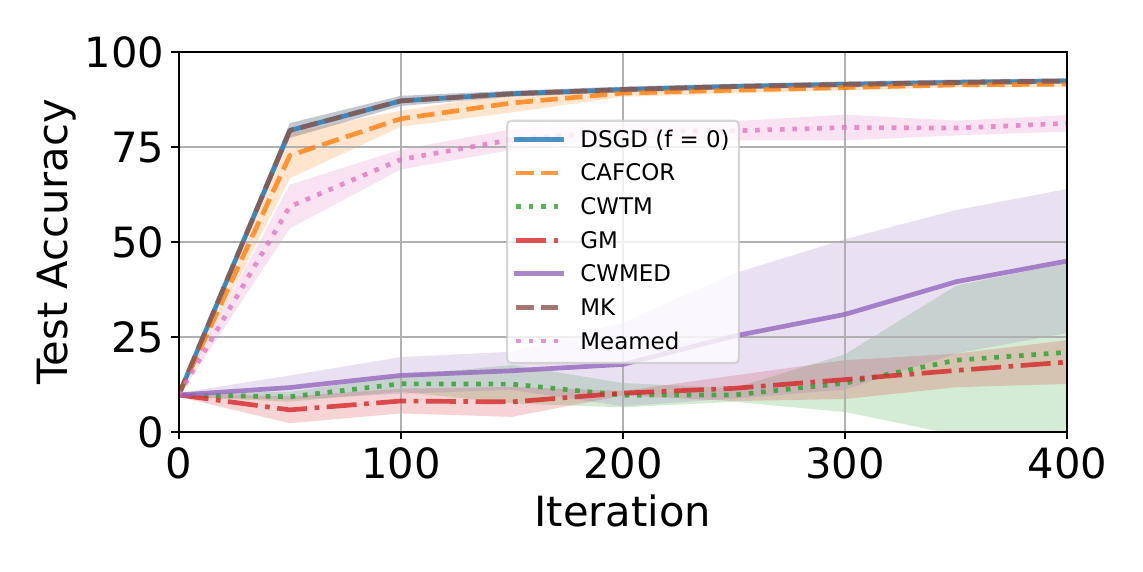}}
    \subfloat[ALIE]{\includegraphics[width=0.45\textwidth]{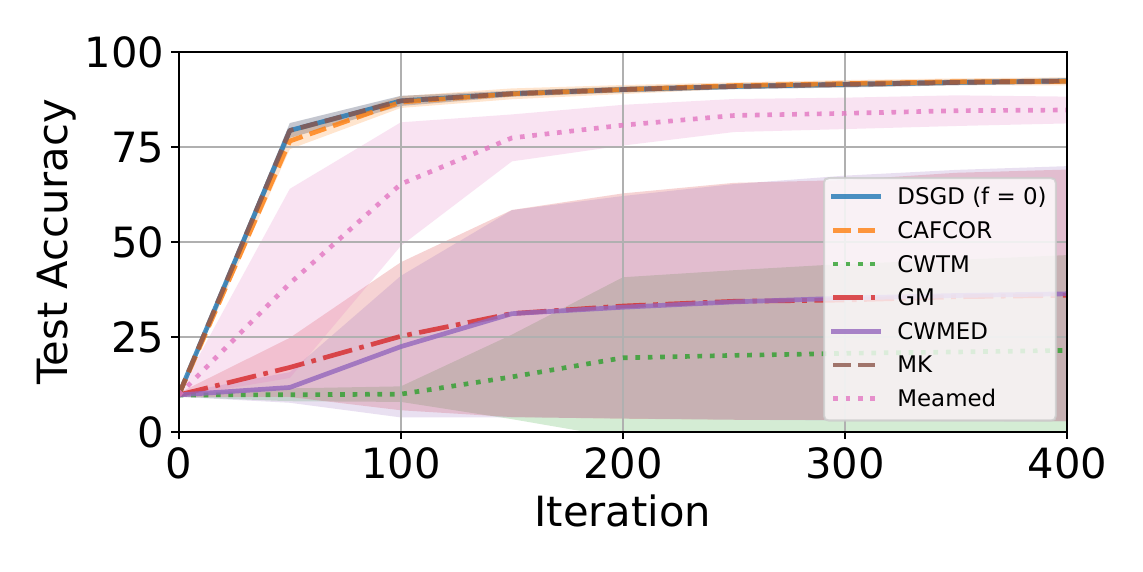}}
    \caption{Performance of \algoname{} versus standard robust algorithms on Fashion-MNIST.
    There are $f = 5$ malicious workers among $n = 15$ and $\alpha = 1$.
    The CDP privacy model is considered under example-level DP, and the privacy budget is $(\varepsilon, \delta) = (13.7, 10^{-4})$ throughout.}
\label{fig_fashionmnist_f=5}
\end{figure*}

\end{document}